\numberwithin{equation}{section}
\newtheorem{theorem}{Theorem}[section]
\newtheorem{definition}[theorem]{Definition}
\newtheorem{proposition}[theorem]{Proposition}
\newtheorem{corollary}[theorem]{Corollary}
\newtheorem{lemma}[theorem]{Lemma}
\newtheorem{remark}[theorem]{Remark}
\newtheorem{example}[theorem]{Example}
\newcommand{\Var}{{\mathop{\mathrm{Var}}\nolimits}}
\newcommand{\B}{\mathbb{B}}
\newcommand{\C}{\mathbb{C}}
\newcommand{\N}{\mathbb{N}}
\newcommand{\R}{\mathbb{R}}
\renewcommand\P{\mathbb{P}}
\newcommand{\E}{\mathbb{E}}
\def\R{\mathbb{R}}
\def\C{\mathbb{C}}
\def\N{\mathbb{N}}
\def\E{\mathbb{E}}
\def\P{\mathbb{P}}
\def\var{\mathrm{Var}}
\def\d{\mathrm{d}}
\def\det{\mathrm{Det}}
\def\op{\mathrm{op}}
\def\cF{\mathcal{F}}
\def\cS{\mathcal{S}}
\def\op{{\mathrm{op}}}
\def\iid{{\mathrm{iid}}}
\def\poi{{\mathrm{Poi}}}
\def\dpp{{\mathrm{DPP}}}
\def\d{\mathrm{d}}
\def\span{\mathrm{span}}
\def\bK{\mathbf{K}}
\def\bL{\mathbf{L}}
\def\bI{\mathbf{I}}
\def\Cov{\mathrm{Cov}}
\def\re{\mathrm{Re}}
\def\im{\mathrm{Im}}
\newcommand{\addresseshere}{%
  \enddoc@text\let\enddoc@text\relax
}
\renewcommand*{\@cite@ofmt}{\hbox}
\begin{document}


\title{Negative Dependence as a toolbox for machine learning : review and new developments }
\author{
        H.S. Tran{$^{1,\dag}$}
        \and
        V. Petrovic{$^{2}$}    
        \and
	R. Bardenet{$^{2}$}
	\and
	S. Ghosh{$^{1}$}        
}

\date{}
\maketitle

\vspace{20pt}

\begin{center}
\textit{Dedicated to the memory of Prof K.R. Parthasarathy : \\  visionary, guru, and scientist par excellence}
\end{center}

\vspace{50pt}

\begin{abstract}
Negative dependence, where a stochastic system promotes diversity among its variables, is increasingly becoming a key driver in advancing learning capabilities beyond the limits of traditional independence. Recent developments have evidenced support towards negatively dependent systems as a learning paradigm in a broad range of fundamental machine learning challenges—including optimization, sampling, dimensionality reduction and sparse signal recovery, often surpassing the performance  of current state-of-the-art methods based on statistical independence. 
The most popular negatively dependent stochastic model in learning has been that of determinantal point processes (\textit{abbrv.} DPPs), which have their origins in the study of quantum theory and Fermionic systems. However, other models, such as perturbed lattice models, general strongly Rayleigh measures, zeros of random functions have steadily gained salience in various learning applications.

In this article, we undertake a review of this burgeoning field of research, as it has developed over the past two decades or so. Additionally, we present new results on applications of DPPs to the parsimonious representation of neural networks, a problem of significant contemporary interest. In the limited scope of the present article, we mostly focus on aspects of this area to which the authors contributed over the recent years, including applications to Monte Carlo methods, coresets and stochastic gradient descent (SGD), feature selection, stochastic networks, signal processing via spectrogram zeros and connections to quantum computation. However, starting from basics of negative dependence for the uninitiated reader, extensive references are provided to a broad swath of related developments which could not be covered within our limited scope. While existing works and reviews generally focus on specific negatively dependent models (such as DPPs), a notable feature of this article is that it addresses negative dependence as a machine learning methodology as a whole. In this vein, it covers within its span an array of negatively dependent stochastic models and their applications well beyond DPPs, thereby putting forward a very general and rather unique  perspective.

\end{abstract}

\medskip

\maketitle

\vspace{-1cm}

\def\thefootnote{$\dag$}\footnotetext{Corresponding author}

\def\thefootnote{$1$}\footnotetext{Department of Mathematics, National University of Singapore, 10 Lower Kent Ridge Road, 119076, Singapore}

\def\thefootnote{$2$}\footnotetext{Univ. Lille, CNRS, Centrale Lille, UMR 9189 -- CRIStAL, F-59000 Lille, France}

\newpage 

\tableofcontents

\vspace{-1cm}




\newpage

\section{Introduction}
\label{sec: intro}
\subsection{Background}
The traditional framework of statistical independence has significantly advanced the fields of machine learning and statistics. However, its applicability and effectiveness may be reaching their limits in certain contexts, both as a modeling approach and as a foundation for algorithm design. This presents a strong motivation to explore alternative learning paradigms. Notable among these are models inspired by statistical physics that exhibit strong correlations, an approach that has attracted significant attention in recent years. The goal is to take advantage of the global dependence structures and long-range correlations that characterize these processes to enhance efficiency and reduce complexity in achieving improved learning objectives.

Negative dependence, where a stochastic system encourages diversity among its variables, is emerging as a key driver for advancing learning capabilities beyond the constraints of classical independence. In a range of core learning challenges — including but not limited to optimization, sampling, dimensionality reduction, and sparse signal recovery — recent research has shown that negatively dependent systems outperform state-of-the-art methods based on statistical independence, positioning them as a powerful new approach for future developments in this field.

\subsection{Motivation and heuristics}
A typical instantiation of this approach may be seen in the setting of quadratures, a topic that will be covered in a more technical and in-depth discussion later in this article. In this introductory segment, we will undertake a heuristic discussion for illustrative purposes. 

\begin{figure}[h]
    \centering
    \includegraphics[width=5cm, height=5cm]{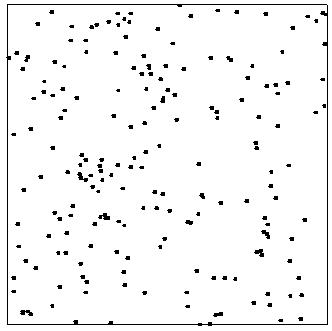}
    \includegraphics[width=5cm, height=5cm]{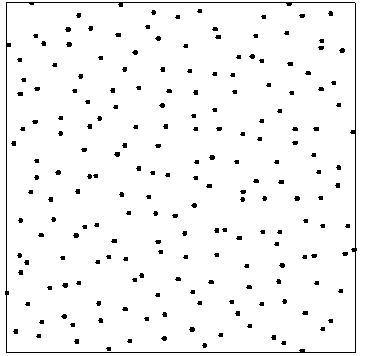}
     \includegraphics[width=5cm, height=5cm]{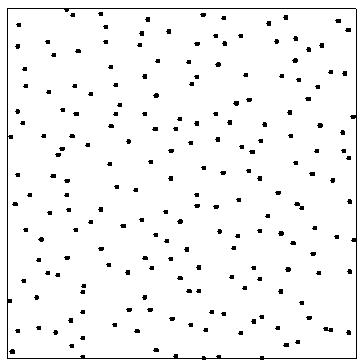}
     \caption{(a) Left: Poisson  \qquad (b) Center: DPP \qquad (c) Right: GAF}
    \label{fig:poisson-dpp-gaf}
\end{figure}

The panel of Fig. \ref{fig:poisson-dpp-gaf} demonstrates three competing point processes on a 2D square (with the same average density of points per unit area): (a) uniformly chosen random points, (b) a determinantal point process (abbrv DPP), and (c) zeros of a Gaussian random series. While the first one is a representation of the basic i.i.d. notion of randomness that is classically popular in probability, statistics, and applied mathematics, the latter two are strongly correlated models of randomness, exhibiting \textit{negative association} to varying degrees. The negative association entails that the random point set model incentivizes points to \textit{repel} each other (esp. at short ranges) -- in more technical terms, the stochastic model puts more weight on point configurations with fewer pairs of points that are close to each other. For the DPP models, this feature holds true at all distance scales, whereas for the Gaussian zero processes this is known to hold true at short distances (the most important regime for repulsion in practical scenarios) (c.f. Eq. \ref{eq:DPP_repel}, Eq. \ref{eq:Poisson_repel} in Section \ref{sec: DPP} and the discussion about GAFs in the end of subsection 7.1, Section \ref{sec: spectrogram} for a more analysis of this phenomenon).

Based on these considerations, we present below a heuristic argument to demonstrate how negatively associated random point sets can contribute to improved learning outcomes and approximation guarantees. To this end, let us consider a simple scenario where we use the sampled point set $\Xi$ for \textit{quadrature} -- namely, for a reasonably regular real-valued function $f$ defined on the square $D$ (c.f. Fig. \ref{fig:poisson-dpp-gaf}), we desire to use a (weighted) average of $f$ over $\Xi$ in order to approximate the integral $\int_D f(x) \d x$. A traditional baseline in randomized quadrature would be to use uniformly sampled independent points (c.f. Fig. \ref{fig:poisson-dpp-gaf} panel (a)), which we compare against negatively dependent samples (such as in Fig. \ref{fig:poisson-dpp-gaf} panel (b) and (c)). 

Observe that the independent point set in panel Fig. \ref{fig:poisson-dpp-gaf} (a) exhibits clusters of points that are relatively closer to each other, interspersed with vacant spaces. In comparison, the negatively dependent point sets in Fig. \ref{fig:poisson-dpp-gaf} (b) and (c) exhibit a much more homogeneous spatial outlay of points. In fact, they almost resemble a randomly perturbed or stochastic grid, which would incidentally be another viable model of negatively dependent random point set for applications (see Sec. \ref{sec: networks}). Such behaviour can, in fact, be seen from negative dependence at short length scales via a simple heuristic argument. To wit, viewed as a physical particle system, the mutual repulsion tends to push the particles away from each other. But the confining potential (in physical terms), equivalently the background measure (in stochastic terms), prevents the particles from escaping to infinity. The tension between these two opposing effects compels the particles to equilibriate around a spatially homogeneous point configuration, as observable in Fig \ref{fig:poisson-dpp-gaf} panels (b) and (c).

Coming back to the question of randomized quadrature, notice that any estimate of $\int_D f(x) \d x$ based on independent points will be overly dependent on the functional behaviour in the region of point clusters, and relatively uninformative about the vacant regions with relatively few or no points. This introduces inherent inefficiencies in the quadrature procedure. On the other hand, owing to their homogeneous spatial distribution, a negatively dependent point set captures the functional behaviour across spatial regions in a more comprehensive manner, thereby leading to more robust approximation properties. 

On a related note, the fact that the expected density of points per unit area is constant necessitates that in different samples of the independent point sets, the location of the point clusters and vacant spaces (i.e., regions with relatively higher and lower density of points ) keep changing, which implies that quadrature estimates based on such point sets would exhibit large fluctuations from one sample to another. On the other hand, quadrature estimates based on negatively dependent point sets tend to be much more stable from one quadrature sample to another, thereby leading to much tighter approximation guarantees.

The above considerations are further reinforced by the fact that it is possible to develop a theory of negatively dependent random point sets on very general background spaces, widely extending the ambit of the heuristics presented above. This includes, in particular, settings where the notion of a natural grid is not available (e.g., manifolds with non-trivial curvature such as spheres, spaces without geometry such as combinatorial objects like graphs and discrete spaces which are a staple in many machine learning problems).

\subsection{Related literature}
Random point sets with negative dependence have been in vogue as a scientific discipline for several decades. The principal motivation for this came from statistical and quantum physics. An early connection arose in the context of Fermionic particle systems, wherein determinantal structures were observed in ground state densities of Fermionic systems via Slater determinants \cite{Slater}. Subsequently, it was developed as a mathematical discipline by Macchi and others \cite{Mac72, Mac75, SoshDPP}, and new connections arose with a wide array of areas in mathematics and statistical physics, including random matrices and Coulomb gases  \cite{johanssondpp, borodin}, integrable systems  \cite{Deift_1,Deift_2}, combinatorics \cite{borodin2015_integrable,borodin2016_integrable}, among others. 

Negative dependence as a tool for machine learning and statistics is of a more recent historical provenance. Earlier work in this direction is well explored  in the excellent survey by Kulesza and Taskar  \cite{kulesza_determinantal_2012}, which appeared only around a decade ago, where we refer the interested reader for an account of formative developments. This was also roughly contemporaneous with the emergence of works in the statistical literature that looked at DPPs and allied negatively dependent processes as a statistical modelling paradigm for strongly correlated data \cite{LaMoRu14, lavancier-1,lavancier-2}. These developments motivated investigations into statistical estimation and inference on DPPs, e.g., learning the parameters of DPPs \cite{Fox-Affandi-1, Rigollet-2},  efficient sampling schemes for continuous DPPs \cite{Fox-Affandi-2},  rates of estimation for DPPs \cite{Rigollet-1}, and DPPs for minibatch sampling in stochastic gradient descent \cite{OPE-NIPS}.  

This also ties in with a burgeoning interest, principally in the theoretical computer science and adjacent communities, in carrying out investigations into general models of negatively associated random sets (in the discrete setting), known as \emph{strongly Rayleigh measures}. 
Within the limited boundaries of the present article, we will not have the occasion to delve into this beautiful theory, instead referring the reader to the seminal works of Borcea, Branden and Liggett \cite{Borcea-Branden, Borcea-Branden-2, Borcea-Branden-3, Borcea-Branden-Liggett, BRANDEN, Branden-2}, for the development of the general theory as well as its connections to other parts of discrete mathematics, such as matroid theory, log concavity of sequences and related topics. In the setting of applications to learning theory, notable contributions include counting bases of matroids \cite{anari_logconcave2}, sampling algorithms \cite{anari_mcmc_rayleigh_dpp, Jegelka}, traveling salesman problem \cite{travelling-salesman}, learning with DPPs \cite{gartrell2020scalable,Sra,mariet_diversity_2017}. 
On a related note, we must mention the resolution of the Kadison-Singer conjecture and construction of a very general class of Ramanujan graphs by Marcus, Spielman and Srivastava over the last decade that strongly leverages zeros of characteristic polynomials and their interlacing structures \cite{interlacing-1, Marcus-Spielman-Srivastava, interlacing-3, interlacing-4}.

In this article, we will review recent developments focusing on the use of negative dependence as a tool for improving the state of the art in fundamental learning problems. We will make the subtle distinction of this approach from the use of DPPs as a statistical modelling paradigm (cf. references above), a topic of its own interest that we will have the occasion to address only in passing within the limited span of this article. In the main, we will largely focus in detail on problems in this direction where the authors have had the opportunity to make a contribution, with elaborate discussions on connections to other related topics. These include DPPs for Monte Carlo methods, DPPs and directionality in data, DPPs for coresets in machine learning, negative dependence for stochastic networks, zeroes of Gaussian analytic functions for spectrogram analysis, DPPs for feature selection, and a quantum sampler for DPPs. We will also present new theoretical results on applications of DPPs in the context of neural network pruning, a problem that has attracted considerable interest in recent years.

\section{Determinantal point processes}
\label{sec: DPP}
\subsection{Basic notions}

\emph{Determinantal point processes} (DPPs) are random configurations of locally finite point sets over some background space, whose correlation functions are given by determinants of certain kernels. They have a long history in both mathematics and physics. DPPs were first introduced by Macchi \cite{Mac72}, motivated by fermions in quantum mechanics. Macchi discovered that DPPs describe the distribution of a fermionic system at thermal equilibrium; their repulsive behavior captures the Pauli exclusion principle, which states that two fermions cannot occupy the same quantum state. Surprisingly, DPPs also arise naturally in many mathematical settings, such as the eigenvalues of random matrices, random spanning trees, and zeros of random analytic functions, among others. As such, DPPs have intertwined with various fields in mathematics, including random matrix theory \cite{borodin,johanssondpp}, integrable systems \cite{Deift_1,Deift_2}, combinatorics \cite{borodin2015_integrable,borodin2016_integrable}, and complex geometry \cite{Berman0,Berman1,Berman2,Berman3}, to name a few.

Formally, let $\mathcal X$ be a Polish space equipped with a positive Radon measure $\mu$ (e.g., $\mathbb R^d$ with the Lebesgue measure or a discrete set with the counting measure).
A \emph{point process} $\mathcal S$ on $\mathcal{X}$ is a random $\mathbb N$-valued Radon measure on $\mathcal X$. If $\mathcal S$ assigns mass at most $1$ for each point almost surely, we say that $\mathcal S$ is a \emph{simple point process}.

\begin{definition}[DPPs] \label{def:GeneralDPPs}
    A point process $\mathcal S$ is said to be determinantal if there exists a measurable function $K : \mathcal{X} \times \mathcal{X} \rightarrow \mathbb{C}$ such that, for all $n \geq 1$, if $f : \mathcal{X}^n \rightarrow \mathbb{R}$ is a bounded measurable function:
    \[ \mathbb{E}\Big [\sum_{\neq} f(x_{i_1}, \cdots, x_{i_n}) \Big ] = \int _{\mathcal{X}^n}f(x_1, \cdots x_n) \det((K(x_i,x_j))_{1 \leq i,j \leq n}) \d\mu^{\otimes n}(x_1, \cdots, x_n), \]
    where the sum in the LHS ranges over all pairwise distinct $n$-tuplets of $\mathcal{S}$.
    We then call $\mathcal S$ a DPP on $\mathcal X$ with kernel $K$ and reference measure $\mu$.
\end{definition}

\begin{remark}
    It can be shown that DPPs are simple, thus one can visualize DPPs as random configurations of locally finite point sets on $\mathcal X$. 
\end{remark}

In the theory of point processes, an important object to study is the so-called \emph{linear statistics}.
\begin{definition}[Linear statistics]
    For a point process $\mathcal S$ on $\mathcal X$ and a measurable function $f: \mathcal X \rightarrow \mathbb C$, the linear statistic $\Lambda_{\mathcal S}(f)$ is defined as
\[\Lambda_{\mathcal S}(f) := \sum_{x\in \mathcal S} f(x).\]
\end{definition}
Under mild conditions, the joint distribution of $\Lambda_\cS(f)$ for a sufficiently rich class of test functions $f$ (e.g., the class of all continuous, compactly supported functions) can determine the distribution of the process $\cS$; thus, studying linear statistics is of fundamental interest. For DPPs, due to the determinantal structure, there is an explicit formula for the Laplace transform of linear statistics. As a consequence, one can deduce explicit formulas for the cumulants (hence, moments) of linear statistics of DPPs (e.g., see \cite{lambertdpps}).
\begin{proposition}
    Let $\cS$ be a DPP on $\mathcal X$ with kernel $K$ and reference measure $\mu$. Let $f:\mathcal X \rightarrow \mathbb C$ be a bounded, compactly supported measurable function. Then
    \begin{eqnarray*}
        \mathbb E[e^{t\Lambda_\cS(f)}] &=& \det[I + M_f \mathcal K],
    \end{eqnarray*}
    where $\det$ denotes the Fredholm determinant, $M_f$ is the multiplication operator by the function $e^{tf(x)}-1$, acting on $L^2(\mathcal X,\mu)$, and $\mathcal K$ denotes the integral operator acting on $L^2(\mathcal X,\mu)$ induced by the kernel $K$.
\end{proposition}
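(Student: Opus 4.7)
The plan is to expand the exponential as a combinatorial sum, apply the determinantal correlation formula term-by-term, and recognize the resulting series as the Fredholm determinant expansion.

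First, setting $g(x) := e^{tf(x)} - 1$, I would write
\[
e^{t\Lambda_\cS(f)} = \prod_{x \in \cS} e^{tf(x)} = \prod_{x \in \cS} \bigl(1 + g(x)\bigr) = \sum_{n=0}^{\infty} \frac{1}{n!} \sum_{\neq} g(x_{i_1})\cdots g(x_{i_n}),
\]
where the inner sum ranges over pairwise distinct $n$-tuples from $\cS$ and the $n=0$ term is $1$. Since $f$ is bounded and compactly supported, $g$ is bounded and vanishes outside a compact set, so (almost surely) only finitely many points of $\cS$ contribute and the sum has only finitely many nonzero terms. Taking expectations and applying the defining identity of Definition~\ref{def:GeneralDPPs} with the test function $g^{\otimes n}$ yields, termwise,
\[
\mathbb E\bigl[e^{t\Lambda_\cS(f)}\bigr] = \sum_{n=0}^{\infty} \frac{1}{n!} \int_{\mathcal X^n} g(x_1)\cdots g(x_n)\, \det\bigl(K(x_i,x_j)\bigr)_{1 \le i,j \le n}\, \d\mu^{\otimes n}(x_1,\ldots,x_n).
\]

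Next, I would identify the right-hand side as the Fredholm determinant $\det[I + M_f \mathcal K]$. The operator $M_f \mathcal K$ has integral kernel $g(x) K(x,y)$, so pulling the factor $g(x_i)$ out of the $i$-th row of the determinant gives
\[
\det\bigl(g(x_i) K(x_i,x_j)\bigr)_{1 \le i,j \le n} = g(x_1)\cdots g(x_n)\, \det\bigl(K(x_i,x_j)\bigr)_{1 \le i,j \le n},
\]
which matches the standard Fredholm expansion
\[
\det[I + M_f \mathcal K] = \sum_{n=0}^{\infty} \frac{1}{n!} \int_{\mathcal X^n} \det\bigl(g(x_i) K(x_i,x_j)\bigr)_{1 \le i,j \le n}\, \d\mu^{\otimes n}.
\]

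The main obstacle is justifying the termwise application of expectation and the convergence of both series. This requires (i) showing the combinatorial expansion of $e^{t\Lambda_\cS(f)}$ is absolutely summable in expectation so that Fubini/monotone convergence applies, and (ii) ensuring the Fredholm determinant on the right is well-defined, typically by assuming $\mathcal K$ is locally trace-class so that $M_g \mathcal K$ (which is supported on the compact set $\supp g$) is trace-class. Under such standard hypotheses on $K$ -- which are implicit in the existence theory for the DPP of Definition~\ref{def:GeneralDPPs} -- both the Hadamard-type bounds $|\det(K(x_i,x_j))| \le \prod_i \|K(x_i,\cdot)\|$ and the boundedness of $g$ provide the domination needed to exchange sum and integral, and the series for $\det[I+M_f\mathcal K]$ converges to the Fredholm determinant, completing the identification.
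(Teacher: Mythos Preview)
The paper does not actually prove this proposition; it is stated as a known fact with a pointer to the literature (e.g., the reference to cumulant formulas). Your argument is the standard one: expand $\prod_{x\in\cS}(1+g(x))$ as a sum over ordered distinct tuples, apply the correlation-function identity of Definition~\ref{def:GeneralDPPs} term by term, and match the result against the Fredholm series for $\det[I+M_f\mathcal K]$. The steps and the caveats you flag (local trace-class hypothesis on $\mathcal K$, Hadamard-type domination to justify the interchange of sum and expectation) are exactly the ingredients that appear in the textbook treatments, so your outline is correct and complete at the level of a sketch.
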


We warn the readers that not every kernel $K$ defines a DPP. In fact, we have the following theorem by Macchi \cite{Mac75} and Soshnikov \cite{SoshDPP}.

\begin{theorem}[Macchi-Soshnikov]
    For a kernel $K: \mathcal X \times \mathcal X \rightarrow\mathbb C$, we denote by $\mathcal K: L^2(\mathcal X,\mu) \rightarrow L^2(\mathcal X, \mu)$ the integral operator induced by $K$. Let $K$ be a kernel such that $\mathcal K$ is self-adjoint and locally trace class. Then $K$ defines a DPP (w.r.t $\mu$) if and only if all the eigenvalues of $\mathcal K$ are in $[0,1]$.
\end{theorem}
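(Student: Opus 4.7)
The plan is to prove the two implications separately. For the necessity direction, I would combine two sources of constraint. First, nonnegativity of the $k$-point correlation densities $\det(K(x_i,x_j))_{i,j=1}^k \geq 0$ for all $k$ and all points $x_1,\dots,x_k$ forces $\mathcal K$ to be positive semi-definite, so its eigenvalues are nonnegative. Second, for any relatively compact measurable set $D\subset\mathcal X$, applying the Laplace transform formula of the preceding proposition with $f = -s\,\ind_D$ and letting $s\to\infty$ yields the hole probability
\[ \mathbb P(\mathcal S\cap D=\emptyset) \;=\; \det(I - \mathcal K_D) \;=\; \prod_i (1-\lambda_i^D), \]
where $\mathcal K_D$ is the compression of $\mathcal K$ to $L^2(D,\mu)$ and $\{\lambda_i^D\}$ is its spectrum. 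Since the left-hand side lies in $[0,1]$ for every such $D$, and since a standard min--max argument compares the spectrum of $\mathcal K_D$ with that of $\mathcal K$ as $D$ exhausts $\mathcal X$, every eigenvalue of $\mathcal K$ must lie in $[0,1]$.

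For the sufficiency direction, I would construct the DPP in two stages. \emph{Stage A (projection case).} If $\mathcal K$ is a finite-rank orthogonal projection of rank $n$ with orthonormal basis $\phi_1,\dots,\phi_n$ spanning its range, define the symmetric function
\[ p(x_1,\dots,x_n) \;=\; \frac{1}{n!}\det\bigl(K(x_i,x_j)\bigr)_{i,j=1}^n \;=\; \frac{1}{n!}\bigl|\det\phi_i(x_j)\bigr|^2. \]
Orthonormality of the $\phi_i$ together with the Cauchy--Binet identity shows $p$ integrates to $1$, and a direct marginalization via the same identity yields $k$-point correlation densities equal to $\det(K(x_i,x_j))_{i,j=1}^k$, as required. \emph{Stage B (general case).} Spectrally decompose $\mathcal K = \sum_i \lambda_i\,\phi_i\otimes\overline{\phi_i}$, draw independent Bernoullis $B_i\sim\mathrm{Ber}(\lambda_i)$, and let $\mathcal K_B$ be the projection onto $\mathrm{span}\{\phi_i:B_i=1\}$. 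Conditionally on $\{B_i\}$, sample a DPP with kernel $\mathcal K_B$ via Stage A; the unconditional law is the candidate DPP with kernel $\mathcal K$. Passage from compact windows to the full $\mathcal X$ is handled by Kolmogorov consistency along an exhaustion $D_\ell\uparrow\mathcal X$, with the locally trace class hypothesis ensuring that each finite-rank reduction is tractable.

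The hard part will be verifying in Stage B that the Bernoulli mixture reproduces the correct $K$-determinantal correlation structure, that is,
\[ \mathbb E_B\bigl[\det(K_B(x_i,x_j))_{i,j=1}^k\bigr] \;=\; \det\bigl(K(x_i,x_j)\bigr)_{i,j=1}^k \]
for every $k$ and all $x_1,\dots,x_k$. Expanding both sides by the Leibniz formula, inserting the spectral representations $K_B(x,y)=\sum_i B_i\phi_i(x)\overline{\phi_i(y)}$ and $K(x,y)=\sum_i \lambda_i\phi_i(x)\overline{\phi_i(y)}$, and using independence of the $B_i$ together with $\mathbb E[B_i]=\lambda_i$, reduces the identity to a matching of monomials that follows from a second application of Cauchy--Binet. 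Once this identity is established, reconstructing an honest point process from its correlation sequence invokes Lenard's moment-problem theorem for point processes, whose summability hypotheses are furnished precisely by the locally trace class assumption on $\mathcal K$.
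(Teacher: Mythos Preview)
The paper does not prove this theorem; it is stated without proof and attributed to Macchi and Soshnikov by citation. There is therefore no in-paper argument to compare against. Your overall strategy---positivity of correlation functions plus hole probabilities for necessity, and the Bernoulli-mixture-of-projections construction for sufficiency---is the standard one (essentially the argument of \cite{HKPV}, which the paper invokes elsewhere and whose projection-DPP sampler appears as Algorithm~1 in Section~\ref{sec:HKPV}).

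Two points deserve tightening. In the necessity direction, the inequality $\det(I-\mathcal K_D)=\prod_i(1-\lambda_i^D)\in[0,1]$ alone does not force every $\lambda_i^D\le 1$: an even number of eigenvalues exceeding $1$ keeps the product nonnegative. You need an extra step, for instance working with the full generating function $\mathbb E\bigl[(1-s)^{\mathcal S(D)}\bigr]=\prod_i(1-s\lambda_i^D)\ge 0$ for all $s\in[0,1]$ and tracking the first sign change, or using monotonicity of the spectrum of $\mathcal K_{D'}$ as $D'$ shrinks inside $D$ together with continuity. In the sufficiency direction, your Stage~B spectrally decomposes the \emph{global} operator $\mathcal K$; if $\mathcal K$ is only locally trace class, almost surely infinitely many $B_i=1$ and $\mathcal K_B$ is an infinite-rank projection, so Stage~A does not apply as stated. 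The clean fix is to run the whole construction locally---diagonalize $\mathcal K_D$ for each relatively compact $D$, do the Bernoulli mixture there (now a.s.\ finite rank), and sample---then patch across windows. Once you have built the process this way, Lenard's theorem is not needed for existence (the mixture already \emph{is} a point process); what you genuinely need is that determinantal correlations under the locally trace class hypothesis determine the law uniquely, so that the restrictions are consistent and Kolmogorov extension applies.
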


An important subclass of DPPs, called \emph{projection DPPs}, consists of those for which the associated integral operator $\mathcal K: L^2(\mathcal X,\mu) \rightarrow L^2(\mathcal X,\mu)$ is self-adjoint, bounded, and has all non-zero eigenvalues equal to 1 (equivalently, $\mathcal K$ is a projection operator onto a subspace of $L^2(\mathcal X,\mu)$). It turns out that all DPPs with Hermitian kernels are mixtures of projection DPPs (see \cite{HKPV}).

Similar to Gaussian processes, all statistical properties of a DPP are encoded in the
kernel $K$ and background measure $\mu$ (or equivalently, the integral operator $\mathcal K$). For example, for any compact subset $D\subset \mathcal X$, the number of points of the DPP defined by $\mathcal K$ lying inside $D$ is equal in distribution to a sum of independent Bernoulli random variables with parameters the eigenvalues of $\mathcal K_D$, the integral operator on $L^2(\mathcal X,\mu)$ defined by the kernel $\mathbf{1}_D(x)K(x,y)\mathbf{1}_D(y)$. In particular, for projection DPPs, the total number of points is almost surely a constant, which is equal to $\rank(\mathcal K)$.

DPPs are known to exhibit \emph{repulsiveness}, i.e., particles tend to repel each other. This entails, for example, that the probability of having multiple points inside the same small neighbourhood would be much smaller for a DPP compared to that for a Poisson process with the same first intensity. To see this, in what follows we present a simple computation.

Let $\mathcal S$ be a DPP on $\mathbb R^d$ with Hermitian, translation-invariant kernel $K$ (i.e., $K(x,y) = \overline{K(y,x)}$ and $K(x,y)=\Phi(x-y)$ for some $\Phi$), and background measure the Lebesgue measure. Let $B_\varepsilon$ be a ball of small radius $\varepsilon>0$ (one can assume the center is $0$ due to the translation invariance) and denote by $\mathcal S(B_\varepsilon)$ the number of points in $\mathcal S$ inside $B_\varepsilon$. Then we have
\begin{eqnarray*}
    \mathbb P(\mathcal S(B_\varepsilon) \ge 2) &\le& \mathbb E[\mathcal S(B_\varepsilon)(S(B_\varepsilon)-1)] \\
    &=&\iint_{B_\varepsilon^2} \Big( K(x,x)K(y,y) - K(x,y)K(y,x) \Big ) \d x \d y\\
    &=&\iint_{B_\varepsilon^2} \Big( |\Phi(0)|^2 - |\Phi(x-y)|^2 \Big ) \d x \d y \\
    &=& \varepsilon^{2d} \iint_{B(0,1)^2} \Big( |\Phi(0)|^2 - |\Phi(\varepsilon(u-v))|^2 \Big ) \d u \d v.
\end{eqnarray*}
Thus, for sufficiently regular $\Phi$, 
we would have
\begin{equation} \label{eq:DPP_repel}
    \mathbb P(\mathcal S(B_\varepsilon) \ge 2) = \mathcal{O}(\varepsilon^{2d+1}) \quad \text{as } \varepsilon\rightarrow 0.
\end{equation}

In contrast, for Poisson point processes, the order of the probability above is $\varepsilon^{2d}$. To elaborate, let $\mathcal S_\poi$ be a Poisson point process on $\mathbb R^d$ with the same first intensity $K(x,x)\d x = \Phi(0) \d x$ as $\mathcal S$. Then, by definition, $\mathcal S_\poi(B_\varepsilon)$ is a random variable following the Poisson distribution with parameter $\lambda :=\Phi(0) \cdot\mathrm{Vol}(B_\varepsilon)$. Thus,
\begin{equation} \label{eq:Poisson_repel}
    \mathbb P(\mathcal S_\poi(B_\varepsilon) \ge 2) = e^{-\lambda}(e^\lambda - 1 - \lambda) \asymp \varepsilon^{2d} \quad \text{as } \varepsilon\rightarrow 0,
\end{equation}
where we used the fact that $\lambda = \Phi(0) \cdot \mathrm{Vol}(B(0,1)) \cdot \varepsilon^d$.

\medskip  

For more detailed discussion and further properties of DPPs, we refer the readers to \cite{SoshDPP,HKPV,LyonsIHES,lyons2014survey} for excellent overviews. 

\subsection{DPPs in machine learning}
In recent years, DPPs have attracted the attention of the machine learning community. They are generic statistical models for repulsion in
spatial statistics \cite{LaMoRu14, biscio2017contrast} and machine learning \cite{kulesza_determinantal_2012, belhadji_determinantal_2020, gartrell2019learning, gartrell2020scalable, determinantal-averaging, tremblay2019determinantal, GDP}. Sampling and inference with DPPs are tractable, and there are efficient algorithms for such tasks \cite{kulesza_determinantal_2012, GDP, gartrell_scalablesampling, han2022scalable}.

In the setting of machine learning, the background set (sometimes called the \emph{data set}) is usually finite, say, $\mathcal X = \{1,..,N\}$. It is then common to use the counting measure as the background measure on $\mathcal X$. In this setting, a more intuitive way to define DPPs is as follows. 

\begin{definition}[Discrete DPPs] \label{def:DPPdef}
    A random subset $\mathcal S$ of $\mathcal X$ is called a DPP if there exists an $N\times N$ matrix $\bK$ such that
    \[ \mathbb P(A\subset \mathcal S) = \det [\bK_{A}], \quad \forall A\subset \mathcal X,\]
    where $\bK_A$ denotes the submatrix of $\bK$ with rows and columns indexed by the subset $A$.
\end{definition}

    The Macchi-Soshnikov theorem is then translated to that if $\bK$ is a Hermitian matrix, then $\bK$ defines a DPP if and only if $\mathbf{0} \preceq \bK \preceq \mathbf I $. 
    In this setting, projection DPPs are precisely those whose kernels are projection matrices.
    
    If all eigenvalues of $\bK$ are strictly less than $1$, the DPP defined by $\bK$ is also an $L$-ensemble, namely
    \[ \mathbb P(\mathcal S= A) \propto \det [\bL_A], \quad \forall A\subset \mathcal X\]
    where $\bL:= \bK(\bI - \bK)^{-1}$ and $\bL_A$ denotes the submatrix of $\bL$ whose rows and columns indexed by the subset $A$ (see \cite{kulesza_determinantal_2012}).

In general, the cardinality $|\cS|$ of a DPP $\cS$ is an $\mathbb N$-valued random variable. In applications, sometimes it is interesting to sample a point process similarly to a DPP but with fixed size $m$ (called an $m$-DPP in  \cite{kulesza_determinantal_2012}). This can be done by a conditioning argument as follows. Given a DPP $\mathcal S$ on $\mathcal X$, one can obtain an $m$-DPP by considering the conditioned process $(\cS| \{|\cS| = m\} )$, provided that $\mathbb P (|\cS|= m ) >0$. However, $m$-DPPs do not have the determinantal structure in general (i.e., they are not DPPs); the only exception is when the original DPP $\mathcal S$ is a projection DPP.

\section{Sampling from DPPs}
\label{sec: SamplingDPP}

DPPs are particularly useful tools for machine learning since they can be sampled in polynomial time. The classic algorithm to sample a DPP comes from \cite{HKPV} which leverages a neat geometric understanding of a DPP as a probability measure. Recent algorithmic advances significantly improve this classical algorithm to achieve computational efficiency. 

\subsection{The classic Spectral Sampler for DPPs} \label{sec:HKPV}

Let $\mathcal S$ be a projection DPP of rank $m$, i.e., its kernel $K$ defines a projection operator $\mathcal K_H$ from $L^2(\mathcal X,\mu)$ to an $m$-dimensional subspace $H \subset L^2(\mathcal X,\mu)$. We denote by $\|\cdot\|$ the norm in the Hilbert space $L^2(\mathcal X,\mu)$. Then for each $f\in L^2(\mathcal X,\mu)$, $\mathcal K_H f$ is the unique element in $H$ closest to $f$.

In the discrete case, the delta mass at $x$ for $x\in \mathcal X$ is an element in $L^2(\mathcal X,\mu)$, namely
\[  \delta_x (y) = \begin{cases} 
    1/\mu(\{x\}) & \text{ if } y = x \\
    0 & \text{ otherwise.}
\end{cases}\]
Thus, $\mathcal K_H$ can act on $\delta_x$ and by direct calculations, one has $\mathcal K_H \delta_x (\cdot ) = K(\cdot ,x)$. In the general case,
we define $\mathcal K_H \delta_x (\cdot) := K(\cdot, x)$. Intuitively, $\mathcal K_H \delta_x$ represents the element in $H$ which is closest to $\delta_x$.

Since $K$ is a projection kernel, one can easily check that
\[ \| \mathcal K_H\delta_x\|^2 = \int |K(y,x)|^2 \d \mu(y) = \int K(x,y) K(y,x) \d \mu(y) = K(x,x).\]
The intensity measure of $\mathcal S$ is then given by
\[ \d \mu_H(x) = K(x,x) \d \mu(x) = \|\mathcal K_H\delta_x\|^2 \d \mu(x).\]
We note that $\mu_H(\mathcal X) = m$, so $\frac{1}{m}\mu_H$ is a probability measure on $\mathcal X$. Starting with $H_m := H$, the algorithm to sample the DPP $\mathcal S$ is as follows.

\begin{algorithm} [H] \label{alg:HKPV}
\caption{Sampling a projection DPP}
\begin{algorithmic}[1]
    \State If $m=0$, stop.
    \State Pick a random point $X_m$ in $\mathcal X$ from the probability measure $\frac{1}{m}\mu_{H_m}$.
    \State Let $H_{m-1} \subset H_m$ be the orthogonal complement of the function $\mathcal K_{H_m}\delta_{X_m}$ in $H_m$. In the discrete case, $H_{m-1} = \{f\in H_m : f(X_m) =0 \}.$ Note that $\dim H_{m-1} = m-1$ almost surely. 
    \State Decrease $m$ by $1$ and iterate.
\end{algorithmic}
\end{algorithm}

\begin{proposition}
 Let $(X_1,\ldots,X_m)$ be the points constructed by Algorithm 1. Then $ \mathcal S \overset{d}{=}  \{X_1,\ldots,X_m\} $.
\end{proposition}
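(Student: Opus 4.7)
The plan is to prove the proposition by induction on $m$, the rank of the projection kernel. At each step I will show that the joint density with respect to $\mu^{\otimes m}$ of the ordered tuple $(X_m,X_{m-1},\ldots,X_1)$ produced by Algorithm 1 equals $\tfrac{1}{m!}\det\bigl(K(x_i,x_j)\bigr)_{1\le i,j\le m}$, which is the standard joint correlation/density of a rank-$m$ projection DPP. Since this density is symmetric under permutations of its arguments, equality of the ordered densities immediately yields equality in distribution of the unordered sets $\{X_1,\ldots,X_m\}$ and $\cS$.

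The core algebraic input is the kernel update identity: if $H_{m-1}\subset H_m$ is the orthogonal complement of the vector $v:=\mathcal K_{H_m}\delta_{x_m}$ inside $H_m$, then the kernel $K_{m-1}$ of the orthogonal projection onto $H_{m-1}$ satisfies
$$K_{m-1}(y,y')=K_m(y,y')-\frac{K_m(y,x_m)\,K_m(x_m,y')}{K_m(x_m,x_m)}.$$
This follows from $\mathcal K_{H_{m-1}}=\mathcal K_{H_m}-P_v$, where $P_v$ is the rank-one orthogonal projection onto $\mathrm{span}(v)$, together with $\|v\|^2=K_m(x_m,x_m)$, an identity already established in the excerpt. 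Combined with the Schur complement identity
$$\det\bigl(K_m(x_i,x_j)\bigr)_{1\le i,j\le m}=K_m(x_m,x_m)\cdot \det\bigl(K_{m-1}(x_i,x_j)\bigr)_{1\le i,j\le m-1},$$
this will drive the inductive step.

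For the base case $m=1$, $H_1=H$ is one-dimensional, so $\int K(x,x)\,\mathrm{d}\mu(x)=\mathrm{Tr}(\mathcal K_H)=1$; thus the sampling distribution $K(x,x)\,\mathrm{d}\mu(x)$ is a genuine probability measure and coincides with the first-intensity-determined law of the single point of $\cS$. For the inductive step, conditional on $X_m=x_m$, Algorithm 1 restarts on the projection subspace $H_{m-1}$ of dimension $m-1$ with kernel $K_{m-1}$; by the induction hypothesis, the conditional joint density of $(X_{m-1},\ldots,X_1)$ equals $\tfrac{1}{(m-1)!}\det(K_{m-1}(x_i,x_j))_{1\le i,j\le m-1}$. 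Multiplying by the marginal density $\tfrac{1}{m}K_m(x_m,x_m)$ of $X_m$ and applying the Schur identity yields precisely $\tfrac{1}{m!}\det(K_m(x_i,x_j))_{1\le i,j\le m}$, as required.

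The point demanding most care is justifying that the inductive restart is legitimate: I must have $\dim H_{m-1}=m-1$ almost surely, which reduces to $v\ne 0$, i.e., $K_m(X_m,X_m)>0$ almost surely under the law of $X_m$. This is automatic because $X_m$ is sampled with density proportional to $K_m(x,x)$, so the event $\{K_m(X_m,X_m)=0\}$ carries zero mass; consequently the chain $H_m\supset H_{m-1}\supset\cdots\supset H_0=\{0\}$ is well-defined with the expected dimensions and the induction closes.
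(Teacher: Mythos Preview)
Your proof is correct, but it takes a somewhat different route from the paper's. The paper does not induct on $m$; instead it writes the joint density of the ordered tuple directly as
\[
p(x_1,\ldots,x_m)=\prod_{j=1}^m \frac{\|\mathcal K_{H_j}\Psi_j\|^2}{j},\qquad \Psi_j:=\mathcal K_H\delta_{x_j},
\]
observes that $H_j=H\cap\langle\Psi_{j+1},\ldots,\Psi_m\rangle^\perp$, and then recognizes the product $\prod_j\|\mathcal K_{H_j}\Psi_j\|^2$ as the Gram determinant of $\Psi_1,\ldots,\Psi_m$ (the ``base times height'' formula), which equals $\det(K(x_i,x_j))_{i,j}$ since $\langle\Psi_i,\Psi_j\rangle=K(x_i,x_j)$. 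Your inductive argument with the rank-one kernel update and the Schur complement identity is the recursive unrolling of the same Gram-determinant fact: the Schur identity you invoke is precisely one step of the base-times-height computation. The paper's approach is shorter and more geometric; yours is more explicit about the conditional structure of the algorithm and makes the ``almost surely $\dim H_{m-1}=m-1$'' point transparent, which the paper only asserts in passing.
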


\begin{proof}
    We compute the density of $(X_1,\ldots,X_m)$ at $(x_1,\ldots,x_m) \in \mathcal X^m$ with respect to $\mu^{\otimes m}$ (where $x_i \in \mathcal X$ are  distinct). Let $\Psi_j:= \mathcal K_H \delta_{x_j}$, then one can easily check that $\mathcal K_{H_j}\delta_{x_j} = \mathcal K_{H_j}\Psi_j$. Thus, the density of $(X_1,\ldots,X_m)$ at $(x_1,\ldots,x_m)$ equals
    \[ p(x_1,\ldots,x_m) = \prod_{j=1}^m \frac{\|\mathcal K_{H_j}\Psi_j \|^2}{j} \cdot\]
    We further note that $H_j = H \cap \langle \Psi_{j+1},\ldots, \Psi_m \rangle^\perp$. Therefore, the quantity $\prod_j \|\mathcal K_{H_j} \Psi_j\|^2$ equals the determinant of the Gram matrix whose $(i,j)$ entry is given by the inner product of $\Psi_i$ and $\Psi_j$. By direct computation,
    $ \int \Psi_i \bar\Psi_j \d \mu = K(x_i,x_j)$.
    Thus
    \[ p(x_1,\ldots,x_m) = \frac{1}{m!} \det [K(x_i,x_j)]_{m\times m}.\]
    Hence, the point process $\{X_1,\ldots,X_m\}$ is a DPP on $\mathcal X$ defined by the kernel $K$ and background measure $\mu$, as desired.
\end{proof}

In the discrete setting, when the DPP $\mathcal S$ has the structure of an $L$-ensemble with kernel $\bL = \bK(\bI-\bK)^{-1}$, we can sample $\mathcal S$ via a spectral argument as follows. 
Let $\bL=\sum_{i =1}^N \lambda_i \mathbf{v}_i \mathbf{v}_i^\top$ be the spectral decomposition of $\bL$. The spectral decomposition of $\bK$ is then given by: $\bK=\sum_{i =1}^N (\lambda_i/(1+\lambda_i)) \mathbf{v}_i \mathbf{v}_i^\top$.

\begin{algorithm} [H]
\caption{Spectral sampling}
\begin{algorithmic}[1]
    \State \textbf{Input:} $(\mathbf v_i, \lambda_i)_{i \in [N]}$ an eigendecomposition of $\bL$
    \State $V \gets \emptyset$, $\mathcal S \gets \emptyset$
    \For {$1 \leq j \leq N$}
        \State Add $\mathbf v_j$ to $V$ with probability $\lambda_j/(1+\lambda_j)$
    \EndFor
    \While {$\lvert V \rvert > 0$}
        \State Sample $i \in [N]$ with probability: $\frac{1}{\lvert V \rvert} \sum \limits_{\mathbf v \in V} (\mathbf v^\top \mathbf e_i)^2 $ and add it to $\mathcal S$
        \State $V \gets V_\perp$ an orthonormal basis of the subspace of $V$ orthogonal to $\mathbf e_i$
    \EndWhile
    \State \textbf{Return} $\mathcal S$
\end{algorithmic}
\end{algorithm}
This algorithm is an avatar of the complete algorithm of \cite{HKPV}, and the proof of validity in the discrete form we use here can be found in \cite{kulesza_determinantal_2012}.
When $N$ becomes large, the true bottleneck here is accessing the eigendecomposition of $\mathbf{L}$, which required $\mathcal{O}(N^3)$ operations. For kernels with low rank $r \ll N$, the computational cost of the eigendecomposition reduces to $\mathcal{O}(r^2N)$ by using dual kernel. 

\subsection{Tree-based algorithms for computational efficiency}
Recent algorithmic advances have introduced tree-based methods to perform expedited sampling of DPPs, focusing on the setting where repeated DPP samples are required via the same kernel. This situation is very common in machine learning applications, such as Stochastic Gradient Descent (abbrv. SGD), recommender systems, coresets, etc. 

When repeated DPP samples are required, economy can be achieved by reusing the eigendecomposition of the kernel, to be computed in the preprocessing step. For the second phase, given the eigendecomposition, the complexity of getting each additional sample is linear in $N$ in Algorithm 2. This dependence can be significantly reduced to $\log N$ by using a tree-based algorithm for fast repeated sampling from DPPs, developed recently in \cite{kulesza-gillenwater}. 

{
 To elaborate, let $\mathcal S$ be an $L$-ensemble with kernel $\mathbf{L}\in\mathbb{R}^{N\times N}$ of rank $r$. We write $\mathbf{L}=\mathbf{B}^\top \mathbf{B}$, where $\mathbf{B}\in\mathbb{R}^{r\times N}$ and define $\mathbf{C}:=\mathbf{B}\mathbf{B}^\top \in \mathbb{R}^{r\times r}$, called the \textit{dual kernel}. In practice, when the user provides $\mathbf{B}$, the dual kernel $\mathbf{C}$ is preferred in sampling low-rank DPPs, as it reduces the eigendecomposition cost to $\mathcal{O}(r^2N)$. We consider the eigendecomposition of $\mathbf{C}=\sum_{i=1}^r \lambda_i \mathbf{w}_i\mathbf{w}_i^\top$. The first phase of the tree-based algorithm in \cite{kulesza-gillenwater} is the same as the first phase in Algorithm 2, where we use the eigendecomposition to sample a subset of indices $E\subset\{1,2,\ldots,r\}$ by independently picking $i\in E$ with probability $\lambda_i/(\lambda_i+1)$. The advancement comes in the second phase, where instead of directly sampling from a multinomial distribution defined by a projection kernel (obtained in the first phase) and repeatedly updating the kernel as in Algorithm 2 (which is costly), we will repeatedly sample from a binary tree of depth $\log N$. The tree construction, appropriate summary statistics stored in the tree will be done in the pre-processing phase, which is a one-time cost.  }

{
The construction of the tree is as follows. The root of the tree is the full set of items $[N]$, and each child node is half of its parent's items. We first precompute two $r\times N$ matrices $\mathbf{G}$ and $\mathbf{H}$ 
\[ \mathbf{G}_{ij} := \mathbf{b}_j^\top \mathbf{w}_i \quad,\quad \mathbf{H}_{ij} := \gamma_i \mathbf{G}_{ij}\]
where $\mathbf{b}_j$'s are the columns of $\mathbf{B}\in\mathbb{R}^{r\times N}$ and $\gamma_i:=1/\lambda_i$ for $1\le i \le r$.
The $j^{th}$ columns of $\mathbf{G}$ and $\mathbf{H}$ will be denoted by $\mathbf{g}_j$ and $\mathbf{h}_j$ respectively.
At each tree node $S \subset [N]$, we store a vector $\mathbf{z}^{(S)}\in\mathbb R^r$ and a matrix $A^{(S)}\in \mathbb{R}^{r\times r}$, given by
\[\mathbf{z}^{(S)}_i := \gamma_i \sum_{j\in S} \mathbf{G}_{ij}^2 \quad,\quad \mathbf{A}^{(S)}_{i_1 i_2} := \sum_{j\in S} \mathbf{H}_{i_1 j} \mathbf{H}_{i_2 j}.\]
We remark that a recursive argument can be applied here to efficiently compute these quantities. Indeed, if $S=S_l \cup S_r$, where $S_l$ and $S_r$ are child nodes of $S$, then 
\[\mathbf{z}^{(S)} = \mathbf{z}^{(S_l)} + \mathbf{z}^{(S_r)}\quad,\quad \mathbf{A}^{(S)} = \mathbf{A}^{(S_l)} + \mathbf{A}^{(S_r)}.\]
The algorithm for tree construction is described below. Here note that $\mathrm{Split}$ is a function that partitions a set into two subsets of approximately equal size, and $\circ$ denotes the entrywise product.}
\begin{algorithm} [H] \label{alg:tree-const}
\caption{Tree construction}
\begin{algorithmic}[1]
    \State \textbf{procedure} $\mathrm{ConstructTree}(S,\gamma,\mathbf{G},\mathbf{H})$
    \State \quad \textbf{if} $S=\{j\}$ \textbf{then}
    \State \quad \quad $\mathbf{z} \leftarrow \gamma \circ \mathbf{g}_j^2$
    \State \quad \quad $\mathbf{A} \leftarrow \mathbf{h}_j \mathbf{h}_j^\top$
    \State \quad \quad \textbf{return} $\mathrm{Tree}(\mathbf{z},\mathbf{A},j)$
    \State \quad $S_l, S_r \leftarrow \mathrm{Split}(S)$ 
    \State \quad $\mathsf{T}_l \leftarrow \mathrm{ConstructTree}(S_l,\gamma,\mathbf{G},\mathbf{H})$
    \State \quad $\mathsf{T}_r \leftarrow \mathrm{ConstructTree}(S_r,\gamma,\mathbf{G},\mathbf{H})$
    \State \quad $\mathbf{z}\leftarrow \mathsf{T}_l.\mathbf{z} + \mathsf{T}_r.\mathbf{z} $
    \State \quad $\mathbf{A}\leftarrow \mathsf{T}_l.\mathbf{A} + \mathsf{T}_r.\mathbf{A} $
    \State \textbf{return} $\mathrm{Tree}(\mathbf{z}, \mathbf{A}, \mathsf{T}_l,\mathsf{T}_r)$
\end{algorithmic}
\end{algorithm}

{
Given the binary tree constructed via Algorithm 3, we sample from the DPP as follows. Let 
\begin{equation}\label{eq:dpp-def-by-E}
    \mathbf{K}:= \sum_{i\in E} \lambda_i^{-1} (\mathbf{B}^\top \mathbf{w}_i)(\mathbf{B}^\top \mathbf{w}_i)^\top
\end{equation}
be the projection kernel obtained in the first phase. Given a subset $Y\subset [N]$ of selected items, 
we remark that the next item can be directly sampled from the ground set $[N]$ using the conditional kernel
\[\mathbf{K}^Y := \mathbf{K}_{\bar Y} - \mathbf{K}_{\bar Y Y} (\mathbf{K}_Y)^{-1} \mathbf{K}_{Y\bar Y}, \]
where $\bar Y:= [N]\setminus Y$, and $\mathbf{K}_{AB}$ is the submatrix of $\mathbf{K}$ whose rows are indexed by $A\subset [N]$ and columns are indexed by $B\subset [N]$, and $\mathbf{K}_A := \mathbf{K}_{AA}$. However, $\mathbf{K}^Y$ could be costly to compute. The key idea is that, instead of computing the conditional kernel, we will find the next item by traversing the tree from the root to one of its leaves. 
Assuming that we have reached the node $S$ in the tree, we will go to its left child node $S_l$ with probability 
$\mathrm{Pr}(S_l | Y):= (\sum_{j \in S_l} \mathbf{K}^Y_{jj})/(\sum_{j \in S} \mathbf{K}^Y_{jj})$,
and go to its right child node $S_r$ with probability $\mathrm{Pr}(S_r|Y) := 1 - \mathrm{Pr}(S_l|Y)$.
We repeatedly doing this until we reach a leaf of the tree, which gives the next item. 
The following proposition shows that we do not have to fully compute the conditional kernel $\mathbf{K}^Y$ to determine $\mathrm{Pr}(S_l|Y)$ and $\mathrm{Pr}(S_r|Y)$. In fact, they can be effectively computed from the information which is already available in the pre-processing phase.
\begin{proposition}[Proposition 1 in \cite{kulesza-gillenwater}] \label{prop:tree-based}
    Let $\mathbf{K}$ be the projection kernel defined by a subset of indices $E\subset \{1,\ldots,r\}$ as in Equation \eqref{eq:dpp-def-by-E}.
    Let $Y$ be a (potentially empty) subset of elements that have already been selected. Then
    \[ \sum_{j\in S} \mathbf{K}^Y_{jj} = \mathbf{1}^\top \mathbf{z}^{(S)}_E - \mathbf{1}^\top \Big [(\mathbf{K}_Y)^{-1} \circ \Big (\mathbf{G}_{EY}^\top \mathbf{A}^{(S)}_E \mathbf{G}_{EY}\Big) \Big ] \mathbf{1},\]
    where $\circ$ denotes entrywise product, and $\mathbf{1}$ is the all-ones vector.
\end{proposition}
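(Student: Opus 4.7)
The plan is to show the identity by direct algebraic manipulation, expanding both sides in terms of the basic ingredients $\mathbf{G}, \mathbf{H}, \gamma_i$, and then recognizing the structure of the entrywise product at the end. First, I would recall the definition of the Schur complement, $\mathbf{K}^Y_{jj} = \mathbf{K}_{jj} - \mathbf{K}_{jY}(\mathbf{K}_Y)^{-1}\mathbf{K}_{Yj}$, and observe that this expression evaluates to $0$ when $j \in Y$ (since then $\mathbf{K}_{jY}$ is a row of $\mathbf{K}_Y$ itself, and the product telescopes), so it is harmless to sum over all $j\in S$ regardless of whether $S$ intersects $Y$. Hence
\[
\sum_{j\in S}\mathbf{K}^Y_{jj} = \sum_{j\in S}\mathbf{K}_{jj} - \sum_{j\in S}\mathbf{K}_{jY}(\mathbf{K}_Y)^{-1}\mathbf{K}_{Yj},
\]
and I would handle the two terms separately.

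For the first term, I would use $\mathbf{K}_{jj} = \sum_{i\in E}\lambda_i^{-1}(\mathbf{b}_j^\top\mathbf{w}_i)^2 = \sum_{i\in E}\gamma_i \mathbf{G}_{ij}^2$ (by the definition of $\mathbf{K}$ in Equation \eqref{eq:dpp-def-by-E} together with $\mathbf{G}_{ij} = \mathbf{b}_j^\top\mathbf{w}_i$). Swapping the order of summation and applying the definition of $\mathbf{z}^{(S)}_i = \gamma_i\sum_{j\in S}\mathbf{G}_{ij}^2$ gives $\sum_{j\in S}\mathbf{K}_{jj} = \sum_{i\in E}\mathbf{z}^{(S)}_i = \mathbf{1}^\top\mathbf{z}^{(S)}_E$, matching the first term of the RHS.

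For the second term, I would expand $\mathbf{K}_{jy} = \sum_{i\in E}\gamma_i\mathbf{G}_{ij}\mathbf{G}_{iy}$ for each $y\in Y$, insert this into the quadratic form, and push the sum over $j\in S$ to the inside:
\[
\sum_{j\in S}\mathbf{K}_{jY}(\mathbf{K}_Y)^{-1}\mathbf{K}_{Yj} = \sum_{y_1,y_2\in Y}\sum_{i_1,i_2\in E}\mathbf{G}_{i_1 y_1}\mathbf{G}_{i_2 y_2}\,[(\mathbf{K}_Y)^{-1}]_{y_1y_2}\,\gamma_{i_1}\gamma_{i_2}\sum_{j\in S}\mathbf{G}_{i_1 j}\mathbf{G}_{i_2 j}.
\]
The key recognition step is that $\gamma_{i_1}\gamma_{i_2}\sum_{j\in S}\mathbf{G}_{i_1j}\mathbf{G}_{i_2j} = \sum_{j\in S}\mathbf{H}_{i_1 j}\mathbf{H}_{i_2 j} = \mathbf{A}^{(S)}_{i_1 i_2}$, directly from $\mathbf{H}_{ij} = \gamma_i\mathbf{G}_{ij}$ and the definition of $\mathbf{A}^{(S)}$.

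Finally, I would fold the inner double sum back into matrix notation: $\sum_{i_1,i_2\in E}\mathbf{G}_{i_1 y_1}\mathbf{A}^{(S)}_{i_1 i_2}\mathbf{G}_{i_2 y_2} = [\mathbf{G}_{EY}^\top \mathbf{A}^{(S)}_E \mathbf{G}_{EY}]_{y_1 y_2}$, and then apply the elementary identity $\sum_{y_1,y_2} U_{y_1 y_2} V_{y_1 y_2} = \mathbf{1}^\top(U\circ V)\mathbf{1}$ with $U = (\mathbf{K}_Y)^{-1}$ and $V = \mathbf{G}_{EY}^\top\mathbf{A}^{(S)}_E\mathbf{G}_{EY}$ to conclude. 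The whole argument is bookkeeping; the only mild obstacle is keeping the four index sets $E$, $Y$, $S$, $[N]$ straight and spotting that the $\gamma_{i_1}\gamma_{i_2}$ factors are precisely what turns the inner $\mathbf{G}$-sum into an $\mathbf{A}^{(S)}$ block, which is the reason the pre-computed statistics $\mathbf{z}^{(S)}$ and $\mathbf{A}^{(S)}$ are sufficient and $\mathbf{K}^Y$ need never be materialized.
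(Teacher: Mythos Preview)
Your proof is correct: the Schur-complement expansion, the identification $\sum_{j\in S}\mathbf{K}_{jj}=\mathbf{1}^\top\mathbf{z}^{(S)}_E$, the recognition that $\gamma_{i_1}\gamma_{i_2}\sum_{j\in S}\mathbf{G}_{i_1j}\mathbf{G}_{i_2j}=\mathbf{A}^{(S)}_{i_1i_2}$, and the final repackaging via $\sum_{y_1,y_2}U_{y_1y_2}V_{y_1y_2}=\mathbf{1}^\top(U\circ V)\mathbf{1}$ all check out. Note, however, that the paper does not actually supply its own proof of this proposition---it merely states the result and attributes it to \cite{kulesza-gillenwater}---so there is no in-paper argument to compare against; your direct computation is exactly the kind of verification one would expect, and the observation that the formula extends harmlessly to $j\in Y\cap S$ (yielding zero) is a nice touch that keeps the bookkeeping clean.
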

  Since the depth of the tree is $\log N$, this tree-based sampling algorithm is in $\mathcal{O}(\log N)$ time, which is a significant improvement over the linea dependence of Algorithm 2. We summarize the argument above in the algorithm below.
}

\begin{algorithm} [H] \label{alg:tree-samp}
\caption{Tree-based sampling}
\begin{algorithmic}[1]
    \State \textbf{procedure} $\mathrm{Sample}(\mathsf{T},\lambda,\mathbf{G},\mathbf{H})$
    \State \quad $E\leftarrow \emptyset, Y \leftarrow \emptyset$
    \State \quad \textbf{for } $i=1,\ldots,r$ \textbf{ do}
    \State \quad \quad $E\leftarrow E\cup\{i\} \text{ w.p. } \lambda_i/(\lambda_i+1)$
    \State \quad $\mathbf{Q}\leftarrow 0\times 0 \text{ matrix}$
    \State \quad \textbf{for } $j=1,\ldots, |E|$ \textbf{ do}
    \State \quad \quad $y \leftarrow \mathrm{SampleItem}(\mathsf{T}, E, Y , \mathbf{G}, \mathbf{Q})$
    \State \quad \quad $Y \leftarrow Y \cup y$
    \State \quad \quad $\mathbf{Q}\leftarrow \mathrm{ExtendInverse}(\mathbf{Q},\mathbf{G}^\top_{Ey}\mathbf{H}_{EY})$
    \State \quad \textbf{return} $Y$
    \State \textbf{procedure} $\mathrm{SampleItem}(\mathsf{T},E,Y,\mathbf{G},\mathbf{Q})$
    \State \quad \textbf{if } $\mathsf{T}$ is a leaf \textbf{then return} item at this leaf
    \State \quad $p_l \leftarrow \mathrm{ComputeMarginal}(\mathsf{T}_l, E , Y, \mathbf{G},\mathbf{Q})$
    \State \quad $p_r \leftarrow \mathrm{ComputeMarginal}(\mathsf{T}_r, E , Y, \mathbf{G},\mathbf{Q})$
    \State \quad \textbf{if } $\mathcal U(0,1) \le \frac{p_l}{p_l + p_r}$ \textbf{ then}
    \State \quad \quad \textbf{return } $\mathrm{SampleItem}(\mathsf{T}_l, E , Y, \mathbf{G},\mathbf{Q})$
    \State \quad \textbf{return } $\mathrm{SampleItem}(\mathsf{T}_r, E , Y, \mathbf{G},\mathbf{Q})$
    \State \textbf{procedure} $\mathrm{ComputeMarginal}(\mathsf{T}, E , Y, \mathbf{G},\mathbf{Q})$
    \State \quad $\Phi \leftarrow \mathbf{G}_{EY}^\top \mathsf{T}.\mathbf{A}_E \mathbf{G}_{EY} $
    \State \quad \textbf{return } $\mathbf{1}^\top \mathsf{T}.\mathbf{z}_E - \mathbf{1}^\top (\mathbf{Q} \circ \Phi) \mathbf{1}$
    \State \textbf{procedure} $\mathrm{ExtendInverse}(\mathbf{Q},\mathbf{u})$
    \State \quad \textbf{return} Inverse of $\mathbf{u}$ appended to $\mathbf{Q}^{-1}$
\end{algorithmic}
\end{algorithm}

More recently, many sophisticated sampling algorithms for DPPs have been explored and developed, extending the tree-based approach or otherwise. 
However, within the limited scope of this article we only address the most fundamental techniques, referring the interested reader to \cite{gartrell_scalablesampling, han2022scalable,kulesza-gillenwater, Tremblay-samplingdpp,anari-samplingdpp} and the references therein for recent developments.

\section{DPPs for Monte Carlo}
\label{sec: Monte Carlo}
\subsection{Motivation}
Let $\mu$ be a finite positive Borel measure on $\mathbb R^d$.
A fundamental question in numerical analysis is to approximate integrals $\int_{\mathbb R^d} f(x) \mu(\d x)$, where $f$ ranges over some class of functions $\mathcal F$ on $\mathbb R^d$. 
Given $N \in \mathbb N$, a \textit{quadrature} is an algorithm that outputs $N$ points $x_1,\ldots,x_N\in \mathbb R^d$ (called \textit{nodes}) and weights $w_1,\ldots,w_N \in \mathbb R$. Based on these nodes and weights, we construct an estimator for $\int f(x) \mu(\d x)$ as
\[ \sum_{i=1}^N w_i f(x_i), \quad \text{ for each } f\in \mathcal F.\]
In general, the nodes and weights can depend on $N$ and $\mu$, but should not depend on any specific $f$. Given a quadrature, the approximation error is defined as
\[\mathcal E_N(f) := \sum_{i=1}^N w_i f(x_i) - \int f(x) \mu(\d x).\]
The asymptotic behavior of $\mathcal E_N(f)$ as $N\rightarrow\infty$  reflects the performance of the corresponding quadrature.

\subsection{Existing algorithms}
Given the classical nature of the problem, many quadrature methods have been developed, including Gaussian quadrature \cite{DavisRabi,Gautschi,BrassPetras, GautschiVarga}, Monte Carlo methods \cite{MonteCarlobook, bardenet:MCMC}, Quasi-Monte Carlo methods \cite{Dick_Pillichshammer_2010, Dick_2013, Oates16}, Monte Carlo with DPPs \cite{bardenet:montecarlogaussian, OPE-AOAP, bardenet:montecarlodpp, lemoine2024montecarlomethodscompact} and Bayesian quadrature \cite{Huszar12, Briol2015}, to name a few. In this survey, we briefly review Gaussian quadrature, Monte Carlo methods and Monte Carlo with DPPs.

\subsubsection{Gaussian quadrature}
We consider the case when $\mu$ is a measure supported on the interval $I:=[-1,1]$. Let $(\varphi_k)_{k\in\mathbb N}$ be a sequence of polynomials such that for each $k\in \mathbb N$, $\varphi_k$ has degree $k$, positive leading coefficient, and such that
\[ \int_I \varphi_k(x)\varphi_l(x)\mu(\d x) =\delta_{kl}, \quad\forall k,l \in \mathbb N.\]
Such a sequence is called \emph{orthonormal polynomials} associated with $\mu$.
We define the corresponding $N$-th Christoffel-Darboux kernel by 
\[ K_N(x,y) := \sum_{k=0}^{N-1} \varphi_k(x)\varphi_k(y).\]

Gaussian quadrature \cite{DavisRabi, Gautschi, BrassPetras} outputs $N$ nodes $x_1,\ldots,x_N$ to be the zeros of $\varphi_N(x)$ (which are real and simple) and weights $w_i = K_N(x_i,x_i)^{-1}$. The error is then given by
\[\mathcal E_N(f) = \sum_{i=1}^N \frac{f(x_i)}{K_N(x_i,x_i)} - \int f(x) \mu(\d x).\]

Gaussian quadrature is particularly effective when the test function $f$ is close to a polynomial. Specifically, this method gives $\mathcal E_N(f) =0$ for every polynomial $f$ of degree less than $2N-1$, and $\mathcal E_N(f)$ decays exponentially fast when $f$ is analytic (e.g., see \cite{GautschiVarga}). However, optimal rates of decay for $\mathcal E_N(f)$ are still unknown when $f$ is less regular. For instance, when $f$ is $C^1$, it is only known that $\mathcal E_N(f) = O(N^{-1})$, based on the classical Jackson's approximation theorem.

Another drawback of Gaussian quadrature is its limitation in higher-dimensional spaces. When the dimension $d\ge 2$, the zeros of multivariate polynomials form hypersurfaces, making it meaningless to use these zeros as nodes. Some modifications address this issue, such as considering $\mu = \otimes_{j=1}^d \mu_j$ as a product measure, where each $\mu_j$ is supported on $I$ and constructing a grid of nodes using $d$
one-dimensional Gaussian quadratures. However, this method suffers from the curse of dimensionality.

\subsubsection{Monte Carlo methods}
Monte Carlo methods \cite{MonteCarlobook} refers to picking up the nodes as the realization of random points in $\mathbb R^d$. A standard algorithm in this vein is the so-called \textit{importance sampling}. We consider the case when $\mu(\d x) = \omega(x) \d x$ supported in the hypercube $I^d$. Importance sampling outputs $N$ nodes $X_1,\ldots,X_N$ as i.i.d. samples from a \emph{proposal density} $q(x)\d x$ and 
\[ w_i = \frac{1}{N} \frac{\omega(X_i)}{q(X_i)} \cdot\]
The approximation error is then
\[\mathcal E_N(f) = \frac{1}{N}\sum_{i=1}^N \frac{\omega(X_i)}{q(X_i)} f(X_i) - \int_{I^d} f(x) \omega(x)\d x.\]

A simple calculation shows that $\mathbb E[\mathcal E_N(f)] = 0$.
If
\[ \sigma^2_f := \var_{X\sim q} \Big [\frac{\omega(X)f(X)}{q(X)}  \Big ] < \infty \]
then the classical CLT for independent random variables gives
\[  \sqrt{N} \mathcal E_N(f) \overset{law}{\longrightarrow} \mathcal N(0,\sigma_f^2).\]
In particular, $\var[\mathcal E_N(f)] = O(N^{-1})$.
Thus the typical order of magnitude of $\mathcal E_N(f)$ for the Monte Carlo method is $N^{-1/2}$,
which represents a slow decay rate for classical independent samplers. 

\subsection{Monte Carlo with DPPs}

In \cite{OPE-AOAP}, the authors show that using DPPs for Monte Carlo methods can give better decay rates for $\var[\mathcal E_N(f)]$. Their approach is based on the theory of (multivariate) orthogonal polynomials.

To elaborate, let $\mu$ be a measure supported in $I^d$.
We define an inner product associated with $\mu$
\[ \langle f,g \rangle_\mu := \int f(x)\overline{g(x)} \mu(\d x).\]
Consider the sequence of monomials $x_1^{k_1}\ldots x_d^{k_d}, k_i \in \mathbb N$, taken in the graded lexical order. Applying the Gram-Schmidt procedure to this sequence (w.r.t. $\langle \cdot,\cdot \rangle_\mu$), we obtain the orthonormal sequence $(\varphi_k)_{k\in \mathbb N}$ associated with $\mu$.
Let $K_N(x,y)$ be the corresponding $N$-th Christoffel-Darboux kernel for the (multivariate) orthogonal polynomial ensemble
\[K_N(x,y)= \sum_{k=0}^{N-1} \varphi_k(x)\varphi_k(y).\]

Let the nodes $\{X_1, \ldots, X_N \}$ be the DPP on $I^d$ with kernel $K_N$ and reference measure $\mu$, and let the weights be $w_i = K_N(X_i,X_i)^{-1}$. Then 
\begin{equation} \label{eq:MonteCarloDPP}
    \mathcal E_N(f) = \sum_{i=1}^N \frac{f(X_i)}{K_N(X_i,X_i)} - \int_{I^d} f(x) \mu(\d x)
\end{equation}
has mean zero by construction.
Intuitively, the negative dependence in DPPs results in more cancellation in linear statistics, often leading to smaller variance. Motivated by this intuition, the authors in \cite{OPE-AOAP} demonstrated that this is indeed the case for the multivariate OPE through a careful analysis of the estimator \eqref{eq:MonteCarloDPP}. To convey their idea while maintaining simplicity, we will present a particular case from their results.

\begin{theorem}[Theorem 1 in \cite{OPE-AOAP}]
    Let $\mu(\d x) = \d x$ be the uniform measure on the hypercube $I^d$. Then for any $f\in C^1$, compactly supported in $(-1,1)^d$, the estimator \eqref{eq:MonteCarloDPP} satisfies
    \[\var[\mathcal E_N(f)] = O \Big (\frac{1}{N^{1+1/d}}\Big ) \cdot \]
\end{theorem}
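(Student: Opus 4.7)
The plan is to combine the projection-DPP variance formula with estimates on the multivariate Christoffel--Darboux kernel, exploiting the $C^1$ regularity and compact support of $f$.

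First, since $\{X_1,\dots,X_N\}$ is a projection DPP of rank $N$ with kernel $K_N$, setting $g(x) := f(x)/K_N(x,x)$ makes $\mathcal E_N(f)$ centred (its mean equals $\int K_N(x,x) g(x)\, dx - \int f = 0$ by construction of the weights). The projection property $\int |K_N(x,y)|^2 dy = K_N(x,x)$ then symmetrises the standard linear-statistic variance formula into
\[
\Var[\mathcal E_N(f)] = \frac{1}{2}\iint_{I^d\times I^d} \bigl(g(x) - g(y)\bigr)^2\, |K_N(x,y)|^2\, dx\, dy.
\]
I would split
\[
g(x) - g(y) = \frac{f(x) - f(y)}{K_N(x,x)} + f(y)\,\frac{K_N(y,y) - K_N(x,x)}{K_N(x,x)\, K_N(y,y)},
\]
so that, after applying $(a+b)^2 \le 2(a^2+b^2)$, the squared increment breaks into a ``smoothness-of-$f$'' piece and a ``smoothness-of-Christoffel-function'' piece.

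Compact support of $f$ inside $(-1,1)^d$ restricts the effective integration to a compact set $D$ strictly inside the cube. On $D$, classical bulk asymptotics for the Christoffel function with respect to Lebesgue measure give $K_N(x,x) \asymp N$ uniformly, together with a Lipschitz bound $|K_N(x,x) - K_N(y,y)| \lesssim N\,|x - y|$ (which in the tensor-product case of $I^d$ can be extracted from one-dimensional Plancherel--Rotach asymptotics for Legendre polynomials). Combined with the $C^1$ estimate $|f(x) - f(y)| \le \|\nabla f\|_\infty\, |x-y|$, both pieces of the split are bounded, up to constants depending on $f$, by
\[
\frac{1}{N^2}\iint |x-y|^2\, |K_N(x,y)|^2\, dx\, dy.
\]
The genuinely off-support contributions, where $f(y) = 0$ while $x \in D$, are handled separately by combining $|f(x)| \leq \|f\|_\infty$ with the off-diagonal decay of $K_N$ and the trace identity $\iint |K_N(x,y)|^2 dx\, dy = N$.

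The crux is thus to estimate the weighted second moment $J_N := \iint |x-y|^2\, |K_N(x,y)|^2\, dx\, dy$. The input I would invoke is the concentration of $|K_N(x,y)|^2$ within distance $O(N^{-1/d})$ of the diagonal, so that the factor $|x-y|^2$ effectively contributes $N^{-2/d}$; multiplied by the total mass $N$ coming from the trace identity, this yields $J_N = O(N^{1-2/d})$. Combined with the $N^{-2}$ prefactor inherited from $K_N(x,x)^{-2}$, one obtains $\Var[\mathcal E_N(f)] = O(N^{-1-2/d})$, which is stronger than, and in particular implies, the claimed $O(N^{-1-1/d})$. The main obstacle is this quantitative near-diagonal concentration estimate for the multivariate Christoffel--Darboux kernel: for the uniform measure on $I^d$ it can be made rigorous via the tensor-product structure and scaling limits of one-dimensional orthogonal polynomials, but a clean statement that fits directly into the symmetrised-integral approach above seems to require nontrivial input from the general theory of Christoffel functions.
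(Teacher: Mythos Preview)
The paper does not prove this theorem; it merely quotes it as ``Theorem 1 in \cite{OPE-AOAP}''. So there is no proof here to compare against, and the question is whether your sketch stands on its own.

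Your setup is correct: the symmetrised variance identity
\[
\Var[\mathcal E_N(f)] = \tfrac12\iint (g(x)-g(y))^2\,|K_N(x,y)|^2\,\d x\,\d y,\qquad g=f/K_N(\cdot,\cdot),
\]
holds for projection DPPs, and your split of $g(x)-g(y)$ together with bulk bounds $K_N(x,x)\asymp N$ and $|K_N(x,x)-K_N(y,y)|\lesssim N|x-y|$ on $\supp f$ correctly reduces everything to controlling
\[
J_N:=\iint |x-y|^2\,|K_N(x,y)|^2\,\d x\,\d y.
\]

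The gap is your estimate of $J_N$. The heuristic ``$|K_N(x,y)|^2$ concentrates at scale $N^{-1/d}$, hence $J_N\approx N^{-2/d}\cdot N$'' is wrong because it ignores the heavy off-diagonal tail of the Christoffel--Darboux kernel. Already for $d=1$ the one-variable Christoffel--Darboux formula gives
\[
(x-y)^2 K_N(x,y)^2=(k_{N-1}/k_N)^2\bigl(\varphi_N(x)\varphi_{N-1}(y)-\varphi_{N-1}(x)\varphi_N(y)\bigr)^2,
\]
so by orthonormality $J_N=2(k_{N-1}/k_N)^2\to 1/2$, a constant, not the $O(N^{-1})$ your heuristic predicts. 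More generally, your claimed final rate $O(N^{-1-2/d})$ is incompatible with the central limit theorem in \cite{OPE-AOAP}, which shows that $N^{(1+1/d)/2}\mathcal E_N(f)$ has a nondegenerate Gaussian limit; the variance rate $N^{-1-1/d}$ is sharp.

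The approach is salvageable, but $J_N$ must be bounded by $O(N^{1-1/d})$, not $O(N^{1-2/d})$, and the mechanism is algebraic rather than ``concentration''. Writing $J_N=\sum_{i=1}^d\|[M_{x_i},\mathcal K_N]\|_{\mathrm{HS}}^2$, the three-term recurrences for multiplication by each coordinate imply that $[M_{x_i},\mathcal K_N]$ is supported on pairs of multi-indices straddling the boundary of the index set $\{0,\dots,N-1\}$ in the graded order; that boundary has $O(N^{1-1/d})$ elements with $O(1)$ recurrence coefficients, yielding $J_N=O(N^{1-1/d})$ and hence the correct $\Var[\mathcal E_N(f)]=O(N^{-1-1/d})$. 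This commutator/recurrence step is the genuine content you are missing; the ``near-diagonal concentration'' picture does not deliver it.
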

In fact, the authors further demonstrated that the rate mentioned above eventually applies to a larger class of measures $\mu$, and presented central limit theorems suited for Monte Carlo integration. For a more general result, we refer readers to the paper \cite{OPE-AOAP}. 


\section{DPPs and directionality in data}
\label{sec: GDP}
\subsection{A parametric model}
 Directionality in data pertains to isotropy-breaking -- namely, the degree of dependency between points (for instance, repulsion) is much stronger in a few special directions compared to others. Thus, directionality corresponds to a type of (low dimensional) structure on data (compare, e.g., to sparsity), and is naturally applicable to learning scenarios such as clustering where such distinguished directions are of canonical importance. In \cite{GDP} the authors considered Gaussian Determinantal Processes (abbrv. GDP) as a well-structured parametric model of DPPs, with particular focus on exploring isotropy versus directionality in data.

For a $d \times d$ non-negative definite matrix $\Sigma$ (referred to as the scattering matrix), we define the  Gaussian Determinantal Process (GDP) as a translation-invariant determinantal process on $\R^d$ with kernel given by
$$
K(x-y)=\frac{1}{(2\pi)^{\frac{d}{2}} \sqrt{\det{\Sigma}}}\exp\big(-\frac{1}{2}(x-y)^\top \Sigma^{-1}(x-y)\big)\,, \quad x,y \in \R^d\,.
$$
and the standard Lebsegue measure on $\R^d$ as the background measure. It can be shown via Fourier analytic arguments that the Macchi-Soshnikov Theorem guarantees the existence of a GDP for any such scattering matrix $\Sigma$.

\subsection{A spiked model for DPPs and directionality in data }
The isotropic case of the GDP, where the dependency structure of the points is the same in all directions, corresponds to the choice $\Sigma = \sigma^2 I_d$, where $\sigma$ is a scalar and $I_d$ is the $d\times d$ identity matrix. For purposes of clarity, let us look at GDPs with average one particle per unit volume; this corresponds to \[K(0)=\frac{1}{(2\pi)^{\frac{d}{2}} \sqrt{\det{\Sigma}}}=1.\] Under this normalization, the truncated pair correlation function $\bar{\rho}_2(x,y)=-|K(x,y)|^2$ is given by \[\bar \rho_2(x,y)= - \exp\big(- (x-y)^T \Sigma^{-1} (x-y)  \big)\,.\] Therefore, if $x - y$ is well-aligned with an eigenvector of $\Sigma$ corresponding to a relatively large eigenvalue, then the magnitude of $\bar{\rho}_2(x, y)$ remains significant even when $\|x - y\|$ is large. As such, in these directions, the spatial correlation extends over a much longer range (compared to others).

This observation forms the basis of a spiked DPP model, analogous to the widely-used spiked covariance model \cite{BBP}, where the covariance matrix $\Sigma$ is modeled as a rank-one perturbation of the identity: $\Sigma = \text{Id} + \lambda uu^\top$, with $\|u\|=1$ and $\lambda \geq 0$. In this GDP setting, the directional structure is captured by the rank-one perturbation. In the context of GDPs, it is natural to consider scenarios where the presence of the spike does not alter the mean density of points compared to the isotropic case, where the scattering matrix is the identity. If this condition does not hold, the spike's presence can be detected by estimating the density. Maintaining the same mean density implies that $\det(\Sigma)$ is the same for both the null and alternative cases; this corresponds to shear-type transformations of the ambient Euclidean space.

Putting together the above observations, \cite{GDP} proposes the following spiked model for DPPs:
\begin{equation}
\label{eq:spike}
(2\pi) \cdot \Sigma=(1+\lambda)^{-\frac1{d-1}}(I_d-uu^\top) + (1+\lambda) uu^\top\,, \quad \|u\|=1\,,
\end{equation}
where $\lambda\ge 0$ is the strength parameter and $uu^\top$ is the spike. Notice that when $\lambda =0$, we are reduced to the isotropic setting $\Sigma=I_d$ (the factor $2 \pi$ is just a convenient normalization). 

\subsection{Inference on spiked models for GDP}
Consider a given collection of data points $\{X_1,\ldots,X_n\} \subset B_d(0;R)$ (where  $B_d(0;R)$ is the ball with centre 0 and radius $R$),  our goal is to infer distinguished directions along which there are long range dependencies in the data. We aim to do this in the framework of the GDP model. 

Inferential procedures on the GDP, as laid out in \cite{GDP}, are based on statistic $\hat \Sigma$ of  given by the $d \times d$ matrix
$$
\hat \Sigma := |B(0;1)|\frac{r_{n,d}^{d+2}}{d+2} I_d - \frac{1}{|B_d(0;R-r_{n,d})|}\sum_{i \in \mathcal{N}_0} \sum_{j \in \mathcal{N}_i} (X_i-X_j)(X_i-X_j)^\top ,
$$
where, for each $1\le i \le N, \, \mathcal{N}_i=\{j\,:\,j\ne i,  \,\|X_i-X_j\| < r_{n,d} \}$ is the  set of other observations that are at distance at most $r$ from it, and $\mathcal{N}_0=\{j\,:\, \|X_j\| < R-r_{n,d} \}$ is the set of \textit{interior} data points in $B(0;R)$. Here $|A|$ denotes the volume of a measurable set $A \subset \R^d$, and $r_{n,d}$ is a parameter chosen to be $r_{n,d}= c \cdot \sqrt{d \log n}$ for some constant $c$.

Setting \[\mathfrak{r}_{n,d}:=\frac{d^2(c \sqrt{\log n})^{d+1}}{\sqrt{n}}\] and denoting by $\chi(B)$ the indicator of the event $B$, it is demonstrated in \cite{GDP} that the test  
\[\psi_\delta=\chi\big\{ (2\pi)\|\hat \Sigma\|_{\mathrm{op}}>1+ \frac{\mathfrak{r}_{n,d}}{\delta}\big\}\]  
detects the presence of a spike with power $\delta$ as soon as the spike strength $\lambda$ is above the threshold $\lambda_{\mathrm{thresh}}=2\mathfrak{r}_{n,d}$. In the event that a spike is detected, the spike direction can be effectively estimated by the maximal eigenvector of $\hat \Sigma$. It has been demonstrated in \cite{GDP} that these statistical procedures achieve parametric rates in fixed dimensions. This raises natural and interesting questions vis-a-vis the study of GDP type models and related statistical procedures in a high dimensional setting, wherein new kinds of random matrix ensembles are expected to emerge.

\subsection{Clustering and random matrices}
A key application of GDP based methodology, as outlined above, is to clustering problems, and more generally, the problem of dimensionality reduction. A natural comparison would be to classical dimension reduction methods based on Principal Component Analysis (PCA), wherein low dimensional approximation to data is achieved via projections onto the principal components (i.e., principal eigendirections) of the sample covariance matrix. 

It is well known that the principal directions in PCA correspond to the directions of maximal variance in the data. In \cite{GDP}, dimension reduction methodology based on finding directions of maximal repulsion was proposed, based on the GDP as an ansatz. In this vein, we obtain dimension reduction by projecting onto the principal eigendirections of the matrix $\hat \Sigma$ as above. The performance of this approach has been demonstrated on the celebrated Fisher's Iris dataset, which is a prototypical benchmark for clustered data \cite{GDP}. 

\begin{figure}[h]
    \centering
    \includegraphics[width=0.49\linewidth]{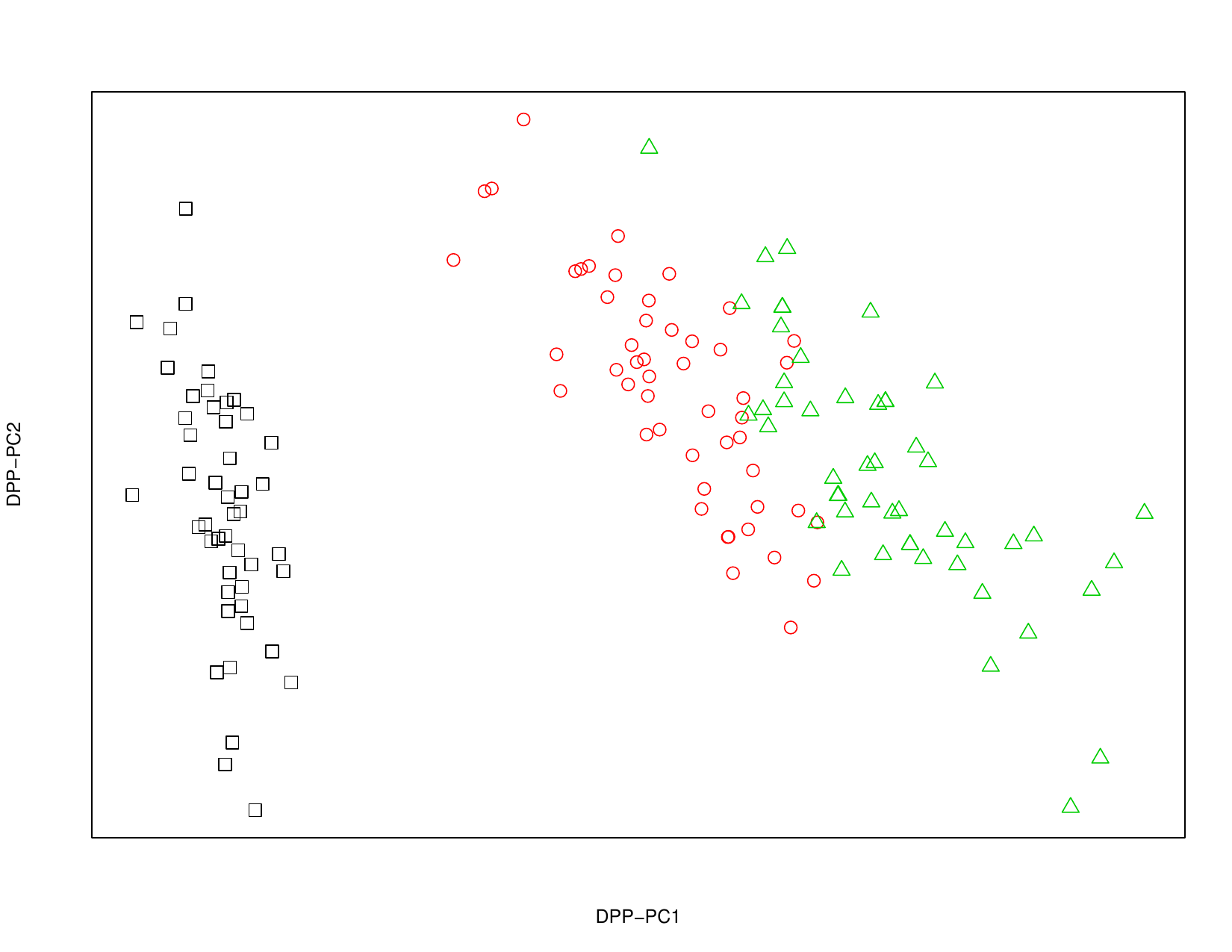}
    \includegraphics[width=0.49\linewidth]{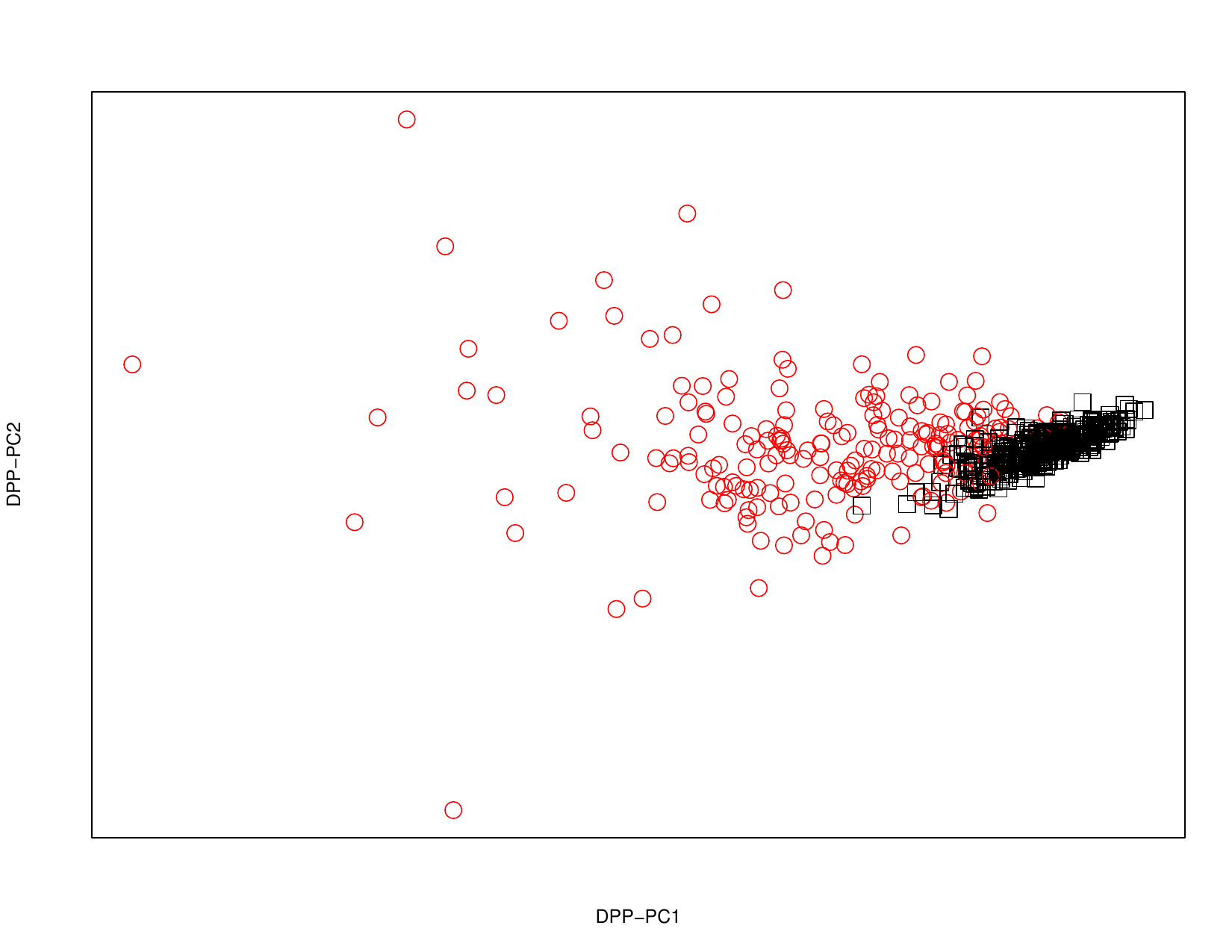}
    \caption{Left: Iris-DPP \qquad \qquad Right: W-DPP \qquad (c.f. \cite{GDP})}
    \label{fig:iris}
\end{figure}

It may be noted that the essential part of the estimator $\hat \Sigma$ which determines its eigendirections (thereby playing the central role in clustering and dimension reduction methods based on it) is the component 
\[ \tilde \Sigma = \sum_{i \in \mathcal{N}_0} \sum_{j \in \mathcal{N}_i} (X_i-X_j)(X_i-X_j)^\top .\]

Since the data points $\{X_1,\ldots,X_n\}$ are random, $\tilde \Sigma$ is a random matrix. The GDP model, and its applications to clustering and dimension reduction, therefore naturally leads to intriguing questions in random matrix theory. In particular, the spiked model \eqref{eq:spike} leads to a new kind of spiked random matrix model featuring a kernelized truncation (based on mutual separation).

It maybe noted that, even if the data points $\{X_1,\ldots,X_n\}$ are independent, random matrices of the form $\tilde \Sigma$ have dependent entries (due to the truncation in the definition). However, as a random matrix model, it still offers a significant degree of structure so that hope for analytical progress is still reasonable. Indeed, in \cite{ghosh-mukherjee-talukdar}, the authors investigate the most basic setting for such kernelized random matrix ensembles, where the points $\{X_1,\ldots,X_n\}$ are i.i.d., and establish the emergence of novel families limiting spectral distributions that generalize the classical Marcenko-Pastur laws, and beyond. The methods involve crucial ingredients from free probability theory. We expect that, motivated by the GDP ansatz and its methodological application, the study of such kernelized ensembles will lead to novel and interesting problems in random matrix theory with dependent entries.


\section{DPPs as a sampling device in machine learning}
\label{sec: coreset}
In recent years, DPPs have emerged as a powerful sampling technique for various machine learning applications. This includes, but is not limited to, applications of foundational importance such as Stochastic Gradient Descent (abbrv. SGD), the general problem of sampling coresets, sampling for recommender systems, to name a few. 

Roughly speaking, the diversity accorded by a determinantal sample leads to stable statistical outcomes, leading to downstream improvements in performance guarantees and convergence rates in a variety of applications. In addition to applications, this approach gives rise to several natural and interesting mathematical problems related to stochastic properties of DPPs (such as concentration phenomena), some of which have seen activity and progress in the recent past.

In this section, we undertake a discussion of DPPs as a sampling device, and the mathematics and the applications surrounding this paradigm.

\subsection{Coresets in machine learning}


In machine learning, learning tasks are often formulated as an optimization problem of minimizing a suitable loss function \cite{Shalev-Shwartz_Ben-David_2014}. Let $\mathcal X$ be a set of size $N$ (called the data set) and $\mathcal F$ be a family of real-valued functions on $\mathcal X$. In this survey, we will focus on a loss function of the following form
\begin{equation} \label{eq:loss}
    L(f) = \sum_{x\in \mathcal X} f(x), \quad f \in \mathcal F.
\end{equation}
This setting includes many classical learning tasks. Here are some examples.

\begin{example}[$k$-means] \label{eg:k-mean}
    Let $k\in \mathbb N$ and $\mathcal F$ contain functions of the following form
    \[ f_{\mathcal C}(x) = \min_{q\in \mathcal C} \|x-q\|_2^2,\]
    where $\mathcal C$ is a set of $k$ points (called \textit{cluster centers}) in $\mathbb R^d$. The $k$-means problem refers to finding a cluster $\mathcal C^*$ that minimizes \eqref{eq:loss}.
\end{example}

\begin{example}[linear regression] \label{ex:linear}
    Let $\mathcal X = \{x_i = (y_i,z_i) \in \mathbb R^d\times \mathbb R: 1\le i \le N \} $. Linear regression corresponds to minimizing \eqref{eq:loss} over 
    \[ \mathcal F = \{(y,z)\mapsto |\langle a, y \rangle + b - z|^2 , a \in \mathbb R^d, b \in \mathbb R\}.\]
\end{example}

When the data size $N$ is large, the computational cost can become prohibitive, and the optimization problem often becomes intractable. A natural approach is to reduce the amount of data by using only a manageable number of representative samples. This idea is formalised by the notion of \textit{coreset} (see \cite{bachem2017coresetML}).
\begin{definition}[Coresets]
    Let $\mathcal S$ be a subset of $\mathcal X$ and $\{w(x),x\in \mathcal S\}$ be some weights associated with $\mathcal S$. We define
    \[L_{\cS}(f):= \sum_{x\in \cS} w(x) f(x).\]
    Given $\varepsilon>0$,  $(\cS,w)$ is called an $\varepsilon$-coreset for $\mathcal F$ if
    \[ \Big |\frac{L_{\mathcal S}(f)}{L(f)} - 1 \Big | \le \varepsilon, \quad \text{ uniformly in } f\in \mathcal F.\]
\end{definition}
The existence of coresets is trivial since $\mathcal X$ with weights $1$ is always a coreset for any $\mathcal F$. In fact, only coresets of small size are interesting. We will focus on constructing coresets of a given size $m \ll N$, for which we will refer to as \textit{coreset problem}.

Constructing deterministic coresets is a challenge, and practical coresets are often specially designed for each particular learning tasks. In fact, practitioners prefer \emph{randomized} coresets, namely, a random subset $\mathcal S$ of $\mathcal X$ being a coreset with high probability.  When $\mathcal S$ is random, a natural choice for the weights is $w(x) = \mathbb P(x\in \mathcal S)^{-1}$. This particular choice makes $L_{\mathcal S}(f)$ an unbiased estimator of $L(f)$ 
\begin{eqnarray*}
\mathbb E[L_{\mathcal S}(f)] = \mathbb E \Big [ \sum_{x\in \mathcal S} \frac{f(x)}{\mathbb P(x\in \mathcal S)} \Big ] 
=
\mathbb E \Big [ \sum_{x\in \mathcal X} \frac{f(x)}{\mathbb P(x\in \mathcal S)} \mathbf{1}_{x\in \mathcal S}\Big ] 
= \sum_{x\in \mathcal X} \frac{f(x)}{\mathbb P(x\in \mathcal S)}\mathbb E[\mathbf{1}_{x\in \mathcal S}] = L(f).    
\end{eqnarray*}

Let $\mathcal S$ be a random subset of $\mathcal X$ and define
\[ L_\cS(f) := \sum_{x\in \cS} \frac{f(x)}{\mathbb P(x\in \cS)}, \quad f\in \mathcal F.\]
The event that $\mathcal S$ is an $\varepsilon$-coreset for $\mathcal F$ can be rewritten as
\begin{equation} \label{eq:coreset}
    \Big \{ \Big |\frac{L_{\mathcal S}(f) - \mathbb E[L_{\mathcal S}(f)]}{\mathbb E[L_\cS(f)]} \Big | \le \varepsilon , \: \forall f\in \mathcal F \Big \}.
\end{equation}
Hence, for $\cS$ to be an $\varepsilon$-coreset of $\mathcal F$ (with high probability), one needs each $L_\cS(f)$ to highly concentrate around its mean, and then performs a chaining argument to obtain an uniform bound over $\mathcal F$.  
Thus, constructing coresets will require \textit{concentration inequalities} for each $L_{\mathcal S}(f)$ and techniques from \textit{chaining}.

\subsection{DPPs for minibatch sampling in SGD}
Before presenting results on DPP-based coresets, we briefly discuss minibatch sampling in stochastic gradient descent (SGD), another important topic in machine learning which turns out to be closely related to our coreset problem. 

Consider the problem of minimizing an empirical loss
\[ \min_{\theta \in \Theta} \frac{1}{N} \sum_{i=1}^N \ell_{\theta}(x_i),\]
where $\mathcal X:= \{x_1,\ldots,x_N\}$ is the data set, and the loss $\ell_{\theta}(\cdot)$ is differentiable in the parameter $\theta$. When $N\gg 1$, the computational cost for the gradient of the objective function becomes expensive. SGD method \cite{Robbins_SGD} refers to building an estimator for the gradient at each iteration of the gradient descent, using a small samples of data points (called a \textit{minibatch}). Given $m\ll N$, a key problem in SGD is to construct minibatches of size $m$ that keep the variance of the gradient estimators as small as possible \cite{NIPS2011_SGD, zhang2017_SGD}. More precisely, we want to construct a (random) subset $\mathcal S \subset \mathcal X$ of size $m$, such that
\[ L_\cS(\theta) := \sum_{x\in \mathcal S} w(x) \nabla_{\theta} \ell_{\theta}(x) \approx \frac{1}{N} \sum_{i=1}^N \nabla_{\theta} \ell_{\theta}(x_i) =: L(\theta),\]
and $\mathbb E|L_\cS(\theta) - L(\theta)|^2$ is small, uniformly in $\theta\in \Theta$. From this point of view, the problem of minibatch sampling in SGD has a similar flavor to the coreset problem we introduced above.

In \cite{OPE-NIPS}, the authors used DPPs to construct minibatch sampling for SGD which yields significant variance reduction, compared to independent sampling. To elaborate, we assume that the data $x_i$'s are drawn i.i.d. from a distribution $\gamma(x)\d x$ on the hypercube $[-1,1]^d$. Using a suitable multivariate OPE and spectral approximation, Bardenet et al. constructed an $N\times N$ orthogonal projection matrix $\mathbf{K}$ of rank $m$. Let $\mathcal S$ be the DPP on $\mathcal X$ defined by the kernel matrix $\mathbf K$ w.r.t. the reference measure $N^{-1} \sum_{i=1}^N \delta_{x_i}$, the estimator is then given by
\[ L_{\mathcal S_\dpp}(\theta) := \sum_{x_i \in \mathcal S} \frac{1}{\mathbf K_{ii}} \nabla_\theta \ell_{\theta}(x_i). \]
Under mild assumptions, they proved that
\begin{theorem}
    With high probability in the data set, we have
    \[ \mathbb E [L_{\mathcal S_\dpp}(\theta) | \mathcal X] = L(\theta) \quad, \quad \var[L_{\mathcal S_\dpp}(\theta)|\mathcal X] = O_P(m^{-(1+1/d)}), \quad \forall \theta \in \Theta. \]
\end{theorem}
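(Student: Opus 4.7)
The plan is to verify the two claims separately, unbiasedness being essentially a routine Horvitz–Thompson computation, and the variance bound being the substantive assertion that lifts the Monte Carlo–with–DPPs rate of Theorem 4.1 to this discrete, data-dependent setting. For unbiasedness, set $g_i := \nabla_\theta \ell_\theta(x_i)$ and note that for the DPP with kernel matrix $\mathbf{K}$ and reference measure $N^{-1}\sum_i \delta_{x_i}$, the first correlation function is $\mathbb{P}(x_i\in\mathcal{S}\mid\mathcal{X}) = \mathbf{K}_{ii}/N$. The weight $1/\mathbf{K}_{ii}$ is chosen precisely so that
\[
\mathbb{E}[L_{\mathcal{S}_\dpp}(\theta)\mid \mathcal{X}] \;=\; \sum_{i=1}^N \frac{g_i}{\mathbf{K}_{ii}}\cdot\frac{\mathbf{K}_{ii}}{N} \;=\; \frac{1}{N}\sum_{i=1}^N g_i \;=\; L(\theta).
\]

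For the variance, I would first exploit the determinantal two-point function: $\mathrm{Cov}(\mathbf{1}_{x_i\in\mathcal{S}},\mathbf{1}_{x_j\in\mathcal{S}}\mid\mathcal{X}) = -|\mathbf{K}_{ij}|^2/N^2$ for $i\neq j$, giving a closed-form quadratic expression for $\var[L_{\mathcal{S}_\dpp}(\theta)\mid\mathcal{X}]$ in which the off-diagonal contribution is negative (this is the quantitative face of negative dependence). Rewriting this expression in terms of the Hadamard square $\mathbf{K}\circ\overline{\mathbf{K}}$ reduces the problem to bounding a quadratic form whose kernel matrix has the structure of the discretized Christoffel–Darboux projection of rank $m$. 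The natural strategy is then to recognise $L_{\mathcal{S}_\dpp}(\theta) - L(\theta)$ as the discrete analogue of the estimator in Equation \eqref{eq:MonteCarloDPP} and to transfer the continuous variance bound $O(m^{-(1+1/d)})$ of Theorem 4.1 to the empirical setup, treating $g$ as a $C^1$ function of $x$ (componentwise, after fixing $\theta$) on the hypercube.

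The main obstacle is precisely this transfer step: one must show that replacing the continuous reference $\gamma(x)\mathrm{d}x$ by the empirical measure $\mu_N := N^{-1}\sum_i \delta_{x_i}$ distorts the variance by only a lower-order factor. I expect the technical core to be a concentration argument that the $m\times m$ empirical Gram matrix $G_N := [N^{-1}\sum_i \varphi_k(x_i)\overline{\varphi_\ell(x_i)}]_{0\le k,\ell < m}$ of the first $m$ $\gamma$-orthonormal polynomials is close to the identity in operator norm with high probability in $\mathcal{X}$, which is exactly what a matrix Bernstein or Chernoff bound delivers in the regime $m \ll N$ (modulo a control of the supremum norm of $\varphi_k$ on $[-1,1]^d$, e.g.\ via Christoffel function estimates). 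Once $\|G_N - I\|_{\mathrm{op}} = o(1)$, the spectral approximation producing $\mathbf{K}$ inherits both the projector property and the essentially uniform diagonal $\mathbf{K}_{ii} \asymp m/N$, so the discrete variance inherits the continuous rate up to a $1+o(1)$ factor. Assembling these ingredients yields the claim pointwise in $\theta$ under mild $C^1$-regularity of $\ell_\theta$, with the "with high probability in $\mathcal{X}$" quantifier absorbing the deviations of $G_N$ and of the diagonal entries $\mathbf{K}_{ii}$ from their nominal values.
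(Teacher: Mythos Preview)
The paper does not actually prove this theorem: it is quoted from \cite{OPE-NIPS} and the reader is explicitly referred there for the details. So there is no proof in the paper to compare your proposal against.

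That said, your outline is consistent with the construction the paper sketches. The unbiasedness computation is correct: with reference measure $N^{-1}\sum_i\delta_{x_i}$ one has $\mathbb{P}(x_i\in\mathcal S\mid\mathcal X)=\mathbf K_{ii}/N$, and the weight $1/\mathbf K_{ii}$ gives the Horvitz--Thompson identity exactly as you wrote. For the variance, your plan to view $\mathbf K$ as a data-driven discretisation of the rank-$m$ Christoffel--Darboux projector and to transfer the continuous rate of Theorem~4.1 via concentration of the empirical Gram matrix $G_N=[N^{-1}\sum_i\varphi_k(x_i)\overline{\varphi_\ell(x_i)}]$ around the identity is precisely what the phrase ``spectral approximation'' in the paper is pointing at, and matrix Bernstein is the natural tool. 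Two points of caution that your sketch leaves implicit: (i) the matrix-concentration step needs a bound on $\max_{k<m}\|\varphi_k\|_\infty$ on $[-1,1]^d$, which for multivariate orthonormal polynomials is a genuine input (Christoffel-function estimates) and drives the regime of $m$ versus $N$ in which the argument works; (ii) the statement is ``$\forall\theta\in\Theta$'', so one either needs the constants in the variance bound to be uniform in $\theta$ (which follows if $x\mapsto\nabla_\theta\ell_\theta(x)$ has $C^1$ norm bounded uniformly in $\theta$) or to state the result pointwise in $\theta$ as the paper effectively does. The $O_P$ and the ``with high probability in the data set'' qualifier absorb the randomness of $\mathcal X$ in the way you describe.
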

For more detailed discussions, we refer the readers to their paper \cite{OPE-NIPS} and the references therein. We remark that, for independent sampling minibatches of size $m$, the variance of $L_\cS(\theta)$ is of order $m^{-1}$. Thus, sampling with DPPs improves the decay rate of the variance in the exponent, which is significant.

\subsection{Importance sampling for coresets}
We come back to the problem of randomized coresets construction. 
A well-studied method to construct coresets is the so-called \emph{importance sampling} (\cite{bachem2017coresetML}, Chapter 2). Given a proposal probability distribution $q$ on $\mathcal X$ and $m\in \mathbb N$, importance sampling outputs $\mathcal S_\iid = \{X_1,\ldots,X_m\}$, where $X_i \sim_\iid q$, and weights $w(x)=(mq(x))^{-1}$. Thus, 
\begin{equation} \label{eq:coreset_iid}
    L_{\mathcal S_\iid}(f) = \frac{1}{m} \sum_{i=1}^m \frac{f(X_i)}{q(X_i)}\cdot
\end{equation}

It is easy to see that 
\[\mathbb E[L_{\mathcal S_\iid}(f)] = \mathbb E \Big [ \frac{1}{m} \sum_{i=1}^m \frac{f(X_i)}{q(X_i)}\Big ] = \mathbb E \Big [ \frac{f(X_1)}{q(X_1)} \Big ] = \sum_{x\in \mathcal X} \frac{f(x)}{q(x)}q(x) = L(f),\]
which implies $L_{\mathcal S_\iid}(f)$ is an unbiased estimator of $L(f)$, for every $f\in \mathcal F$. Moreover,
\begin{equation} \label{eq:var_iid}
    \var[L_{\mathcal S_\iid}(f)] = \frac{1}{m} \var \Big [ \frac{f(X_1)}{q(X_1)}\Big ] = \frac{1}{m} \sum_{x\in \mathcal X} \Big | \frac{f(x)}{q(x)} - L(f) \Big |^2 q(x). 
\end{equation}

A natural task is then to find a distribution $q$ that makes $\var[L_{\mathcal S_\iid}(f)]$ small uniformly in $f\in \mathcal F$. A candidate for this task is the so-called \textit{sensitivity based importance sampling}, introduced by Langberg and Schulman \cite{sensitivity}. For simplicity, let us consider the case when $\mathcal F$ consists of non-negative functions.

\begin{definition}
    The sensitivity of a point $x \in \mathcal X$ with respect to $\mathcal F$ is defined as
    \[ \sigma(x) := \sup_{f\in \mathcal F} \frac{f(x)}{L(f)} \cdot\]
    The total sensitivity is defined as $\mathfrak S := \sum_{x\in \mathcal X} \sigma (x)$.
\end{definition}

The intuition behind the notion of sensitivity is that those points which contribute more to the loss should be more likely to be sampled.
In practice, computing sensitivity could be a challenge; in fact, we often only have access to upper bounds of sensitivity, namely, a function $s:\mathcal X \rightarrow \mathbb R_+$ such that $s(x) \ge \sigma(x)$ for every $x\in \mathcal X$. A \emph{sensitivity based importance sampling scheme} is to sample $X_i$ i.i.d. from 
$q(x) := s(x)/S$, where $S:= \sum_{x\in \mathcal X} s(x).$ From \eqref{eq:coreset_iid}, we have
\begin{equation}\label{eq:importantsamplingcoreset}
    \frac{L_{\cS_\iid}(f)}{L(f)} = \frac{1}{m}\sum_{i=1}^m \frac{f(X_i)}{L(f) q(X_i)}\cdot
\end{equation}
From the definition of sensitivity and the non-negativity of $f$, we have
    \[0 \le \frac{f(x)}{L(f)q(x)} = S\frac{f(x)}{L(f) s(x)} \le S, \quad \forall x\in \mathcal  X.\]
Applying the classical Hoeffding's inequality for the sum of independent bounded random variables in \eqref{eq:importantsamplingcoreset} gives

\begin{proposition} \label{p:concentration-iid}
    For any $\varepsilon>0$ and for any $f\in \mathcal F$, we have
    \[ \mathbb P \Big ( \Big |\frac{L_{\mathcal S_\iid}(f)}{L(f)} -1 \Big | > \varepsilon \Big ) \le 2 \exp \Big ( -\frac{2m \varepsilon^2}{S} \Big ) \cdot \]
\end{proposition}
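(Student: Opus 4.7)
The plan is to recognize that the statement is essentially a direct application of Hoeffding's inequality, and the hard work has already been laid out right before the proposition: we have the representation
\[
\frac{L_{\cS_\iid}(f)}{L(f)} = \frac{1}{m}\sum_{i=1}^m Z_i, \qquad Z_i := \frac{f(X_i)}{L(f)\,q(X_i)},
\]
where the $X_i$ are i.i.d.\ from $q(x)=s(x)/S$. So the proof reduces to (i) identifying the mean of the $Z_i$, and (ii) producing deterministic upper and lower bounds on $Z_i$ that feed into Hoeffding.

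For step (i), I would recall the unbiasedness calculation carried out above: $\mathbb{E}[L_{\cS_\iid}(f)] = L(f)$, which immediately gives $\mathbb{E}[Z_i]=1$, so that the event $\{|L_{\cS_\iid}(f)/L(f)-1|>\varepsilon\}$ is exactly the event that the empirical mean of the $Z_i$ deviates from its expectation by more than $\varepsilon$. For step (ii), I would use the non-negativity of $f$ to get $Z_i\ge 0$, and use the sensitivity bound already displayed in the excerpt to conclude
\[
0 \;\le\; Z_i \;=\; S\cdot \frac{f(X_i)}{L(f)\,s(X_i)} \;\le\; S\cdot\frac{\sigma(X_i)}{s(X_i)} \;\le\; S,
\]
where the last inequality uses the defining property $s(x)\ge\sigma(x)$ of the upper envelope.

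Having bounded $Z_i$ in $[0,S]$ and identified $\mathbb{E}[Z_i]=1$, I would then invoke Hoeffding's inequality for the sum of $m$ i.i.d.\ bounded random variables to conclude
\[
\mathbb{P}\Bigl(\Bigl|\tfrac{1}{m}\sum_{i=1}^m Z_i - 1\Bigr| > \varepsilon\Bigr) \;\le\; 2\exp\!\Bigl(-\tfrac{2m\varepsilon^2}{S}\Bigr),
\]
which is the stated estimate. There is essentially no obstacle here: the only substantive step is threading the sensitivity/envelope bound to certify the uniform $[0,S]$-boundedness of the summands, and the rest is a one-line application of a classical inequality. I would, however, be careful in the write-up to emphasise that the bound $Z_i\le S$ is precisely what dictates the appearance of $S$ in the exponent of the concentration inequality, explaining the statistical role of the total sensitivity $\mathfrak S$ as the quantity governing the sample size $m$ needed to make the right-hand side small.
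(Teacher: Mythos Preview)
Your proposal is correct and mirrors the paper's own argument exactly: the paper simply records the bound $0\le f(x)/(L(f)q(x))\le S$ from the sensitivity definition and then invokes Hoeffding's inequality on the i.i.d.\ representation \eqref{eq:importantsamplingcoreset}, which is precisely what you do. Your write-up is in fact slightly more explicit than the paper's, which states the proposition immediately after the boundedness observation without a separate proof block.
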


The sharper the bound $s(x) \ge \sigma(x)$, the better performance of importance sampling. Obtaining a sharp upper bound on sensitivity could be a challenge and will depend on the specific models (see \cite{bachem2017coresetML}). 

After establishing the concentration result for each  $L_\cS(f)$,  the next step is to obtain a uniform bound over $\mathcal F$.  For this task, it is necessary to quantify the complexity of $\mathcal F$, and the uniform bound will depend heavily on that quantity. In \cite{bachem2017coresetML}, the authors use \emph{Vapnik-Chervonenkis dimension} \cite{VCdim} and \emph{pseudo-dimension} to quantify the complexity of $\mathcal F$, and then apply a chaining argument to obtain the desired upper bound. However, it is important to note that this chaining argument relies on the i.i.d. nature of the sampling scheme. For more details, we refer the readers to \cite{bachem2017coresetML}.

\subsection{DPPs as coresets}

 Intuitively, DPPs avoid situations where a data point is sampled multiple times, leading to redundancy. Furthermore, as we have already observed, DPPs with suitable kernels often result in smaller variance for linear statistics (\cite{OPE-AOAP, OPE-NIPS}). 
Thus, there is hope to improve performance by using DPPs instead of independent sampling.

In \cite{tremblay2019determinantal}, Tremblay et al. provide extensive theoretical and empirical justification for the use of DPPs to construct randomized coresets.  However, the crucial question of whether DPP-based coresets can provide a strict improvement remained open. As mentioned earlier, constructing coresets necessitates concentration inequalities and chaining, and these theories have been well-developed for independent processes. Unfortunately, corresponding results for correlated processes are less established, which could pose an obstacle for sampling with correlated processes.

In a recent result \cite{NIPS2024}, the authors confirm that, with a careful choice of kernels, DPP-based coresets can outperform independently drawn coresets. The key ingredient is a Bernstein-type concentration inequality for linear statistics of DPPs. This result was first investigated in a seminal paper \cite{BREUER2014441} by Breuer and Duits for projection DPPs. In \cite{NIPS2024}, the authors extend this result to the case of DPPs with general Hermitian kernels, and even non-symmetric kernels, which seems to go beyond the state of the art. It is worth noting that DPPs with non-symmetric kernels have recently been shown to be of significant interest in machine learning, but come with a limited theoretical toolbox \cite{gartrell2019learning, gartrell2020scalable, han2022scalable}.

\subsubsection{Concentration inequalities for linear statistics of DPPs}

In \cite{PEMANTLE_PERES_2014}, Pemantle and Peres introduced concentration inequalities for Lipschitz functionals of probability measures on $2^{\mathcal X}$ (equivalently, random subsets of $\mathcal X$) satisfying the \textit{stochastic covering property} (SCP). 
We refer the readers to \cite{PEMANTLE_PERES_2014} to the precise definition of SCP. However, we remark that the class of probability measures on $2^{\mathcal X}$ satisfying the SCP particularly includes DPPs.

For simplicity, we only state here the result for $m$-homogeneous point processes (i.e., point processes with a.s. $m$ points). For the general case, we refer the readers to \cite{PEMANTLE_PERES_2014}.
\begin{theorem}[Theorem 3.1 in \cite{PEMANTLE_PERES_2014}] \label{t:concentration-lips}
    Let $\mathcal S$ be an $m$-homogeneous point process on $\mathcal X$ satisfying the SCP. Let $F$ be a $1$-Lipschitz function (with respect to the total variation distance) on counting measures with total mass $m$ on $\mathcal X$. Then
    \[\mathbb P(F(\mathcal S) - \mathbb E[F(\mathcal S)] \ge a ) \le \exp \Big (-\frac{a^2}{8m}\Big), \quad \forall a \ge 0.\]
\end{theorem}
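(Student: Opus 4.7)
The plan is to prove this via the classical Doob martingale approach, where the stochastic covering property is precisely what allows us to control the bounded differences in a way that exploits the $m$-homogeneity.

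First, I would enumerate the ground set $\mathcal X = \{x_1, \ldots, x_N\}$ in some fixed order and build the natural filtration $(\mathcal F_k)_{k=0}^N$ with $\mathcal F_k = \sigma(\mathbf 1_{x_1 \in \mathcal S}, \ldots, \mathbf 1_{x_k \in \mathcal S})$. The Doob martingale $M_k := \mathbb E[F(\mathcal S) \mid \mathcal F_k]$ then satisfies $M_0 = \mathbb E[F(\mathcal S)]$ and $M_N = F(\mathcal S)$, so it suffices to establish a concentration bound for $M_N - M_0$.

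The heart of the argument is the estimate on the martingale differences $D_k := M_k - M_{k-1}$. Writing $p_k := \mathbb P(x_k \in \mathcal S \mid \mathcal F_{k-1})$ and letting $A_k, B_k$ denote the conditional expectations of $F(\mathcal S)$ given $\mathcal F_{k-1}$ together with $\{x_k \in \mathcal S\}$ and $\{x_k \notin \mathcal S\}$ respectively, we have $D_k = (1-p_k)(A_k - B_k)\mathbf 1_{x_k \in \mathcal S} - p_k(A_k - B_k)\mathbf 1_{x_k \notin \mathcal S}$. Here the SCP intervenes: conditionally on $\mathcal F_{k-1}$, it furnishes a coupling between the laws of $\mathcal S$ under $\{x_k \in \mathcal S\}$ and $\{x_k \notin \mathcal S\}$ such that the two configurations differ by a single swap (one point added, one removed, since $m$-homogeneity forces the cardinalities to match). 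Since $F$ is $1$-Lipschitz in total variation, this forces $|A_k - B_k| \le 2$, and consequently the predictable quadratic variation satisfies
\[ \mathbb E[D_k^2 \mid \mathcal F_{k-1}] = p_k(1-p_k)(A_k - B_k)^2 \le 4 p_k(1-p_k) \le 4 p_k. \]

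The final step is a martingale concentration inequality. One cannot apply vanilla Azuma-Hoeffding directly with the deterministic bound $|D_k| \le 2$, since that would only yield $\exp(-a^2/(8N))$. Instead I would invoke a Freedman-type (or Bennett-type) inequality, which gives exponential concentration in terms of the predictable quadratic variation $\langle M\rangle_N = \sum_k \mathbb E[D_k^2 \mid \mathcal F_{k-1}]$. The crucial observation is that $m$-homogeneity forces $\sum_{k=1}^N \mathbf 1_{x_k \in \mathcal S} \equiv m$, so taking conditional expectations stage by stage yields $\sum_k p_k \le m$ almost surely (this is where the $m$ appears in the denominator, replacing $N$). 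Hence $\langle M \rangle_N \le 4m$, and Freedman's inequality delivers $\mathbb P(M_N - M_0 \ge a) \le \exp(-a^2/(8m))$.

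The main obstacle I anticipate is the precise invocation of the SCP coupling at each conditioning step: one must verify that the coupling provided by SCP is compatible with conditioning on the past filtration $\mathcal F_{k-1}$, and that the resulting one-swap bound $|A_k - B_k| \le 2$ holds uniformly. This is essentially the content of the Pemantle-Peres framework, and once it is in place the rest is a fairly standard exercise combining predictable-quadratic-variation bounds with the $m$-homogeneity constraint.
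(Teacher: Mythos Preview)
Your approach has a genuine gap at the quadratic-variation step. You claim that $m$-homogeneity forces $\sum_{k=1}^N p_k \le m$ almost surely, where $p_k = \mathbb P(x_k \in \mathcal S \mid \mathcal F_{k-1})$. This is false: take $N=2$, $m=1$, and $\mathcal S$ uniform on the two singletons (a rank-$1$ projection DPP, hence SCP). Then $p_1 = 1/2$, and on the event $\{x_1 \notin \mathcal S\}$ one has $p_2 = 1$, so $p_1 + p_2 = 3/2 > m$. In general the compensator $\sum_k p_k$ has \emph{expectation} $m$ but admits no almost-sure bound of that size, and Freedman's inequality needs an almost-sure bound on $\langle M\rangle_N$. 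The argument does not close as written, and even if the variation bound held, Freedman would produce an extra linear term in the denominator rather than the clean $\exp(-a^2/(8m))$.

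The paper does not prove this statement (it is cited from Pemantle--Peres), but the original argument avoids the issue by using Azuma--Hoeffding on a martingale of length $m$ rather than $N$. The SCP is precisely what allows one to build such a martingale with bounded differences $|D_j|\le 2$: one reveals $\mathcal S$ in $m$ stages, one element of $\mathcal S$ per stage, and couples across the possible values of the next revealed element so that the resulting configurations differ by a single swap. Azuma--Hoeffding with $m$ increments in $[-2,2]$ then gives $\exp(-a^2/(8m))$ directly. Your length-$N$ filtration over the whole ground set is the wrong granularity to extract the factor $m$ in the exponent.
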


This concentration inequality has a similar flavor to the classical Hoeffding inequality, in the sense that it provides a Gaussian-tail bound for $F(\mathcal S) - \mathbb E[F(\mathcal S)]$ but does not take the variance into account. 

Since we are particularly interested in linear statistics of DPPs, we will apply Theorem \ref{t:concentration-lips} to the particular case where $\mathcal S$ is a projection DPP of rank $m$, and $F(\mathcal S) = \sum_{x\in \mathcal S} f(x) = \Lambda_{\mathcal S}(f)$. It is easy to see that if $\|f\|_\infty := \sup_{x\in \mathcal X}|f(x)|\le 1$ then $\Lambda_{\mathcal S}(f)$ is $1$-Lipschitz.

\begin{corollary} \label{c:concentration-lips}
    Let $\mathcal S$ be a projection DPP on $\mathcal X$ of rank $m$ and let $f:\mathcal X\rightarrow \mathbb R$ be such that $\|f\|_\infty \le 1$. Then
    \[\mathbb P(\Lambda_{\mathcal S}(f) - \mathbb E[\Lambda_{\mathcal S}(f)] \ge a ) \le \exp \Big (-\frac{a^2}{8m \|f\|_\infty^2}\Big), \quad \forall a \ge 0.\]
\end{corollary}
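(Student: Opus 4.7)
The plan is to derive the corollary as a direct application of Theorem \ref{t:concentration-lips}, using only two observations: (i) a projection DPP of rank $m$ is $m$-homogeneous and satisfies the SCP (both facts are recorded in the paragraph preceding the theorem), and (ii) the linear statistic $F(\mathcal{S}) := \Lambda_{\mathcal{S}}(f) = \sum_{x\in\mathcal{S}} f(x)$ is Lipschitz in the total variation distance on counting measures, with Lipschitz constant $\|f\|_\infty$.

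First I would verify the Lipschitz property. If $\mu$ and $\nu$ are two counting measures on $\mathcal{X}$ with total mass $m$, then one can pass from $\mu$ to $\nu$ by removing $k$ points and adding $k$ other points, where $k$ is proportional to the total variation distance $d_{TV}(\mu,\nu)$. Each such swap changes $F(\cdot) = \sum f(x)$ by at most $2\|f\|_\infty$, and with the standard normalization of $d_{TV}$ used in \cite{PEMANTLE_PERES_2014} this gives $|F(\mu)-F(\nu)| \le \|f\|_\infty \cdot d_{TV}(\mu,\nu)$. In particular, the rescaled statistic $\widetilde F := F/\|f\|_\infty$ is $1$-Lipschitz in total variation, which is exactly the hypothesis needed for Theorem \ref{t:concentration-lips}.

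Next I would invoke Theorem \ref{t:concentration-lips} on the $m$-homogeneous projection DPP $\mathcal{S}$ with functional $\widetilde F$. This yields
\[
\mathbb P\bigl(\widetilde F(\mathcal{S}) - \mathbb E[\widetilde F(\mathcal{S})] \ge b \bigr) \le \exp\Bigl(-\tfrac{b^2}{8m}\Bigr), \quad \forall b \ge 0.
\]
Finally, substituting $b = a/\|f\|_\infty$ and multiplying the event through by $\|f\|_\infty$ transforms this into the advertised bound
\[
\mathbb P\bigl(\Lambda_{\mathcal{S}}(f) - \mathbb E[\Lambda_{\mathcal{S}}(f)] \ge a\bigr) \le \exp\Bigl(-\tfrac{a^2}{8m\|f\|_\infty^2}\Bigr).
\]

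There is no real obstacle in this argument once Theorem \ref{t:concentration-lips} is granted; the only point that deserves care is the first step, namely fixing the convention for $d_{TV}$ on counting measures of equal mass and checking that an atomic swap has unit distance so that the Lipschitz constant comes out exactly $\|f\|_\infty$. Once this bookkeeping is done, the scaling trick handles the $\|f\|_\infty^2$ factor in the exponent for free, and the hypothesis $\|f\|_\infty \le 1$ in the statement is only used to ensure that the reduction $\widetilde F = F/\|f\|_\infty$ is well-defined and that $F$ remains bounded on the (finite) configurations of $\mathcal{S}$.
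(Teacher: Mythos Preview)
Your proposal is correct and follows essentially the same approach as the paper: the paper does not give a separate proof of the corollary but simply notes, in the sentence preceding it, that a projection DPP of rank $m$ is $m$-homogeneous with SCP and that $\Lambda_{\mathcal S}(f)$ is $1$-Lipschitz when $\|f\|_\infty \le 1$, so Theorem~\ref{t:concentration-lips} applies directly. Your rescaling $\widetilde F = F/\|f\|_\infty$ is the standard way to recover the $\|f\|_\infty^2$ factor in the exponent, and is implicit in the paper's formulation.
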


Let us apply Corollary \ref{c:concentration-lips} to the setting of coresets. We remark that $L_{\mathcal S}(f)$ could be realized as a linear statistic
\begin{equation} \label{eq:li-stat}
    L_{\mathcal S}(f) = \sum_{x\in \mathcal S} \frac{f(x)}{\mathbb P(x\in \mathcal S)} 
    = \sum_{x\in \mathcal S} \frac{f(x)}{K(x,x)} 
    = \Lambda_{\mathcal S}\Big ( \frac{f(\cdot)}{K(\cdot,\cdot)}\Big ) ,
\end{equation}
where we used the fact that if $\cS$ is a DPP with kernel $K$ and the background measure is the counting measure on $\mathcal X = \{x_1,\ldots,x_N\}$, then $\mathbb P(x\in \cS) = K(x,x), \: \forall x\in \mathcal X$.
Using \eqref{eq:li-stat}, one has
\begin{equation} \label{eq:coreset-linearstat}
    \mathbb P \Big (\Big |\frac{L_{\mathcal S}(f)}{L(f)}-1\Big | \ge \varepsilon \Big ) 
= \mathbb P \Big (\Big | \Lambda_{\mathcal S}\Big (\frac{f(\cdot)}{L(f) K(\cdot,\cdot)} \Big ) - \mathbb E \Lambda_{\mathcal S}\Big (\frac{f(\cdot)}{L(f)K(\cdot,\cdot)} \Big ) \Big | \ge \varepsilon \Big ) \cdot
\end{equation}
If $|\mathcal S| = m$ a.s., then $K(x,x)=\mathbb P(x \in \mathcal S)$ is typically $m/N$. For a bounded non-negative function $f$, $L(f)$ is of order $N$. Thus,
\[ \Big \| \frac{f(\cdot)}{L(f) K(\cdot,\cdot)}  \Big \|_\infty \sim \frac{1}{m} \cdot \]
Applying Corollary \ref{c:concentration-lips} will give a concentration inequality of the form
\begin{equation} \label{eq:concentration-coreset-iid}
    \mathbb P \Big (\Big |\frac{L_{\mathcal S}(f)}{L(f)}-1\Big | \ge \varepsilon \Big ) \le 2 \exp (-Cm\varepsilon^2 ),
\end{equation}
for some constant $C>0$ (possibly depending on $f$), which is not better than independent sampling (c.f. Proposition \ref{p:concentration-iid}). One possible reason is that the concentration inequalities in Theorem \ref{t:concentration-lips} and Proposition \ref{c:concentration-lips} do not take the variance into account; meanwhile the variance reduction for linear statistics of DPPs is one of the main motivation for sampling with DPPs.
Thus, a concentration inequality for linear statistics of DPPs which takes variance into account is a desire. As we introduced earlier, such results have been investigated in \cite{BREUER2014441, NIPS2024}. 
\begin{theorem}[Hermitian kernels]\label{t:concentration-dpp}
    Let $\mathcal S$ be a DPP on $\mathcal X$ with a Hermitian kernel $K$. Let $f:\mathcal X \rightarrow \mathbb R$ be a bounded test function, then
    \[ \mathbb P(|\Lambda_{\mathcal S}(f) - \mathbb E \Lambda_{\mathcal S}(f) | \ge a ) \le 2 \exp \Big ( - \frac{a^2}{4A\var[\Lambda_{\mathcal S}(f)]} \Big ), \quad \forall 0 \le a \le \frac{2A \var[\Lambda_{\mathcal S}(f)]}{3\|f\|_\infty}, \]
    where $A>0$ is an universal constant.
\end{theorem}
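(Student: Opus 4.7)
The plan is to derive the stated Bernstein inequality via the classical Cram\'{e}r--Chernoff method, with the exponential moment generating function controlled through the Fredholm determinant identity for DPP Laplace transforms (the Proposition in Section \ref{sec: DPP}). Throughout, $f$ is taken to be real-valued and bounded.

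I would start by writing the centred cumulant generating function
\[\Psi(t) := \log \mathbb E\bigl[e^{t(\Lambda_{\cS}(f) - \mathbb E\Lambda_{\cS}(f))}\bigr] = \log\det\bigl(I + M_{e^{tf}-1}\mathcal K\bigr) - t\,\text{tr}(M_f\mathcal K).\]
Because $K$ is Hermitian, $\mathcal K$ is self-adjoint and admits a non-negative square root; using the cyclic identity $\det(I+AB)=\det(I+BA)$ for trace-class perturbations, I would rewrite the Fredholm determinant as $\det(I + T_t)$ with the self-adjoint operator $T_t := \mathcal K^{1/2} M_{e^{tf}-1}\mathcal K^{1/2}$, so that $\log\det(I+T_t) = \sum_i \log(1+\lambda_i(T_t))$. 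Expanding both $\log(1+x) = \sum_{k\ge 1} (-1)^{k-1}x^k/k$ and $e^{tf}-1 = \sum_{j\ge 1}(tf)^j/j!$ and regrouping by powers of $t$, one checks that the order-$t$ term in $\log\det(I+T_t)$ equals $t\,\text{tr}(M_f\mathcal K) = t\,\mathbb E\Lambda_{\cS}(f)$, which cancels the centring, and that the order-$t^2$ coefficient equals $\tfrac12[\text{tr}(M_{f^2}\mathcal K) - \text{tr}((M_f\mathcal K)^2)] = \tfrac12\var[\Lambda_{\cS}(f)]$ by the explicit formula for the variance of a linear statistic of a Hermitian DPP.

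The heart of the argument is to promote this second-order identity to a uniform bound
\[\Psi(t) \le A\,t^2\,\var[\Lambda_{\cS}(f)]\qquad \text{whenever } |t|\|f\|_\infty \le c_0,\]
for absolute constants $A,c_0>0$. The key algebraic identity is $\var[\Lambda_{\cS}(f)] = \bigl\|\mathcal K^{1/2} M_f (I-\mathcal K)^{1/2}\bigr\|_{HS}^2$, obtained by splitting $I = \mathcal K + (I-\mathcal K)$ in $\text{tr}(M_{f^2}\mathcal K) - \text{tr}((M_f\mathcal K)^2)$; together with the operator bound $\|T_t\|_\op \le e^{|t|\|f\|_\infty}-1$ and the trace inequality $\sum_i |\lambda_i(T_t)|^k \le \|T_t\|_\op^{k-2}\sum_i \lambda_i(T_t)^2$, it lets one absorb all higher-order cumulants into a geometric series in $|t|\|f\|_\infty$ multiplied by the variance. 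Given this, the one-sided Chernoff bound
\[\mathbb P\bigl(\Lambda_{\cS}(f)-\mathbb E\Lambda_{\cS}(f)\ge a\bigr)\le \exp\bigl(At^2\var[\Lambda_{\cS}(f)]-ta\bigr),\]
optimized at $t^\ast = a/(2A\var[\Lambda_{\cS}(f)])$, yields the exponent $-a^2/(4A\var[\Lambda_{\cS}(f)])$; the hypothesis $a \le 2A\var[\Lambda_{\cS}(f)]/(3\|f\|_\infty)$ is precisely what ensures $t^\ast\|f\|_\infty \le c_0 = 1/3$. Replacing $f$ by $-f$ yields the matching lower tail and the factor $2$ in the stated two-sided bound.

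The main obstacle is the bootstrap step $\Psi(t)\le At^2\var[\Lambda_{\cS}(f)]$. A direct bound such as $\text{tr}(T_t^2)\le t^2\|f\|_\infty^2\,\text{tr}(\mathcal K^2)$ is far too crude, because $\text{tr}(\mathcal K^2)$ can dwarf $\var[\Lambda_{\cS}(f)]$ (for instance when $f$ is nearly constant, where the variance collapses while $\text{tr}(\mathcal K^2)$ does not). The correct estimate must instead use simultaneously the Hilbert--Schmidt and operator norms of products like $\mathcal K^{1/2}M_f\mathcal K^{1/2}$ and $(I-\mathcal K)^{1/2}M_f\mathcal K^{1/2}$, together with algebraic rearrangements of the cumulant expansion that bring out the telescoping decomposition $\mathcal K = I-(I-\mathcal K)$. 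For projection DPPs, where $\mathcal K^2=\mathcal K$ and $I-\mathcal K$ is itself a projection, this bootstrap essentially collapses to the Breuer--Duits argument; extending it to all Hermitian kernels with spectrum in $[0,1]$ (and beyond, to non-symmetric kernels) is the delicate technical step that goes past the state of the art.
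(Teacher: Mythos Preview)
The paper does not prove this theorem: it is stated as a result imported from \cite{BREUER2014441} (projection kernels) and \cite{NIPS2024} (general Hermitian and non-symmetric kernels), so there is no in-paper proof to compare against.

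That said, your outline follows exactly the route of those references: Cram\'er--Chernoff on the centred linear statistic, with the moment generating function expressed through the Fredholm determinant, cumulant expansion in $t$, and the identification of the second cumulant with $\var[\Lambda_{\cS}(f)]$. You also correctly isolate the crux, namely the uniform bound $\Psi(t)\le A t^2 \var[\Lambda_{\cS}(f)]$ for $|t|\|f\|_\infty$ small, and the algebraic identity $\var[\Lambda_{\cS}(f)]=\|\mathcal K^{1/2}M_f(I-\mathcal K)^{1/2}\|_{HS}^2$.

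Where your proposal falls short of a proof is precisely at this crux. The eigenvalue argument you sketch, $\sum_i|\lambda_i(T_t)|^k\le \|T_t\|_\op^{k-2}\sum_i\lambda_i(T_t)^2$, controls the tail of $\log\det(I+T_t)$ by $\mathrm{tr}(T_t^2)$, but $\mathrm{tr}(T_t^2)\sim t^2\,\mathrm{tr}((M_f\mathcal K)^2)$, which is \emph{not} the variance and, as you yourself note two sentences later, can be arbitrarily larger than it. So the geometric-series absorption you describe does not yet yield a bound in terms of $\var[\Lambda_{\cS}(f)]$. You then state that ``the correct estimate must instead use simultaneously the Hilbert--Schmidt and operator norms of products like $\mathcal K^{1/2}M_f\mathcal K^{1/2}$ and $(I-\mathcal K)^{1/2}M_f\mathcal K^{1/2}$, together with algebraic rearrangements\ldots'' --- which is true, but this is an acknowledgment of the difficulty rather than a resolution of it. In the projection case the decomposition $I=\mathcal K+(I-\mathcal K)$ makes every higher cumulant collapse neatly (this is Breuer--Duits); for general $0\preceq\mathcal K\preceq I$ one needs a genuinely new argument to make each cumulant factor through $\|\mathcal K^{1/2}M_f(I-\mathcal K)^{1/2}\|_{HS}$, and your proposal does not supply one. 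In short: correct strategy and correct diagnosis of the hard step, but the hard step is left undone.
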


\begin{theorem}[Non-symmetric kernels] \label{t:concentration-dpp-nonsym}
    Let $\mathcal S$ be a DPP on $\mathcal X$ with a non-symmetric kernel $K$. Let $f: \mathcal X \rightarrow \mathbb R$ be a bounded test function, then
    \[\mathbb P(|\Lambda_{\mathcal S}(f) - \mathbb E[\Lambda_{\mathcal S}(f)]| \ge a ) 
    \le 2 \exp \Big ( -{a^2 \over 4\var[\Lambda_{\mathcal S}(f)]} \Big ), 
    \: \forall 0 \le a \le {\var[\Lambda_{\mathcal S}(f)]^2 \over  40\|f\|_\infty^3 \cdot \max(1,\|K\|_\op^2) \cdot  \|K\|_* } ,\]
    where $\|\cdot\|_\op$ denotes the spectral norm and $\|\cdot \|_*$ denotes the nuclear norm of a matrix.
\end{theorem}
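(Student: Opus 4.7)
My plan is to follow the cumulant method, adapting the Bernstein-type strategy of Breuer--Duits used for Theorem \ref{t:concentration-dpp} to the non-symmetric setting, where the spectral decomposition of $K$ is unavailable. Concretely, the goal is to establish bounds of the form
\[ |\kappa_m(\Lambda_\cS(f))| \le \frac{m!}{2} \var[\Lambda_\cS(f)] \, H^{m-2}, \qquad m\ge 2, \]
for some constant $H$ depending on $\|f\|_\infty$, $\|K\|_\op$, $\|K\|_*$. Once these are in hand, a standard Bernstein-type lemma (a Markov-inequality argument combined with the estimate $\E[e^{t(\Lambda_\cS(f)-\E\Lambda_\cS(f))}] \le \exp(\var[\Lambda_\cS(f)]\,t^2/(1-tH))$ for $|t| < 1/H$) yields the sub-Gaussian tail $2\exp(-a^2/(4\var[\Lambda_\cS(f)]))$ in the regime $a \le \var[\Lambda_\cS(f)]/H$. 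Matching this regime to the stated bound dictates $H$ proportional to $\|f\|_\infty^3 \max(1,\|K\|_\op^2) \|K\|_*/\var[\Lambda_\cS(f)]$.

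\textbf{Explicit cumulant formula.} Starting from the Fredholm determinant identity of Proposition 2.5, the cumulant generating function expands as
\[ \psi(t) = \log\det(I + M_{e^{tf}-1}\mathcal{K}) = \sum_{n\ge 1} \frac{(-1)^{n+1}}{n} \Tr\big((M_{e^{tf}-1} \mathcal{K})^n\big). \]
Expanding $e^{tf} - 1 = \sum_{k\ge 1} t^k f^k / k!$ and collecting the coefficient of $t^m/m!$ yields
\[ \kappa_m(\Lambda_\cS(f)) = m! \sum_{n=1}^m \frac{(-1)^{n+1}}{n} \sum_{\substack{k_1,\ldots,k_n \ge 1\\ k_1+\cdots+k_n = m}} \frac{1}{k_1!\cdots k_n!} \Tr\big(M_{f^{k_1}} \mathcal{K} \cdots M_{f^{k_n}} \mathcal{K}\big). \]

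\textbf{Bounding traces via Schatten--H\"older.} The key inequality for the non-symmetric case is the matrix H\"older inequality in Schatten norms, $|\Tr(A_1 \cdots A_n)| \le \|A_1\|_* \prod_{i\ge 2}\|A_i\|_\op$. Applied to the factors $A_j = M_{f^{k_j}}\mathcal{K}$, and using $\|M_{f^{k_j}}\|_\op = \|f\|_\infty^{k_j}$, each trace in the above formula is controlled by $\|f\|_\infty^m \|K\|_* \|K\|_\op^{n-1}$. Substituting back, the combinatorial sum $\sum_{k_1+\cdots+k_n=m} m!/(k_1!\cdots k_n!)$ is a surjection count $n!\,S(m,n)$ (Stirling numbers of the second kind), and a Bell-number type estimate for $\sum_n (n-1)!\,S(m,n)\,\max(1,\|K\|_\op)^{n-1}$ yields a bound of the form
\[ |\kappa_m(\Lambda_\cS(f))| \lesssim m!\, \|f\|_\infty^m\, \|K\|_*\, \max(1, \|K\|_\op)^{m-1}, \]
with an absolute multiplicative constant. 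Isolating the $m=2$ term (which lower-bounds $\var[\Lambda_\cS(f)]$ and sets the scale) and re-parameterising then supplies the Bernstein hypothesis with $H$ in the form desired.

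\textbf{Main obstacle.} The chief difficulty relative to the Hermitian setting is the loss of spectral decomposition: one cannot diagonalise $\mathcal{K}$, write the DPP as a mixture of projection DPPs, and reduce $\Lambda_\cS(f)$ to a sum of independent Bernoullis. Control of cumulants must therefore be executed purely at the level of matrix products via Schatten--H\"older, which is precisely what forces the joint appearance of $\|K\|_\op$ and $\|K\|_*$ in $H$. The most delicate point is the careful bookkeeping of the Bell-number combinatorics and the factors of $(-1)^{n+1}/n$ in the cumulant expansion: naive bounds overshoot the required factorial growth of $m!/2$, so to reach the exact form of the Bernstein hypothesis one needs to exploit the interplay of trace norm, operator norm, and the symmetry of the surjection counts, rather than bounding term by term.
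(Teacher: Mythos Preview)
The paper itself does not contain a proof of this theorem: it is a survey, and Theorem~\ref{t:concentration-dpp-nonsym} is quoted from \cite{NIPS2024} (see the discussion preceding Theorems~\ref{t:concentration-dpp} and~\ref{t:concentration-dpp-nonsym}). So there is no ``paper's own proof'' to compare against directly.

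That said, your strategy is exactly the one expected for such a result, and is the natural extension of the Breuer--Duits argument the paper invokes for the Hermitian case. The Soshnikov cumulant formula you write down is correct, and in the absence of a spectral decomposition the Schatten--H\"older inequality $|\Tr(A_1\cdots A_n)| \le \|A_1\|_* \prod_{i\ge 2}\|A_i\|_\op$ is precisely the right replacement, which is why the nuclear norm $\|K\|_*$ and the operator norm $\|K\|_\op$ necessarily enter the bound. Your identification of the surjection count $n!\,S(m,n)$ and the need for a Bell-type estimate is also correct.

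One remark on the bookkeeping: to close the argument you need not only the crude cumulant bound $|\kappa_m|\lesssim m!\,\|f\|_\infty^m\,\|K\|_*\max(1,\|K\|_\op)^{m-1}$, but also an \emph{a priori} upper bound on the variance of the form $\var[\Lambda_\cS(f)] \lesssim \|f\|_\infty^2 \max(1,\|K\|_\op)\|K\|_*$ (which follows directly from the $m=2$ term of the same cumulant formula). This is what lets you rewrite the crude bound in the Bernstein shape $\frac{m!}{2}\var\cdot H^{m-2}$ with $H\propto \|f\|_\infty^3\max(1,\|K\|_\op^2)\|K\|_*/\var$; without it the powers do not match for general $m$. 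You allude to this (``isolating the $m=2$ term\dots sets the scale''), but it is worth making explicit, since it is the step that produces the somewhat unusual cube $\|f\|_\infty^3$ and the square $\var^2$ in the final range for $a$.
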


\begin{remark}
    For simplicity, we will restrict our attention to the case of Hermitian kernels. However, we remark that we always can use Theorem \ref{t:concentration-dpp-nonsym} to obtain analogous results in the non-symmetric case.
\end{remark}

Applying Theorem \ref{t:concentration-dpp} to Equation \eqref{eq:coreset-linearstat} gives 
\begin{corollary} \label{c:coreset-dpp}
      Let $\mathcal S$ be a DPP on $\mathcal X$ with a Hermitian kernel $K$ and $f:\mathcal X \rightarrow \mathbb R$ be a bounded measurable function. Then
      \[ \mathbb P \Big ( \Big |\frac{L_{\mathcal S}(f)}{L(f)} -1 \Big | \ge \varepsilon \Big ) \le 2 \exp \Big ( -\frac{\varepsilon^2}{4A} \var \Big [ \frac{L_{\mathcal S}(f)}{L(f)}\Big ]^{-1} \Big ),\]
      for $0 \le \varepsilon\le \frac{2A}{3} \cdot \var \Big [ \frac{L_{\mathcal S}(f)}{L(f)}\Big ] \cdot 
      \Big \| \frac{f(\cdot)}{L(f) K(\cdot,\cdot)} \Big \|_\infty^{-1}$.
\end{corollary}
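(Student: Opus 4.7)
The proof will be a direct application of Theorem \ref{t:concentration-dpp} after a suitable rescaling of the test function, so there is no serious obstacle here; the content is essentially to package the identification of $L_{\mathcal S}(f)/L(f)$ with a linear statistic correctly.

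The plan is to introduce the normalized test function
\[ \tilde f(x) := \frac{f(x)}{L(f) K(x,x)}, \qquad x\in \mathcal X, \]
so that, using the identification \eqref{eq:li-stat} between the coreset estimator and a linear statistic of the DPP, one has
\[ \Lambda_{\mathcal S}(\tilde f) \;=\; \frac{1}{L(f)} \sum_{x\in \mathcal S}\frac{f(x)}{K(x,x)} \;=\; \frac{L_{\mathcal S}(f)}{L(f)}. \]
Because $\mathbb P(x\in \mathcal S) = K(x,x)$ for DPPs with Hermitian kernel, the general unbiasedness computation for coresets (carried out earlier in the section) gives $\mathbb E[\Lambda_{\mathcal S}(\tilde f)] = 1$. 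Consequently, the event considered in the corollary is exactly
\[ \Big\{\big|\Lambda_{\mathcal S}(\tilde f) - \mathbb E[\Lambda_{\mathcal S}(\tilde f)]\big| \ge \varepsilon\Big\}. \]

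Next, I would invoke Theorem \ref{t:concentration-dpp} applied to the bounded test function $\tilde f$, with deviation parameter $a=\varepsilon$. This yields
\[ \mathbb P\bigl(|\Lambda_{\mathcal S}(\tilde f) - 1|\ge \varepsilon\bigr) \le 2\exp\Bigl(-\frac{\varepsilon^2}{4A\,\var[\Lambda_{\mathcal S}(\tilde f)]}\Bigr), \]
valid in the range $0\le \varepsilon \le \tfrac{2A\,\var[\Lambda_{\mathcal S}(\tilde f)]}{3\|\tilde f\|_\infty}$. Substituting back $\var[\Lambda_{\mathcal S}(\tilde f)] = \var[L_{\mathcal S}(f)/L(f)]$ and $\|\tilde f\|_\infty = \|f(\cdot)/(L(f)K(\cdot,\cdot))\|_\infty$ gives exactly the statement of Corollary \ref{c:coreset-dpp}.

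The only matter deserving care is ensuring that $\tilde f$ is genuinely a bounded real-valued function on $\mathcal X$, which is immediate under the standing assumption that $f$ is bounded and $K(x,x)>0$ for every $x$ in the support of the relevant intensity (points with $K(x,x)=0$ never belong to $\mathcal S$ and can be discarded without affecting either $L_{\mathcal S}(f)$ or $\mathbb P(x\in\mathcal S)$). Once this verification is in place, the rest of the argument is merely a change of variables; no additional probabilistic work beyond Theorem \ref{t:concentration-dpp} is required.
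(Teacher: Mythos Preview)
Your proposal is correct and follows exactly the approach indicated in the paper: the corollary is obtained by applying Theorem~\ref{t:concentration-dpp} to the identification in Equation~\eqref{eq:coreset-linearstat}, with the test function $\tilde f(\cdot) = f(\cdot)/(L(f)K(\cdot,\cdot))$ and deviation parameter $a=\varepsilon$. The paper does not spell out the details beyond this, and your write-up fills them in accurately.
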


To compare, assume that $\mathbb E[|\mathcal S|]=m$, then typically $\var[ L_{\mathcal S}(f)/L(f)] = O(m^{-(1+\delta)})$,  for some parameter $\delta>0$, depending on the specific model (see \cite{OPE-NIPS}, \cite{OPE-AOAP}). Hence, Corollary \ref{c:coreset-dpp} yields a concentration inequality of the form
\begin{equation} \label{eq:concentration-coreset-dpp}
    \mathbb P \Big ( \Big |\frac{L_{\mathcal S}(f)}{L(f)} -1 \Big | \ge \varepsilon \Big ) \le 2 \exp ( -C m^{1+\delta}\varepsilon^2  ),
\end{equation}
for some constant $C>0$, which is clearly tighter than \eqref{eq:concentration-coreset-iid}.

A limitation of Corollary \ref{c:coreset-dpp} is the range of $\varepsilon$ for which the concentration inequality is valid. Extending this result to all non-negative reals $\varepsilon$ is a desire. However, this task is really challenging and remains open.

\subsubsection{PAC bound}
As we remarked earlier, the uniform bound over $\mathcal F$ will particularly depend on the complexity of the class $\mathcal F$; thus, an universal chaining argument for all cases could be a challenge. We will restrict ourselves to some common functional classes in machine learning.

Before going to details, let us assume here an assumption, which is not harmful due to homogeneity (namely, $L_\cS(f)/L(f) = L_\cS(\lambda f)/L(\lambda f)$ for any $\lambda \neq 0$). However, this assumption will be useful later.
\begin{equation} \tag{A.0} \label{eq:lowerbound}
   \frac{1}{N}|L(f)| \ge c , \quad \text{ for some } c>0 \text{ uniformly over } \mathcal F.
\end{equation}

The first scenario is when $\mathcal F$ is a subset of a finite dimensional vector space of functions. In particular, we assume that
\begin{equation} \tag{A.1} \label{eq:finite-dim}
    \dim \span_{\mathbb R}(\mathcal F) = D < \infty.
\end{equation}
This assumption covers common situations like linear regression in Example \ref{ex:linear}, where we observe that each $f\in \mathcal F$ is a quadratic function in $(d+1)$ variables. 
Thus the dimension of the linear span of $\mathcal F$ is at most $(d+1)^2 + (d+1) + 1$. 
Another popular class is the class of \emph{band-limited functions}, originating in signal processing problems.
A function $f : \mathbb{T}^d \mapsto \mathbb{R}$ (where  $\mathbb T^d$ denotes the $d$-dimensional torus) is said to be \textit{band-limited} if there exists $B \in \mathbb N$ such that its Fourier coefficients $\hat f (k_1,\ldots,k_d) =0$ whenever there is a $k_j$ such that $|k_j| > B$. 
It is easy to see that the space of $B$-bandlimited functions is of the dimension at most $(2B+1)^d$.

Another common scenario is when $\mathcal F$ is parametrized by a finite-dimensional parameter space
\begin{equation} \tag{A.2} \label{eq:parametrize}
    \mathcal F = \{f_\theta: \theta \in \Theta\}, \text{ where $\Theta$ is a bounded subset of $\mathbb R^D$ for some $D$}.
\end{equation}
We further assume a Lipschitz condition on $\mathcal F$
\begin{equation} \tag{A.3} \label{eq:lipschitz}
   \|f_{\theta} - f_{\theta'}\|_\infty \le \ell \|\theta - \theta'\| \text{ for some $\ell>0$, uniformly on $\Theta$.}
\end{equation}
These conditions particularly cover the $k$-means problem of Example~\ref{eg:k-mean}, as well as linear/non-linear regression settings. 
For instance, for $k$-mean, each query is parametrized by its cluster centers $\mathcal C= \{q_1,\ldots,q_k\}$, which can be viewed as a parameter $(q_1,\ldots,q_k)$ in $ \mathbb R^{kd}$.  

In these two common situations, we can perform a chaining argument to obtain the following result (see \cite{NIPS2024}).

\begin{theorem}\label{t:core-union}
    Let $\mathcal S$ be a DPP with a Hermitian kernel $\bK$ on a finite set $\mathcal X = \{x_1, \dots, x_N\}$ and $m = \mathbb E[|\mathcal S|]$. Assume that for all $i\in\{1, \dots, N\}$, $\bK_{ii} \ge \rho \cdot m/N$ for some $\rho> 0$ not depending on $m,N$. 
    Let $V\ge \sup_{f\in \mathcal F} \Var[N^{-1} L_{\mathcal S}(f)]$. 
    \medskip
    
    Assuming \eqref{eq:lowerbound} and \eqref{eq:finite-dim}, we have
   \[\mathbb P
   \Big ( \exists f \in \mathcal F: 
  \Big |\frac{ L_{\mathcal S}(f)}{L(f)}-1 \Big | \ge \varepsilon \Big ) \le 2 \exp \Big (6D - {c^2 \varepsilon^2 \over 16 AV} \Big ), \quad 0 \le  \varepsilon \le  {4A\rho m V \over 3c \sup_{f\in \mathcal F}\|f\|_\infty} \cdot \]

  Assuming \eqref{eq:lowerbound}, \eqref{eq:parametrize}, \eqref{eq:lipschitz} and $|\mathcal S| \le B \cdot m$ a.s. for some $B>0$, we have
   \[\mathbb P
   \Big ( \exists f \in \mathcal F: 
  \Big |\frac{L_{\mathcal S}(f)}{L(f)}-1 \Big | \ge \varepsilon \Big ) \le  2 \exp \Big (CD- D\log \varepsilon - {c^2 \varepsilon^2 \over 16 AV}\Big ), \: 0 \le  \varepsilon \le  {4A\rho mV \over 3c \sup_{f\in \mathcal F}\|f\|_\infty} \cdot \]
  Here $A>0$ is a universal constant and $C=C(\Theta, B,\rho,\ell, c)>0$ is some constant.
\end{theorem}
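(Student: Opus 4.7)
The plan is to combine the pointwise concentration from Corollary~\ref{c:coreset-dpp} with a covering argument tailored to the structural assumption on $\mathcal{F}$. Both claimed bounds have the same pointwise factor $\exp(-c^2\varepsilon^2/(16AV))$, so the real task is to quantify the complexity contribution arising from taking a supremum over $\mathcal{F}$. Throughout, I would use assumption \eqref{eq:lowerbound} to pass from $|L_{\mathcal{S}}(f)/L(f) - 1|$ to $|L_{\mathcal{S}}(f) - L(f)|/(cN)$, which is the sup of a centered random affine expression indexed by $\mathcal{F}$.

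For the first case (assumption \eqref{eq:finite-dim}), let $W := \mathrm{span}_{\mathbb{R}}(\mathcal{F})$, so $\dim W = D$. Since $L_{\mathcal{S}}$ and $L$ are linear on $W$ and $|L(f)|/N \ge c$ on $\mathcal{F}$, I would normalize each $f$ by $L(f)/N$ so that the reduced index set lies in the affine slice $\{g \in W : L(g) = N\}$. Equipping $W$ with a norm adapted to $\|\cdot\|_\infty$ on $\mathcal{F}$, I would take a $1/2$-net of the corresponding unit ball of cardinality at most $5^D$, apply Corollary~\ref{c:coreset-dpp} with a union bound at each net point, and then iterate geometrically (the usual doubling/successive approximation trick for bounding operator norms of random linear functionals in finite dimension). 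This self-similar scheme converts the net cardinality $e^{(\log 5)\cdot D}$ into the $e^{6D}$ prefactor without introducing any $\log \varepsilon$ term, since at each scale one controls a constant-factor improvement of the error by the same concentration bound.

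For the second case (assumptions \eqref{eq:lowerbound}, \eqref{eq:parametrize}, \eqref{eq:lipschitz}), I would cover the bounded parameter set $\Theta \subset \mathbb{R}^D$ by a standard volumetric $\eta$-net of cardinality at most $(C_\Theta/\eta)^D$. By the Lipschitz hypothesis \eqref{eq:lipschitz}, this induces an $\ell\eta$-net of $\mathcal{F}$ in sup-norm. On the net I apply Corollary~\ref{c:coreset-dpp} with a union bound; to extend to all of $\Theta$, I use \eqref{eq:lipschitz} together with the hypothesis $|\mathcal{S}| \le Bm$ a.s., which yields a \emph{deterministic} Lipschitz bound on $\theta \mapsto L_{\mathcal{S}}(f_\theta) - L(f_\theta)$ with constant $\mathcal{O}(\ell \cdot Bm \cdot N/(\rho m))$. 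Choosing $\eta$ to be a small power of $\varepsilon$ balances the discretization error against the net size and yields the $-D\log\varepsilon$ correction displayed in the statement.

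The main technical obstacle is respecting the admissibility range $0 \le \varepsilon \le 4A\rho mV/(3c\sup_{f\in \mathcal{F}}\|f\|_\infty)$ inherited from Corollary~\ref{c:coreset-dpp} \emph{uniformly over the net}: every net element must satisfy an $\|\cdot\|_\infty$ bound comparable to $\sup_{f\in \mathcal{F}}\|f\|_\infty$, and its $\mathrm{Var}[N^{-1}L_{\mathcal{S}}(\cdot)]$ must remain bounded by $V$. In case 2 this is immediate from \eqref{eq:lipschitz} for $\eta$ small. In case 1 the normalization $g = Nf/L(f)$ can in principle inflate $\|g\|_\infty$, and the delicate point is that \eqref{eq:lowerbound} bounds $|L(g)|$ from below while the choice of norm on $W$ must simultaneously control $\|g\|_\infty$ and permit a small $1/2$-net; verifying that a single norm can do both while keeping the constants clean is where most of the bookkeeping lives.
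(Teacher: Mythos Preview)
The paper does not contain a proof of this theorem; it merely states that ``we can perform a chaining argument to obtain the following result'' and defers to the reference \cite{NIPS2024}. Your proposal is fully consistent with that outline: you combine the pointwise Bernstein-type bound of Corollary~\ref{c:coreset-dpp} with a covering argument adapted to each structural hypothesis on $\mathcal F$, which is exactly the strategy the paper indicates.

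A couple of remarks on your execution. In the finite-dimensional case, the phrase ``iterate geometrically'' is a slight overstatement: since $g\mapsto L_{\mathcal S}(g)-L(g)$ is \emph{linear} on $W$, a single $1/2$-net plus the standard relation $\sup_{B}|T|\le 2\max_{\mathcal N}|T|$ already suffices, with no multi-scale chaining needed; the factor $e^{6D}$ then has to come from the interaction of the normalization $g=Nf/L(f)$ with the choice of norm on $W$ (your acknowledged ``delicate point''), not from an iterated net. In the parametrized case your use of $|\mathcal S|\le Bm$ together with $\mathbf K_{ii}\ge \rho m/N$ to get a deterministic Lipschitz constant for $\theta\mapsto L_{\mathcal S}(f_\theta)$ is the natural move and explains why that extra hypothesis is present in the second statement but not the first.

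The issue you flag about the admissibility range of $\varepsilon$ is genuine: Corollary~\ref{c:coreset-dpp} constrains $\varepsilon$ via the \emph{actual} variance of each $f$, whereas the theorem's range is phrased in terms of the upper bound $V$. One has to check that replacing $\Var[L_{\mathcal S}(f)/L(f)]$ by its upper bound $V/c^2$ simultaneously weakens the exponent (harmless) and, for functions with small variance, that the sub-Gaussian bound at level $\varepsilon$ is still implied---e.g.\ because the tail at the boundary of the admissible range is already below the target. This bookkeeping is not addressed in the paper either and is presumably in \cite{NIPS2024}.
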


   For independent sampling, $\Var[N^{-1}  L_{\mathcal S}(f)]$ is of order $O(m^{-1})$. In contrast, sampling with DPPs often results in a variance of order $ O(m^{-(1+\delta)})$ for some $\delta > 0$ (e.g., see \cite{OPE-AOAP, OPE-NIPS}). 
   Therefore, the range of $\varepsilon$ for which we can apply our concentration result is $ O(m^{-\delta})$. 
    Substituting $\varepsilon = m^{-\alpha} (\alpha \ge \delta)$ into Theorem \ref{t:core-union}  yields the upper bounds $2\exp(6D - C' m^{1+\delta - 2\alpha})$ and $2\exp(CD + \alpha D \log m - C' m^{1+\delta - 2\alpha}) $, where $C$ and $C'$ are positive constants independent of $m$ and $N$. Both of them converge to $0$ as $m\rightarrow \infty$, provided $\alpha < (1+\delta)/2$. Hence, the accuracy rate $\varepsilon$ can be as small as $m^{-1/2 -\delta'/2}$, for any $0<\delta' <\delta$, which clearly outperforms the best accuracy rate of $m^{-1/2}$ for i.i.d. sampling.

The assumptions in Theorem \ref{t:core-union} are fairly moderate and reasonable. For i.i.d. sample $\mathcal S$ with expected size $m$, the probability $\mathbb P(x \in \mathcal S) = m/N$  applies to each $x\in \mathcal X$. For a DPP sample $\mathcal S$ with kernel $\bK$, we have $\mathbb P(x_i\in \mathcal S) = \bK_{ii}$. 
Therefore, assuming $\bK_{ii} \ge \rho \cdot m/N$ for each $i$ and some $\rho>0$ ensures that every point in the dataset $\mathcal X$ has a fair chance of being selected. 
This also helps prevent the estimated loss $L_{\mathcal S}(f) = \sum_{x_i\in \mathcal S} f(x_i)/\bK_{ii}$ from blowing up to infinity. In the second scenario, the assumption $|\mathcal S|\le B \cdot m$ a.s. for some $B>0$ is not strictly required but is included to simplify the presentation of the results. Indeed, for a DPP $\mathcal S$ with Hermitian kernel, it is known that $|\mathcal S|$ is equal in distribution to the sum of independent Bernoulli variables, which consequently implies that $|\mathcal S|$ highly concentrates around its mean $m$. This property allows us to relax the assumption on the size of $|\mathcal S|$ by applying a conditioning argument.  However, the  assumption $|\mathcal S| \le B \cdot m$ a.s. holds for most kernels of interest; DPPs with projection kernels being typical and significant examples. In machine learning terms, this means that the coresets are not much bigger than their expected size $m$; whereas in practice, sampling methods often produce coresets of a fixed size, as observed with projection DPPs. We refer the readers to \cite{NIPS2024} for more detailed discussion.

\section{Negative dependence for stochastic networks}
\label{sec: networks}
\subsection{Background}
The use of point processes to model stochastic network configurations has a fairly long history. Indeed, a key aspect of wireless communication networks is their spatial nature. These networks consist of numerous nodes distributed across space, typically in two- or three-dimensional Euclidean space. Each node of the network, modelling a wireless base station, broadcasts signals that interfere with one another, making it crucial to understand the distribution of signal strength across different locations, accounting for these interferences. One main objective is to design network layouts that maximize signal strength for the majority of locations.

This area is also intertwined closely with the stochastic geometry of point processes. The stochastic geometry of negatively dependent point processes has attracted considerable attention in recent years in the applied probability community, including but not limited to the problems of continuum percolation or the Gilbert disk model \cite{yogesh, Baccelli-Blasz, Blaszczyszyn_Haenggi_Keeler_Mukherjee_2018, shirai-miyoshi, ghosh_aop}.

\subsection{Stochastic Networks and hyperuniformity}
The classical and most popularly studied model for stochastic networks is based on the Poisson point process \cite{Baccelli-Blasz,Blaszczyszyn_Haenggi_Keeler_Mukherjee_2018,Haenggi}. However, Poissonian networks exhibit similar issues to independent random points, thereby limiting their performance. 

In view of this, dependent point fields have been investigated in this area; most notably the so-called \textit{Ginibre network} which is based on the Ginibre ensemble -- a DPP that is intimately related to random matrices and 2D Coulomb gases. In the celebrated work \cite{shirai-miyoshi}, the authors proposed a stochastic network based on the eigenvalues from the Ginibre ensemble, and demonstrated rigorously that such a network provides provably improved performance over standard Poissonian networks. To this end, they observed that for quantities that depend solely on the absolute values of the Ginibre points, such as the coverage probability or link success probability, we can take advantage of the fact that the process of absolute values can be equivalently described by independent gamma random variables.

However, Ginibre networks have difficulties vis-a-vis tractability and robustness issues. For instance, from an application point of view, the network should be easy to simulate, so that large scale statistical behaviour can be easily understood from empirical studies. It is reasonable to believe that many of the salutary properties of a Ginibre network are consequences of not the specific, delicate structure of the Ginibre model itself, but of the more general phenomenon of \textit{hyperuniformity}. 

Hyperuniformity refers to the suppression of fluctuations in particle numbers in a spatial point process. In Poissonian systems, the variance in the number of points within a large spatial domain increases proportionally to the volume (a phenomenon described in physics as extensive fluctuations). However, in hyperuniform systems, these fluctuations are of a smaller order relative to the volume (sub-extensive), often scaling only with the surface area of the domain. Such behaviour manifests itself in a wide array of natural systems  \cite{hough2009zeros, GhoshLebowitz, TORQUATO, torquato2, ghosh-lebowitz-cmp}, and allows for elegant and effective interpretations in terms of their \textit{power spectra}  \cite{PhysRevE,baake, ghoshlebowitz-cmp2}. We refer the readers to \cite{GhoshLebowitz, coste} for a detailed discussion on hyperuniformity, and to \cite{lacieze_hyperuniform} for statistical tests of hyperuniformity for point processes.

\subsection{Stochastic networks based on disordered lattices}
In \cite{ghosh-shirai}, the authors investigate disordered lattice networks, aiming to find a best-of-both-worlds compromise between Poissonian and Ginibre networks. As the name indicates, these networks are based on point processes that are i.i.d. random perturbations of Euclidean lattices. It is known that under very general conditions -- as soon as the tail of the perturbing random variable decays faster that $\|x\|^{-d}$ for a $d$-dimensional lattice -- the resulting disordered lattice constitutes a hyperuniform process \cite{GhoshLebowitz, Ghosh2016NumberRI}. 

By modulating the statistical law of the perturbations to vary within a parametric family of distributions (e.g. an exponential family), disordered lattices allow for the developments of a parametric statistical theory in the context of stochastic networks. In the particular setting of Gaussian perturbations, modulating the scale parameter of the Gaussian provides a smooth interpolation between stochastic networks of varying degrees of disorder, with a rigid lattice at one end and a Poisson network at the other. 

A typical instance of the network statistics we want to investigate is the so-called \textit{Signal to Interference plus Noise Ratio} (abbrv. SINR) at a typical point (let's say the origin). Due to the randomness of the network, this is a random variable, and therefore it is of interest to investigate its tails, also known as coverage probabilities:
\begin{equation} \label{eq:infiniteprod_d}
p_c(\theta, \beta) =  \P(\mathrm{SINR} > \theta)
= \E\left[\prod_{j \not= B} \left(1 +\theta
 \frac{|X_B|^{d\beta}}{|X_j|^{d\beta}}  \right)^{-1}
\right],
\end{equation} 
where $X_B$ is the location of the node of the network nearest to the origin, $d$ is the ambient dimension, and $\beta>0$ is a fixed modelling parameter related to the decay of the so-called \textit{path loss function} \cite{Baccelli-Blasz, shirai-miyoshi}.

Focusing on Gaussian disorder for the perturbed lattice networks, the authors in \cite{ghosh-shirai} demonstrate that for an appropriate choice of the disorder strength, the coverage probabilities roughly match those of a Ginibre random network, thereby achieving the \textit{best of both worlds} phenomenon aspired for. 

Interestingly, at the same disorder strength, the perturbed lattice point process appears to be the \textit{closest} to the Ginibre random point field. In order to achieve this comparison, \cite{ghosh-shirai} introduces an approach towards robust measurement of the ``distance'' between point processes by a quantitative comparison of their \textit{persistence homology}, a technique that we expect to see increasing usefulness in coming years. 

In addition to elaborate theoretical analysis demonstrating power law asymptotics for the coverage probability of disordered lattice networks and of interpolation properties discussed above, \cite{ghosh-shirai} obtains stochastic approximations to the SINR variable in various limiting regimes of interest. An interesting finding in this vein is the emergence of the Epstein Zeta Function of the underlying lattice as a key determinant of network statistics. This leads to the optimal choices of lattices as the minimizers of this lattice energy function, which is given by the triangular lattice in 2D and face centered cubic (FCC) lattice in 3D.

\section{Spectrogram analysis and Gaussian Analytic Functions}
\label{sec: spectrogram}
In time-frequency analysis, Short Time Fourier Transforms (abbrv. STFTs) are foundational objects \cite{TFA_Grochenig, TFA_Flandrin, TFA_Cohen}. 
Given a signal $f\in L^2(\R)$ and a window function $\phi \in L^2(\R)$, the STFT $\cF_\phi f(\cdot,\cdot)$ of $f$ with respect to $\phi$ is defined as the inner product between f and shifted $\phi$ (in the sense of time and frequency). More precisely,
\begin{equation}
    \cF_\phi f(u,v) := \int_\R f(x) \overline{\phi (x-u)} e^{-2\pi i x v} \d x = \langle f , M_v T_u \phi \rangle,
\end{equation}
where $T_u g (x) := g(x- u)$ and $M_v g(x) := e^{2\pi i v x} g(x) $ and $\langle f,g \rangle := \int_\R f(x)\overline{g(x)}\d x$.
\begin{remark}
    It is convenient to work with complex notation sometimes. By setting $z= u + iv$, we will regard $\cF_\phi f$ as function of $z\in \C$.
\end{remark}

Popular choices for window functions $\phi$ include Gaussians
\[\phi(x) \propto e^{-x^2/2\sigma^2}, \quad \sigma >0.\]
We will particularly choose  $\phi (x) = 2^{1/4}e^{-\pi x^2}$ as in \cite{BARDENET_Spec,GhoshGAF}, 
to make use of the \emph{Bargmann transform}, which will be introduced later. In this setting, the STFT is also known as the \emph{Gabor transform} \cite{TFA_Grochenig}.

Given the STFT of a signal, the function $u,v\mapsto|\cF_\phi f(u,v)|^2$ is  fundamentally important, called the \emph{spectrogram} of the signal $f$ with respect to the window function $\phi$.
We will be particularly interested in the \emph{level sets} of the spectrogram of $f$
\[\mathcal L (a) := \{(u,v) \in \R^2: |\cF_\phi f(u,v)| \ge a \}, \quad a\ge 0.\]

Spectrogram analysis is highly effective in many applications \cite{TFA_Grochenig, Spectrogram_Flandrin}. Classically, the maxima of the spectrogram have been extensively studied. This is related to the understanding that these capture greater energy from the spectrogram and, therefore, provide more information about the signal  \cite{TFA_Grochenig, TFA_Flandrin, TFA_Cohen, Flandrin_2018}. However, in recent years, a different approach to spectrogram analysis has been investigated, focusing on the zeros of the spectrogram instead of the maxima. This new line of investigation originated from a seminal paper by Flandrin \cite{Spectrogram_Flandrin}. 

In relation to this direction, Bardenet et al. \cite{BARDENET_Spec} observed that the zero set of the spectrogram of the Gaussian white noise has the same statistical distribution as the zero set of the planar Gaussian Analytic Function (GAF). In \cite{BARDENET_GAF}, Bardenet and Hardy further explored the connection between several time-frequency transforms of Gaussian white noises and different models of GAFs (planar, elliptic, hyperbolic). In particular, the STFT with respect to a Gaussian window corresponds to the planar GAF, and the analytic wavelet transform \cite{Daubechies_1988} is related to the hyperbolic GAF. Thus, spectrogram analysis is linked to the theory of Gaussian Analytic Functions, a rich and elegant topic of independent mathematical interest \cite{hough2009zeros, PeresVirag, sodin}. 

\subsection{Zeros of Gaussian Analytic Functions}
To be self-contained and for the sake of the readers, we will briefly review the Gaussian analytic functions and their zeros.
\begin{definition}[Gaussian analytic functions, \cite{hough2009zeros}]
    Let $D \subset \mathbb C$ be a domain. A Gaussian analytic function (GAF) on $D$ is a random variable $f$ taking values in the space of analytic functions on $D$ (endowed with the topology of uniform convergence on compact subsets), such that for every $z_1,\ldots,z_n \in D$ and  $n \ge 1$, the random vector $(f(z_1),\ldots,f(z_n))$ has a mean zero complex Gaussian distribution.
\end{definition}

Given a GAF $f$ on a domain $D$, it is natural to consider the covariance kernel of $f$, defined by
\[ K(z,w) := \mathbb E[f(z) \overline{f(w)}], \quad z,w \in \mathbb C.\]
Due to the Gaussian nature and the continuity of $f$, the covariance kernel $K$ completely determines the distribution of $f$.

A standard way to construct a GAF is as follows.

\begin{proposition}
    Let $D$ be a domain in $\mathbb C$ and $\Psi_n, n \ge 1$ be holomorphic functions on $D$ such that $\sum_{n} |\Psi_n(z)|^2$ converges uniformly on compact subsets of $D$. Let $\xi_n, n \ge 1$ be i.i.d. complex Gaussian random variables. Then, almost surely, the random series
    \[ f(z) := \sum_{n=1}^\infty \xi_n \Psi_n(z)\]
    converges uniformly on compact subsets of $D$, and hence defines a Gaussian analytic function $f$ with covariance kernel
    \[ K(z,w) = \sum_{n\ge 1} \Psi_n(z) \overline{\Psi_n(w)}.\]
\end{proposition}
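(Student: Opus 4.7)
The plan is to first establish almost sure uniform convergence of the partial sums $S_N(z) := \sum_{n=1}^N \xi_n \Psi_n(z)$ on every compact subset of $D$, then to identify the finite-dimensional marginals of the limit as centered complex Gaussian, and finally to compute the covariance kernel. Throughout, I normalize so that $\E[\xi_n \overline{\xi_m}] = \delta_{nm}$.

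For the uniform convergence step, I would fix a compact $K \subset D$ and enclose it in a smooth Jordan cycle $\gamma \subset D$ at positive distance $\delta := \mathrm{dist}(K, \gamma) > 0$. Since $S_N - S_M$ is holomorphic, Cauchy's integral formula yields, for every $z \in K$ and all $M < N$,
\[ |S_N(z) - S_M(z)| \le \frac{1}{2\pi \delta} \int_\gamma |S_N(w) - S_M(w)|\, |\d w|. \]
Taking expectations, applying Jensen's inequality together with the orthogonality identity
\[ \E\bigl|S_N(w) - S_M(w)\bigr|^2 = \sum_{n=M+1}^N |\Psi_n(w)|^2, \]
and invoking uniform convergence of $\sum_n |\Psi_n|^2$ on the compact set $\gamma$, I get $\E\bigl[\sup_{z \in K} |S_N(z) - S_M(z)|\bigr] \to 0$ as $M, N \to \infty$. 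Hence $(S_N)$ is Cauchy in probability in the separable Banach space $C(K;\C)$.

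To upgrade this to almost sure convergence in $C(K;\C)$, I would invoke the Itô-Nisio theorem: each summand $\xi_n \Psi_n$ is a symmetric $C(K;\C)$-valued random variable (because a standard complex Gaussian is invariant under $\xi \mapsto -\xi$), and the summands are independent, so convergence in probability automatically promotes to almost sure convergence. Exhausting $D$ by a countable nested family of compacts then yields almost sure uniform convergence on every compact subset of $D$, and the Weierstrass theorem on uniform limits of holomorphic functions shows that the limit $f$ is almost surely holomorphic on $D$. For the Gaussian law, note that for any $z_1, \ldots, z_k \in D$ the vector $(S_N(z_1), \ldots, S_N(z_k))$ is a linear combination of i.i.d. complex Gaussians and hence is centered complex Gaussian; this class is closed under $L^2$-limits, so $(f(z_1), \ldots, f(z_k))$ is also centered complex Gaussian, confirming that $f$ is a GAF. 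The covariance then follows from $L^2$-convergence:
\[ \E[f(z)\overline{f(w)}] = \lim_{N \to \infty} \sum_{n=1}^N \Psi_n(z)\overline{\Psi_n(w)} = \sum_{n=1}^\infty \Psi_n(z)\overline{\Psi_n(w)}, \]
where Cauchy-Schwarz bounds the tail of the series by the convergent sums $\sum |\Psi_n(z)|^2$ and $\sum |\Psi_n(w)|^2$.

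The hard part is the passage from the $L^1$-Cauchy estimate to almost sure uniform convergence on compacts. Cauchy's integral formula is the decisive device: it reduces the supremum of a holomorphic partial-sum difference over $K$ to a line integral along a piecewise smooth contour $\gamma$, where Fubini and the complex-Gaussian orthogonality apply cleanly; Itô-Nisio then supplies the almost sure upgrade at no cost, precisely because the summands are independent and symmetric in the separable Banach space $C(K;\C)$, and no additional regularity on the $\Psi_n$ beyond the hypothesis is needed.
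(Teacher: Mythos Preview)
The paper does not supply a proof of this proposition; it is stated as a standard construction, with the book \cite{hough2009zeros} as the implicit reference. Your argument is correct and is one of the standard routes: the Cauchy integral formula transfers uniform control on a compact $K$ to an $L^1$ estimate along a fixed contour, where Fubini and the orthogonality of the $\xi_n$ yield the needed $L^2$ bound; the It\^o--Nisio theorem then upgrades convergence in probability in the separable Banach space $C(K;\C)$ to almost sure convergence. The identification of the finite-dimensional marginals as centered complex Gaussian and the computation of the covariance kernel are routine once $L^2$ convergence at each point is established.

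One small technical caveat: for a general (possibly multiply connected) domain $D$, a single Jordan curve need not surround a given compact $K$ inside $D$. You should either take $\gamma$ to be a cycle---a finite formal sum of Jordan curves nullhomologous in $D$ with winding number $1$ about each point of $K$---or simply cover $K$ by finitely many closed disks compactly contained in $D$ and apply the Cauchy estimate on each. Either way your bound on $\E\bigl[\sup_{z\in K}|S_N(z)-S_M(z)|\bigr]$ goes through unchanged.
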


Our main interest is in the zeros of GAFs. There are three particular models of GAFs, for which the corresponding zero set interacts nicely with the geometry of the ambient domain. In what follows, $\xi_n, n\ge 0$ are i.i.d. $\mathcal N_{\mathbb C}(0,1)$-random variables.

\begin{enumerate}
    \item(Planar GAF) Let $D= \mathbb C$, we define
    \[ f_L(z) := \sum_{n=0}^\infty \xi_n \sqrt{\frac{L^n}{n!}}z^n, \quad L>0.\]
    For each $L>0$, $f_L(z)$ is a GAF on $\mathbb C$ with covariance kernel $\exp(Lz\bar w)$.
    \item(Spherical GAF)
    Let $D= \mathbb C \mathbb P^1 = \mathbb C \cup \{\infty\}$, we define 
    \[f_L(z) := \sum_{n=0}^L \xi_n \sqrt{\frac{L(L-1)\ldots(L-n+1)}{n!}}z^n, \quad L \in \mathbb N=\{1,2,3,\ldots\}.\]
    For each $L\in \mathbb N$, $f_L(z)$ is a GAF on $\mathbb C$ with covariance kernel $(1+z\bar w)^L$, and it has a pole at $\infty$.
    \item(Hyperbolic GAF) Let $D= \mathbb D = \{|z|<1\}$, we define 
    \[f_L(z) := \sum_{n=0}^L \xi_n \sqrt{\frac{L(L+1)\ldots(L+n-1)}{n!}}z^n, \quad L >0.\]
    For each $L>0$, $f_L(z)$ is a GAF on $\mathbb D$ with covariance kernel $(1-z\bar w)^{-L}$. 
\end{enumerate}

\begin{figure}[h]
    \centering
    \includegraphics[width=0.80\linewidth]{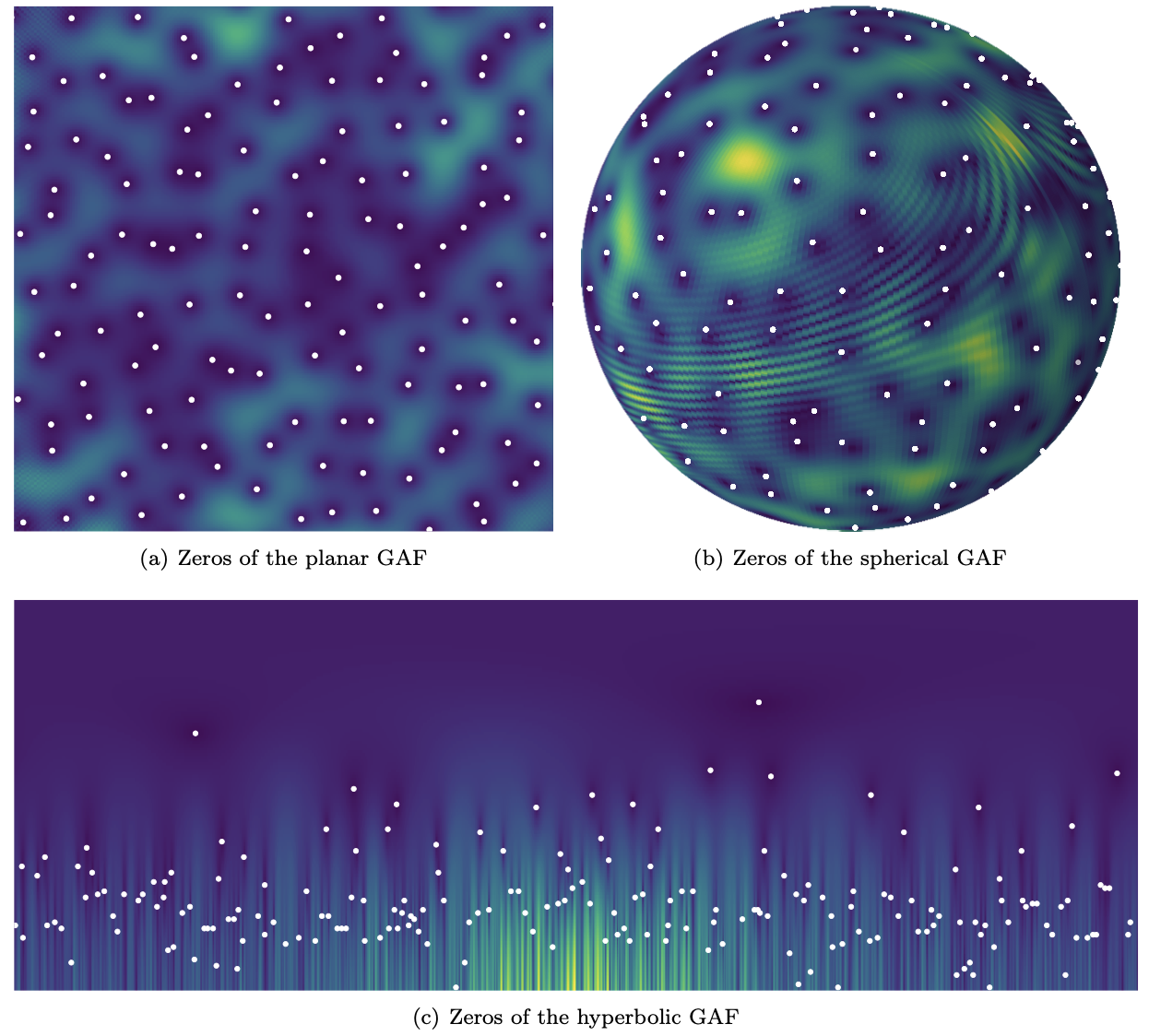}
    \caption{Zeros of 3 canonical models of GAFs (cf. \cite{BARDENET_GAF})}
    \label{fig:zeros}
\end{figure}

These 3 models are characterized by the property that the zero sets are invariant (in distribution) and ergodic under the action of the group of isomorphisms of the ambient domains (see \cite{hough2009zeros}, Chapter 2). 

The behavior of the zeros of GAFs is quite similar to that of determinantal point processes (DPPs), as both exhibit the so-called \emph{hyperuniformity} in the spatial distribution of points, which is of independent interest in statistical physics \cite{Ghosh2016NumberRI, GhoshLebowitz, PhysRevE}. 
In fact, for the hyperbolic GAF with $L=1$, it has been shown by Peres and Virag \cite{PeresVirag} that the zeros of $f_1$ is a DPP on $\mathbb D$ with the Bergman kernel $K(z,w) = \pi^{-1} (1-z\bar w)^{-2}$ and the background measure is the Lebesgue measure on $\mathbb D$. The zeros of GAFs also exhibit repulsiveness, but at short ranges. In particular, the probability of having 2 zeros in a same small disk of radius $\varepsilon$ is of order $\mathcal{O}(\varepsilon^6)$ (see \cite{hough2009zeros}, Chapter 3). In contrast, from \eqref{eq:Poisson_repel} with $d=2$, that probability for Poisson point processes is of order $\varepsilon^4$.
For more details about GAFs, we refer the readers to \cite{hough2009zeros, sodin}.

\subsection{Bargmann transform and Gaussian white noise}
\subsubsection{Bargmann transform and Hermite functions}
\begin{definition}[Bargmann transform]
Given $f\in L^2(\R)$, the Bargmann transform of $f$ is a function on $\C$ defined by
\[ \mathcal Bf(z) = 2^{1/4} \int_\R f(x) \exp \Big ( 2\pi x z - \pi x^2 - \frac{\pi}{2} z^2 \Big ) \d x, \quad z\in \C.\]
\end{definition}

The connection of the STFT (with respect to Gaussian window functions) to the Bargmann transform is given by the following proposition:  

\begin{proposition}
For $f\in L^2(\R)$ and $z= u+iv$, we have 
\[ \cF_\phi f (z) = \exp \Big ( -\pi i u v - \frac{\pi}{2} |z|^2 \Big ) \mathcal B f(\bar z), \]
where $\phi (x) := 2^{1/4}e^{-\pi x^2}$.
\end{proposition}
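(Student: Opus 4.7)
The plan is to reduce the identity to a direct comparison of Gaussian integrands, since both $\cF_\phi f(z)$ and $\mathcal B f(\bar z)$ are integrals over $\R$ of $f(x)$ against an explicit exponential kernel. First I would substitute $\phi(x) = 2^{1/4} e^{-\pi x^2}$ into the definition of the STFT; because $\phi$ is real-valued, the conjugation drops out and I obtain
\[
\cF_\phi f(u,v) = 2^{1/4} \int_\R f(x)\, e^{-\pi (x-u)^2}\, e^{-2\pi i x v}\, \d x.
\]
The target identity then claims that the integrand above equals the Bargmann integrand evaluated at $\bar z = u - iv$, namely $\exp(2\pi x \bar z - \pi x^2 - \tfrac{\pi}{2} \bar z^2)$, multiplied by the prefactor $\exp(-\pi i u v - \tfrac{\pi}{2}|z|^2)$.

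The key step is therefore to verify the pointwise identity of exponents in $x$. Expanding the STFT exponent gives
\[
-\pi(x-u)^2 - 2\pi i x v = -\pi x^2 + 2\pi x(u - iv) - \pi u^2 = -\pi x^2 + 2\pi x \bar z - \pi u^2,
\]
while on the Bargmann side, using $|z|^2 = u^2+v^2$ and $\bar z^2 = u^2 - 2i u v - v^2$, the prefactor contributes an exponent of
\[
-\pi i u v - \tfrac{\pi}{2}(u^2+v^2) - \tfrac{\pi}{2}(u^2 - 2 i u v - v^2) = -\pi u^2,
\]
so adding $2\pi x \bar z - \pi x^2$ from the Bargmann kernel yields exactly $-\pi x^2 + 2\pi x \bar z - \pi u^2$, matching the STFT side. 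Pulling the $x$-independent prefactor outside the integral and comparing the remaining integrand against the definition of $\mathcal B f(\bar z)$ gives the claim.

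There is no substantive obstacle: the argument is a completion-of-squares computation, and the only care needed is to track the difference between $z$ and $\bar z$ (the STFT involves the frequency shift $e^{-2\pi i x v}$, which forces the argument of $\mathcal B f$ to be $\bar z$ rather than $z$). One should also note in passing that the interchange of any limiting procedures is harmless because, for $f \in L^2(\R)$ and the Gaussian window, the integrand is absolutely integrable for every fixed $z \in \C$, so both sides are well-defined entire (resp.\ smooth) functions and the pointwise identity suffices.
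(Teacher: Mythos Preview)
Your proof is correct: the direct comparison of exponents after completing the square is exactly the right approach, and your bookkeeping of $z$ versus $\bar z$ and of the $x$-independent prefactor is accurate. The paper states this proposition without proof, so there is no alternative argument to compare against; your computation is the standard one.
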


The Hermite functions $h_k(x), k=0,1,2,\ldots$ are defined as 
\begin{equation} \label{eq:hermite}
 h_k(x) := \frac{(-1)^k}{2^{2k-1/2} \pi^k k!} e^{\pi x^2} \frac{ \d^k}{\d x^k} \Big (e^{-2\pi x^2} \Big ), \quad k \in \N.
\end{equation}
It is known that $\{h_k\}_k$ is an orthonormal basis of $L^2(\R)$ and they behave nicely under Bargmann transform:

\begin{proposition}[\cite{BARDENET_Spec,GhoshGAF}] \label{p:Bargmann}
We have
\begin{enumerate}
\item[(i)] $ \mathcal Bh_k(z) = \frac{\pi^{k/2}}{\sqrt{k!}} z^k , \quad k\in \N$.
\item[(ii)] $\cF_\phi h_k (z) = \exp  (-\pi i uv - \frac{\pi}{2} |z|^2  ) \sqrt{\frac{\pi^k}{k!}} \bar{z}^k$, where $z= u+iv$ and $k\in \N$.
\end{enumerate}
\end{proposition}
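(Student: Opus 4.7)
The plan is to establish (i) by direct computation using the Rodrigues-type representation of $h_k$ combined with $k$ integrations by parts, and then to deduce (ii) immediately from the identity $\cF_\phi f(z) = e^{-\pi i uv - \pi|z|^2/2}\mathcal B f(\bar z)$ stated in the preceding proposition.

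For part (i), the first step is to substitute the definition of $h_k$ into the Bargmann integral. The prefactor $e^{\pi x^2}$ coming from $h_k$ combines with the $e^{-\pi x^2}$ inside the Bargmann kernel $\exp(2\pi xz - \pi x^2 - \pi z^2/2)$, so that (after pulling out the $z$-dependent constant $e^{-\pi z^2/2}$) one is left with $\frac{d^k}{dx^k}(e^{-2\pi x^2})$ integrated against $e^{2\pi xz}$. The next step is to integrate by parts $k$ times, transferring the derivatives onto $e^{2\pi xz}$; all boundary terms vanish at every stage because $e^{-2\pi x^2}$ and each of its derivatives decay super-polynomially. Each differentiation of $e^{2\pi xz}$ produces a factor $2\pi z$, so after $k$ steps the integrand becomes $(2\pi z)^k e^{-2\pi x^2+2\pi xz}$, multiplied by the overall sign $(-1)^k$ picked up from integration by parts.

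The third step is to evaluate the remaining Gaussian integral $\int_\R e^{-2\pi x^2 + 2\pi xz}\, dx$. Completing the square gives $e^{\pi z^2/2}/\sqrt 2$; for complex $z$ this is justified either by analytic continuation (both sides are entire in $z$) or directly by a contour shift using Cauchy's theorem on a horizontal strip, where the Gaussian integrand decays uniformly on vertical sides. The factor $e^{\pi z^2/2}$ then cancels the earlier $e^{-\pi z^2/2}$, the signs $(-1)^k$ from the Rodrigues formula and from integration by parts combine into $+1$, and collecting the remaining powers of $2$, $\pi$ and $k!$ from the normalization constant of $h_k$ and from $(2\pi z)^k$ yields $\mathcal B h_k(z) = \frac{\pi^{k/2}}{\sqrt{k!}}\,z^k$.

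Part (ii) is then a one-line corollary: applying the identity $\cF_\phi f(z) = e^{-\pi i uv - \pi|z|^2/2}\mathcal B f(\bar z)$ to $f=h_k$ and substituting (i) with $z$ replaced by $\bar z$ produces the claimed formula. The only mildly delicate ingredient is the justification of the contour shift for complex $z$ in the Gaussian integral; beyond that, the argument is careful bookkeeping of normalization constants.
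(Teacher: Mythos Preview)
The paper does not actually prove this proposition; it is stated with a citation to \cite{BARDENET_Spec,GhoshGAF} and then used without further justification. Your argument is correct and is the standard route: substitute the Rodrigues-type formula for $h_k$, let the $e^{\pi x^2}$ cancel the $e^{-\pi x^2}$ in the Bargmann kernel, integrate by parts $k$ times (boundary terms vanish by Gaussian decay), and evaluate the remaining Gaussian integral by completing the square, justifying the complex shift by analytic continuation or Cauchy's theorem. Part (ii) then follows at once from the identity $\cF_\phi f(z)=e^{-\pi i uv-\pi|z|^2/2}\mathcal Bf(\bar z)$, exactly as you say.

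One minor remark: with the normalization constant for $h_k$ exactly as printed in the paper's equation~\eqref{eq:hermite}, the bookkeeping does not close up to give $\pi^{k/2}/\sqrt{k!}$; the constant there appears to be missing a square root (as written, $h_0$ is not $L^2$-normalized). Your computation is consistent with the intended orthonormal Hermite functions, so this is a typo in the paper rather than a flaw in your argument.
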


\subsubsection{Gaussian white noise}

To define the white noise, we recall the notions of Schwartz space and tempered distributions. Let $C^\infty(\R,\C)$ be the space of smooth functions from $\R$ to $\C$, the Schwartz space $\cS(\R)$ is defined as
\[ \cS(\R) = \Big \{ f \in C^\infty(\R, \C) : \sup_{x\in \R} |x^n f^{(m)}(x)| < \infty, \: \forall m,n \in \N \Big \}.\]
The Schwartz space particularly contains Gaussian density functions.

The space of tempered distributions on $\R$ is the continuous dual of $\cS(\R)$, for which we will denote by $\cS'(\R)$. For $\psi \in \cS'(\R)$ and $\phi\in \cS(\R)$, we denote the action of $\psi$ on $\phi$ by $\langle \psi , \phi \rangle $.

\begin{definition}[White noise]
Let $\mathcal B(\cS'(\R))$ be the Borel $\sigma$-field on $\cS'(\R)$. The white noise $\xi$ is the random element of $(\cS'(\R), \mathcal B(\cS'(\R)))$ satisfying
\[ \E[e^{i\langle \xi, g \rangle } ] = e^{-\frac{1}{2} \|g\|^2_{L^2(\R)}}, \quad \forall g\in \cS(\R). \]
In other words, $\langle \xi ,g \rangle$ is a $\mathcal N(0,\|g\|_{L^2(\R)}^2)$-random variable for all $g\in \cS(\R)$. We will denote by $\mu_1$ the law of $\xi$.
\end{definition}

\subsection{The short-time Fourier transforms of white noise}

We note that the notion of STFT extends naturally for $f\in \cS'(\R)$ and $\phi \in \cS(\R)$ as
\[ \cF_\phi f(u,v ) := \langle f , M_v T_u \phi \rangle , \quad u,v\in \R.\]
Using Bargmann transform, the STFT of the white noise $\xi$ is then given by the following theorem (see \cite{BARDENET_Spec, GhoshGAF}).
\begin{theorem} \label{p:whitenoise}
For $z= u+iv \in \C$ and $\phi (x) = 2^{1/4}e^{-\pi x^2}$, we have
\begin{equation} \label{eq:stft_whitenoise}
\cF_\phi \xi (z) = \sqrt{\pi} \exp\Big (i\pi uv - \frac{\pi}{2}|z|^2 \Big ) \sum_{k=0}^\infty \langle \xi, h_k \rangle \sqrt{\frac{\pi^k}{k!}} z^k,
\end{equation}
where the convergence in the RHS is in $L^2(\mu_1)$. In particular, the random series
\[ \sum_{k=0}^\infty \langle \xi, h_k \rangle  \sqrt{\frac{\pi^k}{k!}} z^k\]
defines a random entire function $\mu_1$-a.s.
\end{theorem}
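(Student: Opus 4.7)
The plan is to prove the identity by expanding the test function $M_v T_u \phi$ in the Hermite orthonormal basis $\{h_k\}_{k \ge 0}$ of $L^2(\R)$, evaluating the white noise $\xi$ on each basis element, and passing to the limit via the white-noise $L^2$-isometry. For the second assertion, I will use a Cauchy--Hadamard argument combined with Borel--Cantelli on the Gaussian coefficients to show that the random power series has infinite radius of convergence almost surely.

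First, since $\phi\in \cS(\R)$, so is $M_v T_u \phi \in \cS(\R) \subset L^2(\R)$, and hence it admits a Hermite expansion
\[
M_v T_u \phi \;=\; \sum_{k=0}^\infty c_k(z)\, h_k, \quad c_k(z) := \langle M_v T_u \phi, h_k \rangle_{L^2(\R)},
\]
convergent in $L^2(\R)$. Because each $h_k$ is real-valued, a direct computation writes $c_k(z)$ as $\int e^{2\pi i v x} \phi(x-u) h_k(x)\,dx = \overline{\cF_\phi h_k(z)}$, which by Proposition~\ref{p:Bargmann}(ii) equals $\exp(i\pi uv - \tfrac{\pi}{2}|z|^2)\sqrt{\pi^k/k!}\,z^k$, up to the global normalization constant that gives the $\sqrt\pi$ prefactor in the statement (traceable to the relative normalization of $\phi$ and $h_0$).

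Next, let $g_N := \sum_{k=0}^N c_k(z) h_k$ be the $N$-th partial expansion, so $g_N \to M_v T_u \phi$ in $L^2(\R)$. By linearity of $\xi$ on $\cS(\R)$,
\[
\langle \xi, g_N\rangle \;=\; \sum_{k=0}^N c_k(z) \langle \xi, h_k\rangle.
\]
The white-noise defining relation extends from $\cS(\R)$ to a linear isometry $L^2(\R) \to L^2(\mu_1)$; hence
\[
\E\bigl|\langle \xi, M_v T_u\phi\rangle - \langle \xi, g_N\rangle\bigr|^2 \;=\; \|M_v T_u \phi - g_N\|_{L^2(\R)}^2 \;\xrightarrow[N\to\infty]{}\; 0,
\]
giving convergence of $\sum_k c_k(z)\langle \xi, h_k\rangle$ to $\cF_\phi \xi(z) = \langle \xi, M_v T_u \phi\rangle$ in $L^2(\mu_1)$, which after substituting the closed form of $c_k(z)$ yields the stated identity.

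For the final claim, set $\xi_k := \langle \xi, h_k\rangle$; by the isometry and orthonormality of $\{h_k\}$, $\{\xi_k\}_{k\ge 0}$ are i.i.d. real standard Gaussians. A Gaussian tail bound $\P(|\xi_k| \ge k) \le e^{-k^2/2}$ and Borel--Cantelli give $\limsup_{k\to\infty} |\xi_k|^{1/k} \le 1$ a.s., which together with $(k!)^{-1/(2k)} \to 0$ (Stirling) yields radius of convergence $+\infty$ a.s.\ for the random power series $\sum_k \xi_k \sqrt{\pi^k/k!}\,z^k$, hence it defines a random entire function. The main subtlety in the whole argument is the passage from Schwartz continuity of $\xi$ to an $L^2(\R)$-continuous action on the partial sums $g_N$; this is handled cleanly by the white-noise $L^2$-isometry rather than by Schwartz seminorm estimates on $g_N$, which would be considerably more delicate.
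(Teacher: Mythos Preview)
The paper does not actually provide a proof of this theorem; it merely states the result and cites the external references \cite{BARDENET_Spec, GhoshGAF}. Consequently there is no ``paper's own proof'' against which to compare your attempt.

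That said, your argument is correct and is the natural one: expand $M_v T_u \phi$ in the Hermite basis, identify the coefficients via Proposition~\ref{p:Bargmann}(ii), and invoke the $L^2(\R)\to L^2(\mu_1)$ isometry of white noise to pass to the limit in the series. The Borel--Cantelli argument for almost-sure entireness is standard and well executed. Two small remarks. First, you are right to flag the $\sqrt{\pi}$ prefactor: plugging $c_k(z)=\overline{\cF_\phi h_k(z)}$ from Proposition~\ref{p:Bargmann}(ii) directly yields $\exp(i\pi uv - \tfrac{\pi}{2}|z|^2)\sqrt{\pi^k/k!}\,z^k$ with no extra $\sqrt{\pi}$, so this appears to be a normalization inconsistency in the paper rather than a gap in your reasoning. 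Second, since $M_v T_u \phi$ is complex-valued while the white noise is defined on real test functions, you are implicitly using the complex-linear extension $\langle \xi, g_1 + i g_2\rangle := \langle \xi, g_1\rangle + i \langle \xi, g_2\rangle$; it would be cleaner to say this explicitly, but it is routine.
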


Thus, the STFT of white noise is a Gaussian analytic function on $\mathbb C$. The covariance kernel of $\{\cF_\phi \xi (z)\}_{z\in \C}$ is given by
\begin{equation}
    K(z,w) := \Cov[\cF_\phi \xi(z) , \cF_\phi \xi(w)] = \pi \exp\Big ( i\pi(u_1v_1 - u_2v_2) - \frac{\pi}{2}(|z|^2 + |w|^2) + \pi z\bar w \Big ),
\end{equation}
where $z= u_1 + i v_1, w = u_2 + i v_2$. In particular,
    $K(z,z) = \Var[\cF_\phi \xi (z)] = \pi$.

Theorem \ref{p:whitenoise} particularly implies that the zeros of the STFT of the white noise are the same as those of 
\[ \sum_{k=0}^\infty \langle \xi, h_k \rangle  \sqrt{\frac{\pi^k}{k!}} z^k.\]
Since $\langle \xi, h_k \rangle$ are i.i.d. real Gaussians, this is also called the \emph{symmetric planar Gaussian analytic function}. However, the zero set in this case is not stationary, which poses an obstacle in using techniques from spatial statistics. Motivated by early results by Prosen, the authors of \cite{BARDENET_Spec} argue that the zero set of the spectrogram of real Gaussian white noise could be effectively approximated by the zero set of the spectrogram of \emph{complex} Gaussian white noise. It turns out that the latter is precisely the zero set of the planar GAF, which has been shown to be invariant under isomorphisms of the complex plane.

\subsection{Applications in signal analysis}
The connection to the zeros of GAFs provides a useful approach for studying the stochastic geometry of spectrogram level sets of a signal. In \cite{GhoshGAF}, Ghosh et al. provided a theoretical guarantee for using GAFs in the problem of signal detection. To elaborate,
let $f$ be a noisy signal generated as
\begin{equation}\label{eq:signal}
     f = \lambda h_k + \xi
\end{equation}
where $\lambda \in \R$ is a real parameter, $h_k$ is an Hermite function defined as \eqref{eq:hermite} and $\xi$ is the white noise. We are interested in the spectrogram level sets of $f$ and their restrictions
\begin{eqnarray} \label{eq:spectrogram}
    \mathcal L(a) &:=& \{z\in \C: |\cF_\phi f(z)| \ge a \}, \quad a\ge 0 \\
    \mathcal L_L(a) &:=& \mathcal L(a) \cap \B_L, \quad \text{where } \B_L:=\{z\in \C: |\re(z)|\le L, |\im(z)|\le L\}.
\end{eqnarray}

Utilising Equation \eqref{eq:stft_whitenoise} and techniques from maxima of Gaussian random fields, Ghosh et al. in \cite{GhoshGAF} showed the following result for the spectrogram level set of the white noise
\begin{theorem} \label{t:sup_whitenoise}
    For $L\ge \pi$, we have that for any $\tau > 0$
    \[ \P \Big ( \sup_{z\in \B_L}|\cF_\phi \xi (z)| \le \sqrt 2 (14K +\tau) \sqrt{\log L}\Big ) \ge 1 - 4 \exp \Big (- \frac{\tau^2}{2 \pi} \log L \Big )\]
    where $K>0$ is a constant, and
    \[ \B_L:= \{z\in \C: |\re(z)|\le L, |\im(z)|\le L\}.\]
\end{theorem}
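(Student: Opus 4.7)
The plan is to use the representation from Theorem \ref{p:whitenoise} to view $z\mapsto\cF_\phi\xi(z)$ as a centered complex Gaussian field on $\C$ with constant variance $K(z,z)=\pi$, and then to combine Gaussian concentration with a chaining (Dudley) argument.

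\textbf{Step 1. Reduction to real-valued Gaussian fields.} Write $\cF_\phi\xi = A + i B$, where $A$ and $B$ are real centered Gaussian fields with $\Var(A(z))+\Var(B(z))= K(z,z)=\pi$; in particular, each of $\Var(A(z))$ and $\Var(B(z))$ is at most $\pi$. The elementary pointwise inequality $|\cF_\phi\xi(z)|\le \sqrt 2\max(|A(z)|,|B(z)|)$ then gives
\[
\sup_{z\in\B_L}|\cF_\phi\xi(z)|\;\le\;\sqrt 2\,\max\Big(\sup_{z\in\B_L}|A(z)|,\;\sup_{z\in\B_L}|B(z)|\Big).
\]

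\textbf{Step 2. Gaussian concentration.} Each of $\omega\mapsto\sup_{z\in\B_L}|A(z)(\omega)|$ and $\omega\mapsto\sup_{z\in\B_L}|B(z)(\omega)|$ is a $1$-Lipschitz functional of the underlying Gaussian realization. The Borell--Tsirelson--Ibragimov--Sudakov inequality, with variance parameter at most $\pi$, then yields
\[
\P\!\left(\sup_{z\in\B_L}|A(z)|\ge \E\sup_{z\in\B_L}|A(z)|+t\right)\;\le\;2\exp\!\Big(-\frac{t^2}{2\pi}\Big),
\]
and likewise for $B$. Setting $\mu:=\max(\E\sup_z|A|,\E\sup_z|B|)$, a union bound gives
\[
\P\Big(\max\big(\sup_z|A|,\sup_z|B|\big)\ge \mu+t\Big)\;\le\;4\exp\!\Big(-\frac{t^2}{2\pi}\Big).
\]

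\textbf{Step 3. Dudley's bound on the expectation.} The canonical pseudo-metrics associated with $A$ and $B$ are both dominated by the canonical pseudo-metric $d(z,w):=\sqrt{\E|\cF_\phi\xi(z)-\cF_\phi\xi(w)|^2}=\sqrt{2\pi-2\,\re K(z,w)}$ of the complex field. From the explicit expression $|K(z,w)|=\pi\exp(-\tfrac{\pi}{2}|z-w|^2)$ one reads off that $d(z,w)\lesssim|z-w|$ for small $|z-w|$ while $d(z,w)\le\sqrt{2\pi}$ globally. Consequently the covering numbers satisfy $N(\B_L,d,\eps)\lesssim(L/\eps)^2$ for $\eps$ small, and Dudley's entropy integral delivers
\[
\E\sup_{z\in\B_L}|A(z)|,\;\E\sup_{z\in\B_L}|B(z)|\;\le\;14K\sqrt{\log L}
\]
for $L\ge\pi$ and an appropriate universal constant $K>0$.

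\textbf{Conclusion and main obstacle.} Combining the three steps with $t=\tau\sqrt{\log L}$ yields $\sqrt 2(\mu+t)\le\sqrt 2(14K+\tau)\sqrt{\log L}$ with failure probability at most $4\exp(-\tau^2\log L/(2\pi))$, which is exactly the stated bound. The main obstacle is Step 3: pinning down Dudley's constant with the correct $\sqrt{\log L}$ dependence (and no spurious polynomial correction in $L$) requires a careful execution of the chaining argument, together with the uniform comparison of $d$ to the Euclidean metric on $\B_L$ obtained from a Taylor expansion of $K$ on the diagonal combined with its Gaussian decay off the diagonal.
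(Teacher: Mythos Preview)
The paper does not actually prove Theorem~\ref{t:sup_whitenoise}; it merely states the result and attributes it to \cite{GhoshGAF}, noting only that the proof ``utilises Equation~\eqref{eq:stft_whitenoise} and techniques from maxima of Gaussian random fields.'' Your proposal is precisely such an argument---splitting into real and imaginary parts, applying Borell--TIS concentration with variance bound $\pi$, and controlling the expected supremum via Dudley's entropy integral---and it is correct in outline and consistent with what the paper indicates.

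Two minor points of sloppiness worth tightening: in Step~2, calling the supremum a ``$1$-Lipschitz functional'' is imprecise (the relevant Lipschitz constant with respect to the underlying standard Gaussians is $\sup_z\sqrt{\Var A(z)}\le\sqrt\pi$, which is what actually produces the $2\pi$ in the exponent); and in Step~3, the covering-number estimate $N(\B_L,d,\eps)\lesssim (L/\eps)^2$ should be argued more carefully since $\B_L$ has area $\asymp L^2$ while the $d$-balls have Euclidean radius $\asymp\eps$, so the count is $\asymp L^2/\eps^2$ as you claim, and the entropy integral $\int_0^{\sqrt{2\pi}}\sqrt{\log(L^2/\eps^2)}\,\d\eps$ indeed gives $O(\sqrt{\log L})$. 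These are not gaps, just places where a reader would want one more line.
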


We note that  $\cF_\phi$ is linear and $f$ is a mixture of an Hermite function and the white noise. Thus, to study spectrogram level sets of $f$, one can utilise Theorem \ref{t:sup_whitenoise} and the fact that
\begin{equation}
    \sup_{z\in \C} |\cF_\phi h_k(z)| = \prod_{n=1}^k\sqrt{\frac{k}{en}},
\end{equation}
and the supremum is attained at $|z|=\sqrt{k/\pi}$ (see \cite{GhoshGAF}). 
Combining the ingredients above, we have the following theorem.
\begin{theorem} \label{t:noisy-signal}
    Let $f$ be a noisy signal generated as \eqref{eq:signal}. Let $L\ge \max \{ \sqrt{k/\pi}, \pi\}$ and 
    \[ |\lambda | \ge \frac{5\sqrt{2}(14K + \tau) \sqrt{\log L}}{\prod_{n=1}^k \sqrt{k/(en)}}\]
    where $K>0$ is the constant in  Theorem \ref{t:sup_whitenoise} and $\tau>0$ is a parameter. Then with probability at least $1-4\exp(-\frac{\tau^2}{2\pi} \log L )$, we have
    \[ \varnothing  \neq \mathcal L_L\Big (3\sqrt{2}(14K+\tau)\log L \Big ) \subset \Big \{z\in \C: \frac{|\cF_\phi h_k(z)|}{\prod_{n=1}^k \sqrt{k/(en)}} > \alpha\Big \}\]
    where
    \[\alpha := \frac{\sqrt{2}(14K + \tau) \sqrt{\log L}}{|\lambda|\prod_{n=1}^k \sqrt{k/(en)}} \in (0,1/5]. \]
\end{theorem}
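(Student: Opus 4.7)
The plan is to condition on the high-probability event that $\cF_\phi \xi$ is uniformly small on $\B_L$, so that the signal term $\lambda \cF_\phi h_k$ dominates the Gabor transform
\[
\cF_\phi f(z) \;=\; \lambda\,\cF_\phi h_k(z) \;+\; \cF_\phi \xi(z)
\]
by linearity. Set $M := \sqrt{2}(14K+\tau)\sqrt{\log L}$ and $P := \prod_{n=1}^k \sqrt{k/(en)}$, and introduce the event
\[
G \;:=\; \Bigl\{\sup_{z\in \B_L} |\cF_\phi \xi(z)| \le M\Bigr\}.
\]
Theorem~\ref{t:sup_whitenoise} supplies exactly the probability bound $\P(G) \ge 1 - 4\exp(-\tau^2 \log L/(2\pi))$ that appears in the conclusion, so it suffices to argue deterministically on $G$.

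For the non-emptiness claim, I would exhibit an explicit witness. Because $\sup_z |\cF_\phi h_k(z)| = P$ is attained on the circle $|z|=\sqrt{k/\pi}$, and the hypothesis $L\ge \sqrt{k/\pi}$ places this circle inside $\B_L$, one can pick $z_0 \in \B_L$ with $|\cF_\phi h_k(z_0)| = P$. The reverse triangle inequality applied to the linearity decomposition above, together with $|\cF_\phi\xi(z_0)|\le M$ on $G$, gives
\[
|\cF_\phi f(z_0)| \;\ge\; |\lambda|\,P - M \;\ge\; 4M,
\]
where the last step invokes the standing assumption $|\lambda| \ge 5M/P$. A short numerical check, using $L\ge \pi$ to compare $\log L$ to $\sqrt{\log L}$, then certifies that $|\cF_\phi f(z_0)|$ exceeds the prescribed level $3\sqrt{2}(14K+\tau)\log L$, so $z_0 \in \mathcal L_L$ at that level.

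For the containment, fix any $z\in \mathcal L_L$ at the prescribed level, so $|\cF_\phi f(z)|\ge T := 3\sqrt{2}(14K+\tau)\log L$, and apply the reverse triangle inequality in the other direction:
\[
|\lambda|\, |\cF_\phi h_k(z)| \;\ge\; |\cF_\phi f(z)| - |\cF_\phi \xi(z)| \;\ge\; T - M.
\]
Dividing both sides by $|\lambda|P$ and recalling that $\alpha = M/(|\lambda|P)$ rewrites this as
\[
\frac{|\cF_\phi h_k(z)|}{P} \;\ge\; \frac{T - M}{M}\cdot \alpha,
\]
so the containment follows once the multiplicative factor $(T-M)/M$ is shown to be strictly larger than $1$, which again reduces to the same comparison between $\log L$ and $\sqrt{\log L}$ on the range $L\ge \pi$. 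The tail bound $\alpha \in (0,1/5]$ is immediate from $|\lambda|P \ge 5M$.

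The substantive analytic input is Theorem~\ref{t:sup_whitenoise}; the rest of the argument is two applications of the triangle inequality coupled with the known location and value of the maximizer of $|\cF_\phi h_k|$. The principal delicacy, and the only place the precise lower bounds on $L$ and on $|\lambda|$ are used, is the numerical bookkeeping that certifies the chosen threshold $T$ against the noise level $M$ — specifically, verifying simultaneously that $|\lambda|P - M$ exceeds $T$ (for non-emptiness) and that $T - M$ exceeds $M$ (for the strict inequality in the containment) uniformly over $L\ge \max\{\sqrt{k/\pi},\pi\}$.
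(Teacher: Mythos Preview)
Your approach matches the paper's own sketch exactly: use linearity of $\cF_\phi$, condition on the high-probability event from Theorem~\ref{t:sup_whitenoise}, and apply the triangle inequality in both directions against the known maximum $P$ of $|\cF_\phi h_k|$ attained at $|z|=\sqrt{k/\pi}$. The paper offers no further detail than ``combining the ingredients above''.

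There is, however, a genuine gap in your non-emptiness step. With the threshold $T = 3\sqrt{2}(14K+\tau)\log L$ as stated and your lower bound $|\cF_\phi f(z_0)| \ge 4M = 4\sqrt{2}(14K+\tau)\sqrt{\log L}$, the ``short numerical check'' you invoke would have to show $4\sqrt{\log L} \ge 3\log L$, i.e.\ $\sqrt{\log L}\le 4/3$, and this fails for every $L > e^{16/9}\approx 5.9$. Since the hypothesis permits $L$ arbitrarily large and the assumption on $|\lambda|$ is tight at $|\lambda|P = 5M$, no sharpening of your estimate rescues this. (Your containment check, which only needs $T > 2M$, i.e.\ $3\log L > 2\sqrt{\log L}$, \emph{does} hold for all $L\ge \pi$.) The mismatch almost certainly reflects a typo in the level-set threshold: every other quantity in the theorem carries $\sqrt{\log L}$, and with $T = 3M$ (i.e.\ $\sqrt{\log L}$ in place of $\log L$) both of your triangle-inequality bounds go through cleanly with the intended constants $4M\ge 3M$ and $3M - M = 2M > M$. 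But for the statement as written, your non-emptiness argument does not close.
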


Theorem \ref{t:noisy-signal} provides useful technical tools in signal detection. In particular, the authors in \cite{GhoshGAF} present two applications: one in hypothesis testing and one in signal estimation. For the first application, they design an effective hypothesis test to distinguish between pure white noise and the presence of a fundamental signal. They also provide theoretical guarantees for its effectiveness using non-asymptotic bounds on the power of the test. For the second application, they demonstrate an estimation procedure for identifying a fundamental signal, if present, and provide error bounds on the accuracy of this estimation based on the sample size. For more details, we refer readers to Section 4 of \cite{GhoshGAF}.

\section{DPPs for feature selection}
\label{sec: feature selection}

This section is based on \cite{belhadji_determinantal_2020}.
It is well known that the best (e.g. in Frobenius norm) approximation of a given matrix $X \in \mathbb{R}^{N \times  d}$ by a matrix of smaller rank $k \ll d$ is obtained by principal component analysis (PCA). The main drawback from this is that one loses the explainability of the features (columns), which can be particularly troublesome in some areas. Column subset selection aims at resolving this particular issue: for the matrix $X$ containing $d$ features about $N$ individuals, feature selection consists in selecting $k \ll d$ columns that are the most relevant. Finding the k-columns which contributes the most to the matrix is a complex task, and randomized methods have proven to be particularly efficient in this regard.

To fix the notations, for a matrix $A = (A_{ij})$, and subsets $I$, $J$, let $A_{IJ} = (A_{ij})_{i \in I, j \in J}$. 

Assume $\operatorname{rk}(X) = r$, fix also the singular value decomposition (SVD) of $X$ as $U \Sigma V^T$, where the singular values of $X$, $\sigma_i$, are ordered in decreasing order, $V \in \mathbb{R}^{d  \times r}$, $U \in \mathbb{R}^{N  \times r}$ are orthogonal and $\Sigma \in \mathbb{R}^{r  \times r}$ is diagonal. Let $V_k = V_{:[k]} = (V_{ij})_{i \in [d], j \in [k]}$ for $k\leq d$.

The quality of the approximation is seen through the ability of the span of the submatrix $X_{:S} = (X_{ij})_{i \in [n], j \in S}$ to approximate the columns of $X$, hence the criterion:

\[ \mathcal{L}_{k} : S  \subset  \{1, \dots, d\} \rightarrow  \underset{B \in \mathbb{R}^{\lvert S \rvert \times d} \text{ , } rk(B) \leq k}{\operatorname{min}} \lVert X - X_{:S} B \rVert_{\operatorname{Fr}}^2 \]

Call $\Pi_{S,k}X = X_{:S} X_{:S}^+ X $ (which is of rank at most $k$), where $X_{:S}^+$ is the Moore-Penrose pseudo-inverse of $X_{:S}$, the above criterion then reads, $\mathcal{L}_{k}(S) = \lVert X - \Pi_{S,k}X  \rVert_{\operatorname{Fr}}^2 $. 

Denote by $\Pi_k X$ the best rank $k$ approximation of $X$ (in the Frobenius sense), which can be obtained through classical PCA. Finally define the $k$-leverage scores as the norms of the columns of $V_k^T$, for $1\leq j \leq d$, $l_j^k = \sum \limits_{i = 1}^k V_{ij}^2$.

Solving the problem of finding a good candidate $S$ is a challenge. Deterministic algorithms exist such as rank revealing QR (RRQR) decomposition, see \cite{boutsidis_improved_nodate}, which have their own caveats. Here we choose to focus on non-deterministic methods.

\subsection{Non-repulsive algorithms}

\subsubsection{Length squared importance sampling}

The first idea to randomly select columns is to give them importance scores as their norm. This algorithm was introduced by Drineas et al. in \cite{drineas_clustering_2004}. More precisely, the idea is to sample $s$ indices, from a multinomial distribution on $[d]$, where the probability to sample $i$ is  $p_i \propto \lVert X_{:,i} \rVert^2$.

Theorem 3 in \cite{drineas_clustering_2004} states that:

 \[ \mathbb{P}( \lVert X - \Pi_{S,k}X  \rVert_{\operatorname{Fr}}^2 \leq \lVert X - \Pi_{k}X  \rVert_{\operatorname{Fr}}^2 + 2(1 + \sqrt{8 \log(\dfrac{2}{\delta} )}) \sqrt{\dfrac{k}{s}} \lVert X \rVert_{\operatorname{Fr}}^2) \geq 1 + \delta   \]

Notice first that the bound is additive here. Besides, this method leads to sample $s \gg k$ columns to achieve a good rank $k$ approximation.

\subsubsection{Leverage score importance sampling}

Another importance score that can be considered is the leverage score. This method was introduced by Drineas et al. in \cite{drineas_relative-error_2008}. The $k$-leverage scores distribution is the multinomial distribution on $[d]$ with parameters $p_i = l_i^k/k$. 
It is easy to see that the leverage scores sum to $k$ so that this definition gives a proper probability distribution.
Besides, for $\mathcal{P}_k = \operatorname{Span}(V_k)$, when $(e_i)_{i \in [d]}$ is the canonical basis of $\mathbb{R}^d$, and $\theta_i$ is the angle between $\mathcal{P}_k$ and $e_i$, some linear algebra manipulations show that $l_i^k = \cos^2(\theta_i) $.
So selecting the columns according to the leverage score sampling is in fact choosing the coordinates $e_i$ that are the most aligned with $\mathcal{P}_k$. PCA selects exactly $\mathcal{P}_k$,  hence, selecting a subspace which is almost aligned with $\mathcal{P}_k$ should yield a good approximation. 

Theorem 3 in \cite{drineas_relative-error_2008} states that, for $\epsilon>0$, $\delta > 0$, if the number of sampled columns $s$ satisfies $s \geq 4000(k^2/\epsilon^2)\log(1/\delta)$, then under the $k$-leverage score sampling it holds:

    \[ \mathbb{P}( \lVert X - \Pi_{S,k}X \rVert_{\operatorname{Fr}} \leq (1+ \epsilon) \lVert X - \Pi_kX \rVert_{\operatorname{Fr}}) \geq 1 - \delta \]

Finally, the lower bound on $s$ is sharp.
Likewise, the main drawback from this method is that to achieve a good rank $k$ approximation of $X$, one has to sample $s \gg k$ columns.

\subsection{Repulsive algorithms} 

\subsubsection{Volume sampling}

One way to introduce correlation between the columns of $X$ is through volume sampling. The idea is here to use the kernel $X^TX$ and sample a $k$-DPP from it. This method will be called volume sampling (VS) in the following. Notice that the diagonal terms of the kernel are exactly the squared lenghts of the columns of $X$.
Volume sampling was first introduced by Deshpande et al. in \cite{deshpande_matrix_2006} and they established the following result:

\begin{proposition} \label{prop:VS} [\cite{deshpande_matrix_2006} Theorem 1.3]
    Assume $\mathcal{S}$ is sampled according to the volume sampling method. Then:
 
\begin{equation}
 \mathbb{E}_{\operatorname{VS}} [\lVert X - \Pi_{S,k}X \rVert_{\operatorname{Fr}}^2] \leq (k+1) \lVert X - \Pi_k X \rVert_{\operatorname{Fr}}^2 
\end{equation}

Besides, this bound is sharp in the worst case scenario.
\end{proposition}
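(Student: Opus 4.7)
The plan is to establish the exact identity $\mathbb{E}_{\operatorname{VS}}[\|X - \Pi_{S,k}X\|_{\operatorname{Fr}}^2] = (k+1)\, e_{k+1}(\lambda)/e_k(\lambda)$ and then control the ratio $e_{k+1}/e_k$ by the tail of squared singular values, where I write $\lambda_i := \sigma_i^2$ and $e_j$ denotes the $j$-th elementary symmetric polynomial in $(\lambda_1,\ldots,\lambda_r)$. Since $|S| = k$, the rank-$k$ projection $\Pi_{S,k}X$ reduces to $\Pi_S X = X_{:S}X_{:S}^{+}X$, so the loss is simply the sum of squared residuals $\|X_{:i} - \Pi_S X_{:i}\|^2$ over $i \notin S$.

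First I would normalize the VS distribution via the Cauchy--Binet formula: $\sum_{|S|=k}\det(X_{:S}^\top X_{:S}) = e_k(\lambda)$, so that $\mathbb{P}_{\operatorname{VS}}(S) = \det(X_{:S}^\top X_{:S})/e_k(\lambda)$. The key geometric lemma I would then invoke is the volume identity
$$\det(X_{:S\cup\{i\}}^\top X_{:S\cup\{i\}}) = \det(X_{:S}^\top X_{:S})\cdot \|X_{:i} - \Pi_S X_{:i}\|^2, \qquad i \notin S,$$
which says that appending a column multiplies the squared parallelepiped volume by the squared distance of the new column to the current span (provable via a Schur complement or Gram--Schmidt). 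Plugging this into the expected loss yields
$$\mathbb{E}_{\operatorname{VS}}[\|X - \Pi_S X\|_{\operatorname{Fr}}^2] = \frac{1}{e_k(\lambda)}\sum_{|S|=k}\sum_{i\notin S}\det(X_{:S\cup\{i\}}^\top X_{:S\cup\{i\}}).$$
A $(k+1)$-to-$1$ counting argument, where each $(k+1)$-subset $T$ is visited once per choice of its ``added'' element $i\in T$, collapses the double sum to $(k+1)\sum_{|T|=k+1}\det(X_{:T}^\top X_{:T}) = (k+1)\, e_{k+1}(\lambda)$, giving the claimed identity.

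The remaining step is the symmetric-function inequality $e_{k+1}(\lambda)/e_k(\lambda) \le \sum_{i \ge k+1}\lambda_i = \|X - \Pi_k X\|_{\operatorname{Fr}}^2$. I would group the monomials making up $e_{k+1}(\lambda)$ by their maximum index $j$, which necessarily satisfies $j \ge k+1$ when $|T|=k+1$, obtaining
$$e_{k+1}(\lambda) = \sum_{j=k+1}^r \lambda_j\, e_k(\lambda_1,\ldots,\lambda_{j-1}) \le e_k(\lambda)\sum_{j=k+1}^r \lambda_j,$$
where the inequality uses monotonicity of elementary symmetric polynomials in nonnegative arguments. Dividing by $e_k(\lambda)$ and multiplying by $(k+1)$ completes the proof.

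The two substantive steps are the volume identity combined with the $(k+1)$-fold counting, which together convert an expectation over $k$-subsets into a clean ratio of elementary symmetric polynomials; these are linear-algebraic in nature and form the main obstacle if one tries to bypass the DPP/Cauchy--Binet viewpoint. Once that exact identity is in hand, the final majorization is a short estimate, and sharpness in the worst case (claimed in the proposition) is witnessed by taking $X$ with $\sigma_1 = \ldots = \sigma_{k+1}$ and the remaining singular values zero, in which case both sides equal $(k+1)\sigma_{k+1}^2$.
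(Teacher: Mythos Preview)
Your proof is correct and follows essentially the same route as the paper: the base-times-height (volume) identity, the $(k+1)$-to-$1$ counting that turns $\sum_{|S|=k}\det(X_{:S}^\top X_{:S})\|X-\Pi_{S}X\|_{\mathrm{Fr}}^2$ into $(k+1)\sum_{|T|=k+1}\det(X_{:T}^\top X_{:T})$, Cauchy--Binet to rewrite both sums as elementary symmetric polynomials in the squared singular values, and finally the inequality $e_{k+1}(\lambda)\le e_k(\lambda)\sum_{j\ge k+1}\lambda_j$. Your presentation is in fact tidier than the paper's sketch (which carries some spurious factorial and power-of-$(k+1)$ typos), and your grouping-by-maximum-index argument for the last inequality is a clean variant of the same estimate.
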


The proof scheme for this result revolves around algebraic manipulations of the determinant. More precisely, expending the expectation gives: 

\[ \mathbb{E}_{\operatorname{VS}} [\lVert X - \Pi_{S,k}X \rVert_{\operatorname{Fr}}^2] = \dfrac{1}{\sum \limits_{S, \lvert S \rvert = k } \operatorname{det}(X_{:S}^TX_{:S})} \sum \limits_{S, \lvert S \rvert = k } \operatorname{det}(X_{:S}^TX_{:S} ) \lVert X - \Pi_{S,k}X \rVert_{\operatorname{Fr}}^2 \]

Now, for $T$ such that $\lvert T \rvert = k+1$, $T = \{i_1, \dots, i_{k+1}\}$, if $S = \{i_1, \dots, i_k\}$, the "base times height" formula states that: 

\[ \operatorname{det}(X_{:T}^TX_{:T}) = \dfrac{\operatorname{det}(X_{:T}^TX_{:T}) \operatorname{d}(X_{:,i_{k+1}}, \operatorname{span}\{X_{:,i_j} \text{ , } j \leq k\})^2}{(k+1)^2}  \]

Where $\operatorname{d}$ is the distance. Taking the sum of this when $T$ ranges over the parts of size $k+1$, and noticing that $\sum \limits_{i = 1}^d \operatorname{d}(X_{:,i}, \operatorname{span}\{X_{:,j} \text{ , } j \in S^2\})^2 =  \lVert X - \Pi_{S,k}X \rVert_{\operatorname{Fr}}^2$ gives:

\[ \sum \limits_{T, \lvert T \rvert = k +1 } \operatorname{det}(X_{:T}^TX_{:T}) = \dfrac{1}{(k+1)^3} \sum \limits_{S, \lvert S \rvert = k } \operatorname{det}(X_{:S}^TX_{:S} ) \lVert X - \Pi_{S,k}X \rVert_{\operatorname{Fr}}^2  \]

To conclude, the authors use a technical lemma on symmetric polynomials:

\[\sum \limits_{S, \lvert S \rvert = k } \operatorname{det}(X_{:S}^TX_{:S} ) = \dfrac{1}{(k!)^2} \sum \limits_{1 \leq t_1 < \dots < t_k \leq r } \sigma_{t_1}^2 \dots \sigma_{t_k}^2\]

This technical lemma allows them to write:

\[\sum \limits_{1 \leq t_1 < \dots < t_{k+1} \leq r } \sigma_{t_1}^2 \dots \sigma_{t_{k+1}}^2 \leq \sum \limits_{1 \leq t_1 < \dots < t_k \leq r } \sigma_{t_1}^2 \dots \sigma_{t_k}^2 \sum\limits_{j=k+1}^r \sigma_j^2 = (k!)^2 \sum \limits_{S, \lvert S \rvert = k } \operatorname{det}(X_{:S}^TX_{:S} ) \lVert X - \Pi_{k}X \rVert_{\operatorname{Fr}}^2  \]

Finally, the theorem follows from applying the technical lemma and then the aforementioned inequality on:

\[\mathbb{E}_{\operatorname{VS}} [\lVert X - \Pi_{S,k}X \rVert_{\operatorname{Fr}}^2] = \dfrac{(k+1)^3}{\sum \limits_{S, \lvert S \rvert = k } \operatorname{det}(X_{:S}^TX_{:S})} \sum \limits_{T, \lvert T \rvert = k +1 } \operatorname{det}X_{:T}^TX_{:T}  \]

Notice that this method samples exactly $k$ columns, unlike the previous i.i.d. schemes discussed and still enjoys strong theoretical properties.

Recently, \cite{derezinski_improved_2021} have worked on these bounds and they proved the following result. Notice that the bound is sharp in the worst case scenario, so they play on the decay of the singular values to achieve a better bound in some cases.

\begin{proposition} [\cite{derezinski_improved_2021} Theorem 1]
    For $s< r$, the stable rank of $X$, $sr_s(X)$ is $\sigma_{s+1}^{-2} \sum \limits_{j = s+1}^N \sigma_j ^2$. 

    Let $0< \epsilon \leq 1/2$, $s<r$ and $t_s = s + sr_s(X)$, then for $ s + \frac{7}{\epsilon^2} \ln(\frac{1}{\epsilon^2}) \leq k \leq t_s -1 $, the following holds:

    \[\mathbb{E}_{\operatorname{VS}} [\lVert X - \Pi_{S,k}X \rVert_{\operatorname{Fr}}^2] \leq (1 +2\epsilon)^2 \Phi_s(k) \lVert X - \Pi_{k}X \rVert_{\operatorname{Fr}}^2 \]

    where: $\Phi_s(k) = (1+ \frac{s}{k-s}) ÷ \sqrt{1 + \frac{2(k-s)}{t_s - k }}$

\end{proposition}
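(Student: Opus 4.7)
The plan is to start from the exact identity
\[\mathbb{E}_{\operatorname{VS}}\bigl[\|X - \Pi_{S,k}X\|_{\operatorname{Fr}}^2\bigr] = (k+1)\,\frac{e_{k+1}(\lambda)}{e_k(\lambda)}, \qquad \lambda = (\sigma_1^2,\ldots,\sigma_r^2),\]
where $e_j$ is the $j$-th elementary symmetric polynomial. This identity is precisely what emerges at the end of the proof sketch of Proposition~\ref{prop:VS}, after combining the base--times--height factorization of $\det(X_{:T}^T X_{:T})$ with the Cauchy--Binet evaluation of $\sum_{|S|=k}\det(X_{:S}^T X_{:S})$. The task thus reduces to bounding $(k+1)\,e_{k+1}(\lambda)/e_k(\lambda)$ from above by $(1+2\epsilon)^2 \Phi_s(k) \sum_{j>k}\lambda_j$, so that the stable rank $sr_s(X)$ (which does not appear at all in Proposition~\ref{prop:VS}) is genuinely exploited to beat the crude $(k+1)$ prefactor.

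The first structural step would be the head--tail decomposition $\lambda = (\mu,\nu)$ with $\mu = (\lambda_1,\ldots,\lambda_s)$ and $\nu = (\lambda_{s+1},\ldots,\lambda_r)$, together with the convolution identity
\[e_m(\lambda) = \sum_{j=\max(0,m-s)}^{\min(m,\,r-s)} e_{m-j}(\mu)\,e_j(\nu).\]
Writing the ratio in numerator/denominator form expresses $(k+1)\,e_{k+1}(\lambda)/e_k(\lambda)$ as a weighted combination of pure-tail ratios $e_{j+1}(\nu)/e_j(\nu)$, and the key analytic tool for bounding these is Newton's inequality, i.e.\ the log-concavity of the normalized sequence $e_j(\nu)/\binom{r-s}{j}$. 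Combined with the mass identity $\sum \nu = (t_s - s)\,\lambda_{s+1}$ coming from the definition of $sr_s(X)$, Newton's inequality applied only to the $r-s$ tail eigenvalues produces a ratio bound of the shape
\[\frac{e_{j+1}(\nu)}{e_j(\nu)} \;\lesssim\; \frac{r-s-j}{j+1}\cdot\frac{\sum\nu}{r-s}\cdot\frac{1}{\sqrt{1 + c_j}},\]
in which the quadratic-in-$j$ correction $c_j$ is what, after summing the convolution, gives rise to the $1/\sqrt{1 + 2(k-s)/(t_s - k)}$ factor. The head contribution $1 + s/(k-s) = k/(k-s)$ arises in parallel, from the observation that in the convolution for $e_{k+1}(\lambda)$ the dominant terms are those where all $s$ head eigenvalues of $\mu$ are retained ($j \approx k+1-s$), and the ratio between ``all head $+$ $(k+1-s)$ tail'' and ``all head $+$ $(k-s)$ tail'' contributions produces precisely $k/(k-s)$.

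The main obstacle, and the reason for the numerical condition $k-s \geq 7\epsilon^{-2}\log(\epsilon^{-2})$, is the concentration step needed to convert the pointwise Newton bounds above into the sharp global ratio $(1+2\epsilon)^2$. For fixed $\mu$, the weights $w_j := e_{k-j}(\mu)\,e_j(\nu)$ define a hypergeometric-type probability distribution on the admissible range of $j$; one must show that under the quantitative hypothesis on $k-s$, at least a $(1-\epsilon)^2$ fraction of the total weight is concentrated inside an $O(\epsilon(k-s))$ window around the dominant index $j^* \approx k-s$. This requires a Chernoff/Bennett-type estimate on the log-ratios $\log(w_{j\pm1}/w_j)$, which in turn reduces to controlling $e_{j\pm1}(\nu)/e_j(\nu)$ via Newton's inequality, so Newton's inequality is effectively used twice in the argument at different scales. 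A last verification, more bookkeeping than conceptual, is that the bound remains finite up to the boundary case $k = t_s - 1$ and degenerates smoothly to Proposition~\ref{prop:VS} there, which is what forces the restriction $k \leq t_s - 1$ in the statement.
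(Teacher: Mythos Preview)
The paper does not prove this proposition: it is stated purely as a citation of \cite{derezinski_improved_2021}, Theorem~1, and the surrounding text only comments on its interpretation (``This upper bound is rather complicated to interpret in full generality. In their paper \cite{derezinski_improved_2021}, the authors show that it actually improves \ref{prop:VS} in cases where the singular values decay at a polynomial or an exponential rate.''). There is therefore no proof in the paper to compare your proposal against.

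For what it is worth, your sketch is in the right spirit for how such a result is obtained: the exact identity $\mathbb{E}_{\operatorname{VS}}[\|X-\Pi_{S,k}X\|_{\operatorname{Fr}}^2]=(k+1)\,e_{k+1}(\lambda)/e_k(\lambda)$ is indeed the starting point, and the analysis of $e_{k+1}/e_k$ via a head--tail split, the convolution formula for elementary symmetric polynomials, and log-concavity (Newton/Maclaurin-type inequalities) is the natural toolbox. But since the present paper offers no argument of its own, a detailed comparison is not possible here; you would need to consult \cite{derezinski_improved_2021} directly to check the specifics of the concentration step and the way the $\Phi_s(k)$ factor is assembled.
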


This upper bound is rather complicated to interpret in full generality. In their paper \cite{derezinski_improved_2021}, the authors show that it actually improves \ref{prop:VS} in cases where the singular values decay at a polynomial or an exponential rate.

\subsubsection{DPP sampling} 

The last algorithm we wanted to mention is taken from \cite{belhadji_determinantal_2020}. Their idea is to force the alignment between the subspace generated by the $k$ selected coordinates and the subspace generated by simultaneously force each principle angles to be the smallest possible.
In this regard, consider $S \subset [d]$, $\lvert S \rvert = k$ and denote $\mathcal{P}_S = \operatorname{Span} \{e_j | j \in S \}$. Let $\mathbb{\theta} = (\theta_i)_{1 \leq i \leq k}$ be the principal angle between $ \mathcal{P}_k$ and $\mathcal{P}_S$. Then it can be shown that: $\prod \limits_{i =1}^k \cos(\theta_i)^2 = \det(V_{S,[k]} V_{S,[k]}^T)$.
Hence sampling from an elementary DPP (of rank $k$) with kernel $K = V_k V_k^T$ emerges naturally from the geometry of the problem. This is what will be called DPP sampling in the following. 

Notice that this method actually generalizes leverage score importance sampling since the inclusion probabilities are given by: $\mathbb{P}_{\operatorname{DPP}}(i \in \mathcal{S}) = l_i^k $.
Belhadji et al. also show that there is a clear connection to volume sampling. Indeed, using the mixture property of DPP, one can notice that the highest mixture in the decomposition of $X^T X$ corresponds to the elementary DPP of kernel $V_k V_k^T$.

Without any further assumptions, they show the following: 

\begin{proposition} [\cite{belhadji_determinantal_2020} Proposition 16]
    Let $\mathcal{S}\sim \operatorname{DPP}(K)$, then:
\begin{equation}
     \mathbb{E}_{\operatorname{DPP}} [ \lVert X - \Pi_{S,k}X \rVert_{\operatorname{Fr}}^2] \leq k(d+1-k) \lVert X - \Pi_kX \rVert_{\operatorname{Fr}}^2
\end{equation}
\end{proposition}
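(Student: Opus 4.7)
The plan is to upper-bound $\lVert X - \Pi_{S,k}X\rVert_{\operatorname{Fr}}^2$ by a specific suboptimal rank-$k$ reconstruction $X_{:S}B$ for each $S$ in the support of the DPP, and then evaluate the expectation of the resulting bound via Cramer's rule together with the Cauchy--Binet identity. Assume $r := \operatorname{rk}(X) > k$ (else both sides of the proposition vanish). Since $\mathbb{P}(\mathcal{S}=S) = \det(V_{S,[k]})^2$ on $\{|S|=k\}$, the $k \times k$ matrix $V_{S,[k]}$ is invertible almost surely, and I pick the candidate
\[ B := V_{S,[k]}^{-\top}\, V_k^{\top} \in \mathbb{R}^{k\times d}\,, \]
so that $\lVert X - \Pi_{S,k} X\rVert_{\operatorname{Fr}}^2 \le \lVert X - X_{:S} B\rVert_{\operatorname{Fr}}^2$ by definition of $\Pi_{S,k}$.

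Splitting the SVD as $X = U_k \Sigma_k V_k^\top + U_{-k} \Sigma_{-k} V_{-k}^\top$ and computing $X_{:S} B$ with the analogous block decomposition of $V_{S,:}$, the top block cancels exactly, giving
\[ X - X_{:S} B = U_{-k}\,\Sigma_{-k}\,\bigl(V_{-k}^\top - M_S V_k^\top\bigr),\qquad M_S := V_{S,-k}^\top V_{S,[k]}^{-\top}\,. \]
Expanding $\lVert\cdot\rVert_{\operatorname{Fr}}^2$ and using $V_k^\top V_{-k}=0$ together with the orthonormality of the columns of $U$ and $V$ yields the clean identity
\[ \lVert X - X_{:S} B\rVert_{\operatorname{Fr}}^2 = \lVert \Sigma_{-k}\rVert_{\operatorname{Fr}}^2 + \lVert \Sigma_{-k} M_S\rVert_{\operatorname{Fr}}^2 = \lVert X - \Pi_k X\rVert_{\operatorname{Fr}}^2 + \lVert \Sigma_{-k} M_S\rVert_{\operatorname{Fr}}^2\,. \]
Bounding crudely $\lVert\Sigma_{-k} M_S\rVert_{\operatorname{Fr}}^2 \leq \sigma_{k+1}^2 \lVert M_S\rVert_{\operatorname{Fr}}^2 \leq \lVert X - \Pi_k X\rVert_{\operatorname{Fr}}^2 \lVert M_S\rVert_{\operatorname{Fr}}^2$, I arrive at the pointwise inequality
\[ \lVert X - \Pi_{S,k} X\rVert_{\operatorname{Fr}}^2 \leq \lVert X - \Pi_k X\rVert_{\operatorname{Fr}}^2 \bigl(1 + \lVert M_S\rVert_{\operatorname{Fr}}^2\bigr)\,. \]

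The key step is evaluating $\mathbb{E}\lVert M_S\rVert_{\operatorname{Fr}}^2$ under the DPP. Cramer's rule applied entrywise to the linear system $V_{S,[k]}x = (V_{S,-k})_{:,i}$ yields, for $(i,j) \in [r-k] \times [k]$,
\[(M_S)_{ij}^2 \cdot \det(V_{S,[k]})^2 = \det\bigl(V_{S, T(j,i)}\bigr)^2,\quad T(j,i) := ([k]\setminus\{j\}) \cup \{k+i\}\,,\]
where the signs from Cramer's formula and the column reordering are absorbed by squaring. Since $\mathbb{P}(\mathcal{S}=S) = \det(V_{S,[k]})^2$ and $V_{:,T(j,i)}$ has orthonormal columns (as a submatrix of $V$), Cauchy--Binet delivers
\[ \mathbb{E}\bigl[(M_S)_{ij}^2\bigr] = \sum_{\lvert S\rvert=k} \det(V_{S,T(j,i)})^2 = \det\bigl(V_{:,T(j,i)}^\top V_{:,T(j,i)}\bigr) = 1\,. \]
Summing over $(i,j)\in[r-k]\times[k]$ gives $\mathbb{E}\lVert M_S\rVert_{\operatorname{Fr}}^2 = k(r-k)$, so that
\[ \mathbb{E}\lVert X-\Pi_{S,k}X\rVert_{\operatorname{Fr}}^2 \leq (1 + k(r-k))\,\lVert X-\Pi_k X\rVert_{\operatorname{Fr}}^2 \leq k(d-k+1)\,\lVert X-\Pi_k X\rVert_{\operatorname{Fr}}^2\,, \]
using $r\le d$ and $k\ge 1$ so that $1 + k(r-k) \le 1 + k(d-k) \le k + k(d-k)$. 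The main obstacle in the argument is identifying the Cramer-plus-Cauchy--Binet telescoping, which turns a sum of squared entries of the composite object $V_{S,-k}^\top V_{S,[k]}^{-\top}$ weighted by $\det(V_{S,[k]})^2$ into a sum of squared $k$-minors of a matrix with orthonormal columns; once this identity is in place the rest is elementary linear algebra. As a side remark, the only source of slack in the above is the crude bound $\lVert\Sigma_{-k}M_S\rVert_{\operatorname{Fr}}^2 \leq \sigma_{k+1}^2 \lVert M_S\rVert_{\operatorname{Fr}}^2$: applying the same Cramer/Cauchy--Binet computation entrywise to $\Sigma_{-k} M_S$ instead gives $\mathbb{E}\lVert \Sigma_{-k} M_S\rVert_{\operatorname{Fr}}^2 = k\lVert X-\Pi_k X\rVert_{\operatorname{Fr}}^2$, sharpening the final bound to $(k+1)\lVert X-\Pi_k X\rVert_{\operatorname{Fr}}^2$ and matching the volume-sampling guarantee.
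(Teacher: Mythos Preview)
The survey does not give its own proof of this proposition; it merely states the result and cites \cite{belhadji_determinantal_2020}. The closest thing to a proof in the paper is the sketch for the volume-sampling bound (Proposition~\ref{prop:VS}), which proceeds via the base-times-height formula and elementary symmetric polynomial identities, and the remark that Proposition~17's proof is ``quite similar \dots\ with added technicalities''. So there is nothing to compare against directly beyond that general flavour.

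Your argument is correct and self-contained, and takes a route that is cleaner than the symmetric-polynomial manipulations hinted at in the paper. You use the standard structural upper bound from the column-subset-selection literature: pick the suboptimal $B=V_{S,[k]}^{-\top}V_k^\top$ that kills the top-$k$ SVD block, reducing everything to $\mathbb{E}\lVert M_S\rVert_{\operatorname{Fr}}^2$ with $M_S=V_{S,-k}^\top V_{S,[k]}^{-\top}$. The Cramer-plus-Cauchy--Binet step is exactly the right device here, since the DPP weight $\det(V_{S,[k]})^2$ cancels the Cramer denominator and leaves a sum of squared $k$-minors of a matrix with orthonormal columns.

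One small correction: your display $\mathbb{E}[(M_S)_{ij}^2]=1$ should strictly be $\le 1$. The Cauchy--Binet sum $\sum_{|S|=k}\det(V_{S,T(j,i)})^2=1$ runs over \emph{all} $S$, including those with $\det V_{S,[k]}=0$ (outside the DPP support) for which $\det V_{S,T(j,i)}$ need not vanish. The inequality points the right way, so the conclusion is unaffected.

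Your closing side remark is worth emphasising: bypassing the crude step $\sigma_{k+1}^2\le\lVert X-\Pi_kX\rVert_{\operatorname{Fr}}^2$ and summing $\sigma_{k+i}^2\,\mathbb{E}[(M_S)_{ij}^2]\le\sigma_{k+i}^2$ directly gives $\mathbb{E}\lVert\Sigma_{-k}M_S\rVert_{\operatorname{Fr}}^2\le k\lVert X-\Pi_kX\rVert_{\operatorname{Fr}}^2$ and hence the $(k+1)$ bound, matching volume sampling. This is sharper than the stated proposition and arguably more transparent than the symmetric-polynomial route.
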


Note that, in full generality, this bound is worse than the bound in the Volume sampling case. The main advantage of this method is that it has more flexibility.

Indeed, let the sparsity be the number $p$ of non zero $k$ leverage scores, and the flatness $\beta$ quantify the decay of the singular values of $X$ as: $\beta = \sigma_{k+1}^2 (\dfrac{1}{d-k}\sum \limits_{i = k+1} ^d \sigma_i^2)^{-1}$

Then the following holds:

\begin{proposition} [\cite{belhadji_determinantal_2020} Proposition 17]
    Let $\mathcal{S} \sim \operatorname{DPP}(K)$, then :
\begin{equation}
\mathbb{E}_{\operatorname{DPP}} [ \lVert X - \Pi_{S,k}X \rVert_{\operatorname{Fr}}^2] \leq (1+\beta \dfrac{p-k}{d-k}k) \lVert X - \Pi_kX \rVert_{\operatorname{Fr}}^2 
\end{equation}

\end{proposition}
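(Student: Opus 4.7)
The plan is to exploit the SVD of $X$ together with the projection structure of the DPP via a Cramer's rule/Cauchy-Binet argument. Writing $X=U\Sigma V^{T}$ and splitting $V=[V_{k}\mid V_{-k}]$, $\Sigma=\mathrm{diag}(\Sigma_{k},\Sigma_{-k})$, the first observation is that $\Pi_{S,k}X=X_{:S}X_{:S}^{+}X$ is the orthogonal projection of the columns of $X$ onto $\mathrm{colspan}(X_{:S})$, so for any test matrix $T\in\mathbb{R}^{|S|\times d}$ one has $\|X-\Pi_{S,k}X\|_{F}\le \|X-X_{:S}T\|_{F}$. I would take $T=V_{k,S}^{-T}V_{k}^{T}$, which is well-defined DPP-almost surely since the projection DPP of rank $k$ puts zero mass on sets for which $V_{k,S}$ is singular. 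Using the SVD, the error decomposes as $X-X_{:S}V_{k,S}^{-T}V_{k}^{T}=U_{-k}\Sigma_{-k}(V_{-k}^{T}-V_{-k,S}^{T}V_{k,S}^{-T}V_{k}^{T})$, and since $V_{-k}^{T}V_{k}=0$ the Frobenius cross term vanishes, giving the clean identity
\begin{equation*}
\|X-X_{:S}V_{k,S}^{-T}V_{k}^{T}\|_{F}^{2}=\|X-\Pi_{k}X\|_{F}^{2}+\sum_{i>k}\sigma_{i}^{2}\|V_{k,S}^{-1}v_{i,S}\|^{2}.
\end{equation*}

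The crux of the argument is then to compute $\mathbb{E}\bigl[\|V_{k,S}^{-1}v_{i,S}\|^{2}\bigr]$ for each $i>k$. Here I would invoke Cramer's rule: each coordinate $(V_{k,S}^{-1}v_{i,S})_{j}$ equals $\det(M^{(j)}_{S})/\det(V_{k,S})$ where $M^{(j)}$ is the matrix obtained by replacing the $j$-th column of $V_{k}$ by $v_{i}$, so that $\|V_{k,S}^{-1}v_{i,S}\|^{2}\det(V_{k,S}^{T}V_{k,S})=\sum_{j=1}^{k}\det(M^{(j)}_{S})^{2}$. Because $\mathcal{S}$ is the projection DPP with kernel $V_{k}V_{k}^{T}$, the weight $\mathbb{P}(\mathcal{S}=S)=\det(V_{k,S}^{T}V_{k,S})$ absorbs the denominator, and I am left with $\sum_{j=1}^{k}\sum_{S\subset \mathrm{supp}(V_{k}),|S|=k}\det(M^{(j)}_{S})^{2}$. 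By Cauchy-Binet applied to $M^{(j)}$ restricted to the $p$ rows where $V_{k}$ is nonzero — noting that those restricted columns are orthonormal and $v_{i}$ (restricted) is orthogonal to them, since $v_{i}\perp v_{\ell}$ for $\ell\le k$ and $V_{k}$ is supported on those $p$ rows — each inner sum equals $\|\tilde v_{i}\|^{2}=:\omega_{i}$, so $\mathbb{E}\bigl[\|V_{k,S}^{-1}v_{i,S}\|^{2}\bigr]=k\omega_{i}$.

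Finally I would aggregate. Using $\sigma_{i}^{2}\le\sigma_{k+1}^{2}$ for $i>k$ and interchanging sums,
\begin{equation*}
\sum_{i>k}\sigma_{i}^{2}\omega_{i}\le\sigma_{k+1}^{2}\sum_{j\in\mathrm{supp}(V_{k})}\Bigl((VV^{T})_{jj}-\ell_{j}^{k}\Bigr)\le\sigma_{k+1}^{2}(p-k),
\end{equation*}
where the last inequality uses $(VV^{T})_{jj}\le 1$ and $\sum_{j}\ell_{j}^{k}=k$. By the definition of $\beta$, $\sigma_{k+1}^{2}=\beta(d-k)^{-1}\|X-\Pi_{k}X\|_{F}^{2}$, so the extra term is at most $k\beta\frac{p-k}{d-k}\|X-\Pi_{k}X\|_{F}^{2}$, which yields the claimed bound after combining with the decomposition in the first paragraph.

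The principal obstacle is locating the right test matrix $T$ in the opening step: one needs a choice for which the cross term vanishes on the nose and the residual has the form $\|V_{k,S}^{-1}v_{i,S}\|^{2}$, so that Cramer's rule converts the expectation into a Cauchy-Binet sum that exposes precisely the sparsity $p$ and flatness $\beta$. Once $T=V_{k,S}^{-T}V_{k}^{T}$ is in hand the linear algebra is essentially forced, but without it the parameters $p$ and $\beta$ appear nowhere in the analysis.
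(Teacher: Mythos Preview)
Your proof is correct and is precisely the argument carried out in the original reference \cite{belhadji_determinantal_2020}; the review here gives only a high-level sketch (``manipulating the symmetric polynomials'' in the spirit of the volume-sampling proof), and your Cauchy--Binet step applied to the restricted matrix $M^{(j)}_{P}$ is exactly that manipulation, with the restriction to $P=\mathrm{supp}(V_{k})$ being what makes the sparsity $p$ appear. One cosmetic point: when you pass from $\sum_{S:\mathbb{P}(S)>0}$ to $\sum_{S\subset P,\,|S|=k}$ you may be adding back sets with $\det V_{k,S}=0$ but $\det M^{(j)}_{S}\neq 0$, so strictly speaking $\mathbb{E}\bigl[\|V_{k,S}^{-1}v_{i,S}\|^{2}\bigr]\le k\omega_{i}$ rather than equality --- this is harmless, since only the upper bound is needed.
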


The proof scheme for this result is quite similar to the one of \ref{prop:VS} with added technicalities. More precisely, notice that the inequality which was used there is rather loose, and a better understanding of the decay of the singular values of $X$ would give a better bound. Besides, this projection DPP has a strong connection to the principal angles as mentioned previously, and in a certain way, sampling from this DPP makes us find the columns that are best aligned with the principal eigenspaces. Using these facts, and correctly manipulating the symmetric polynomials lead to this result. We refer the reader to \cite{belhadji_determinantal_2020} for more details.

Notice that $ \beta \in [1,d-k]$, with the extreme cases being all singular values being equal after $k+1$ (which leads to $\beta =1$) and $\sigma_{k+1}$ being the last non zero singular value (which leads to $\beta = d-k$). 
In the regime of small $\beta$ and $p$, this bound is actually better than \ref{prop:VS} since $p\leq d$. Likewise, here to achieve a good rank $k$ approximation, only $k$ columns are sampled.

It is worth mentioning that it is possible to reduce the sparsity by looking at some "effective sparsity", and conditioning on sampling indices for which the leverage score contributes significantly to the whole. All of these results can also be extended when working with the spectral norm instead of the Frobenius one, see \cite{belhadji_determinantal_2020}.

\section{New results on DPPs for neural network pruning}
\label{sec: NN}
\subsection{Background}

 Pruning neural networks consists in reducing the number of neurons in the network. Neural network pruning is a central question when it comes to optimizing neural network performances. Mariet and Sra developed a method, DIVNET in \cite{mariet_diversity_2017}, for  pruning neural network using DPP which has proven to be empirically efficient and is independent of the choice of other parameters such as the activation function or the number of hidden layers. We will describe the method for feed forward neural networks.

Their method can be defined in the following way. For a neural network with $M$ hidden layers trained on a database $\mathcal{T}$. For $1 \leq l \leq M$ assume the $l$-th layer has $n_l$ neurons. Let $1\leq i \leq n_l$ and $1 \leq j \leq n_{l+1}  $, the trainable weight from neuron $i$ in layer $l$ to neuron $j$ in layer $l+1$ is $W_{ij}^l$.  Define the activation vectors recursively :

\[
\begin{cases}
    v_1^0 = \mathcal{T} \\
    \forall 0 \leq l \leq M-1 \text{ , } \forall 1 \leq j \leq n_{l+1} \text{ , } v_j^{l+1} = \sigma ( \sum \limits_{i=1}^{n_l} W_{ij}^l v_i^l)
\end{cases}
\]

Assume  we want to prune the neural network on the $l$-th layer. Define then the kernel with respect to which the DPP is going to be sampled : 

\[ L'_l = ( \exp(-\beta \lVert v_i^l-v_j^l \rVert ^2))_{1\leq  i,j \leq n_l} \]

Here $\beta$ is a hyperparameter, which empirically is chosen to be of the magnitude $10/\lvert \mathcal{T} \rvert$. For better numerical stability, add a regularization term $\epsilon I$ so that the real kernel used is :

$$L_l = L_l' + \epsilon I $$ 

From here fix $k_n$ the number of neurons to keep in the layer and sample $\mathcal{S}$ a $k_n$-DPP with kernel $L_l$. For each $1 \leq j \leq n_{l+1}$, to improve the performances and minimize the information lost in the process, they introduce a reweighting phase which takes the following shape : 

\begin{equation} \label{eqn:reweighting}
\Tilde{\mathbf{W}_j^l} = \underset{\Tilde{\mathbf{W}_j^l} \in \mathbb{R}^{k}}{\operatorname{argmin}} \lVert \sum \limits_{i =1}^{n_l} W_{ij}^l v_i^l - \sum \limits_{i \in \mathcal{S}} \Tilde{W}_{ij}^l v_i^l \rVert 
\end{equation}

This method has proven to be particularly effective empirically while not requiring a retraining phase. Correctly understanding its theoretical implications is still an open question.

Looking at it through the lens of statistical mechanics offers some insight. This framework was already introduced in \cite{acharyya_statistical_2021}.
We aim at offering a comprehensive overview of the methods coming from statistical mechanics applied to the study of neural network pruning. We also extend and formalize the results from \cite{acharyya_statistical_2021} to more general settings. 

The analysis presented in this section was originally developed in the masters thesis \cite{petrovic2023pruning} of co-author V. Petrovic .

\subsection{The student/teacher framework}

The student/teacher framework is a common setting in statistical mechanical analysis of neural networks. The full theory behind this model is fully explained and detailed in \cite{engel_statistical_2001}. The model that we will use in this section comes from \cite{goldt_dynamics_2020} in the noiseless regime.

The framework is the following, assume we are given a data set $(X_\mu)_{1 \leq \mu \leq p}$ of i.i.d. Gaussian variables which take values in $\mathbb{R}^d$, of mean $0$ and variance matrix $I_d$. Let us get two neural networks respectively called the teacher and the student. The teacher network acts as a black box, to which we have access to the activation function and the number of layers and neurons per layer. It gets as inputs the data set $(X_\mu)_{1 \leq \mu \leq p}$ and assigns them outputs $(Y_\mu)_{1 \leq \mu \leq p}$, where if $1 \leq \mu \leq p$, $Y_\mu = \phi^*(X_\mu)$, where $\phi^*$ is a deterministic function. The student network is then another neural network, on which we have full control of the architecture, that we train to reproduce the results of the teacher network. In other words, the student network performs supervised learning with inputs $(X_\mu, Y_\mu)_{1 \leq \mu \leq p}$. Finally, let $g$ be the activation function, which is chosen to be the sigmoid function $\operatorname{erf}$. 
For the rest of the section, we will use $i,j,k$ to describe nodes from the student network and $n,m$ to describe nodes from the teacher network.

Here we will give special attention to the two layers student/teacher framework. In this case, both the student and the teacher framework have one hidden layer. Assume then, that the teacher network has $M$ hidden units and the student one has $K$ hidden units with $M \leq K$. The output of the teacher network is then:

\begin{equation*}
Y_\mu = \sum \limits_{m=1}^M v_m^* g(\dfrac{(w_m^{*})^T X_\mu}{\sqrt{N}}) = \phi^{*}(X_\mu) 
\end{equation*}  
The output of the student network is: 

\begin{equation*}
 \phi(X_\mu) = \sum \limits_{i=1}^K v_k g(\dfrac{w_k^T X_\mu}{\sqrt{N}})
\end{equation*}

The student network is trained to learn the teacher network on the empirical quadratic loss, defined as follows:

$$ L(\phi) = \dfrac{1}{2}\sum \limits_{\mu = 1}^p (\sum \limits_{i=1}^K v_k g(\dfrac{w_k^T X_\mu}{\sqrt{N}})-Y_\mu)^2$$

Define also the generalization error as: 

$$\epsilon_g (\phi) = \dfrac{1}{2}\braket{(\phi(X)- \phi^*(X))^2}$$

where $\braket{\cdot}$ is the average over the input distribution.

Finally introduce the following macro-parameters:

\[ Q_{ik} = \dfrac{w_i^Tw_k}{N} \text{ , }  T_{mn} = \dfrac{(w_m^*)^Tw_n^*}{N} \text{ , }  R_{in} =\dfrac{(w_n^*)^Tw_i}{N}  \]

The whole analysis will revolve around the dynamics established by Goldt et al. in \cite{goldt_dynamics_2020} .

\subsection{Dynamics of the macroparameters in online learning} \label{sec:dynamics}

Goldt et al establish very general results on the evolution of the paramaters in online learning. In the following, we will focus on a specific case mentioned by the authors of \cite{goldt_dynamics_2019} and further assume that:

\begin{itemize} \label{list:hypothesis}
    \item (H1) The dataset is large enough so that we visit each sample at most once during training.
    \item (H2) We are in the regime $N \rightarrow \infty$
\end{itemize}

In this context, $v$ satisfies:

\begin{equation} \label{eqn:evov}
\forall 1 \leq i \leq K \text{, } \dfrac{dv_i}{dt} = \eta_v[\sum \limits_{m=1}^M v_m^* I_2(i,m) - \sum \limits_{j =1}^K v_j I_2(i,j)] 
\end{equation}

where:

$$\forall 1 \leq i,k \leq K \text{, } I_2(i,k) = \braket{g(\lambda_i) g(\lambda_k)} =  \dfrac{1}{\pi} \arcsin{\dfrac{Q_{ik}}{\sqrt{1+Q_{ii}} \sqrt{1+Q_{kk}}}}$$

$$\forall 1 \leq i \leq K \text{, } \forall 1 \leq n \leq M \text{, } I_2(i,n) = \braket{g(\lambda_i)g(\rho_n)} = \dfrac{1}{\pi} \arcsin{\dfrac{R_{in}}{\sqrt{1+Q_{ii}} \sqrt{1+T_{nn}}}}$$

Let us also introduce the following quantity that plays a role in our coming analysis:

$$\forall 1 \leq m,n \leq M \text{, } I_2(m,n) = \braket{g(\rho_m)g(\rho_n)} = \dfrac{1}{\pi} \arcsin{\dfrac{T_{mn}}{\sqrt{1+T_{nn}} \sqrt{1+T_{mm}}}}$$

The authors of \cite{goldt_dynamics_2020} establish the following theorem:

\begin{theorem}
    With all the previous assumptions, the generalisation error $\epsilon_{g}$ satisfies: 

    \begin{equation} \label{eqn:generror}
 \epsilon_g(\phi) = \sum \limits_{i,k \in [K]} v_i v_k I_2(i,k) + \sum \limits_{n,m \in [M]} v_n^* v_m^* I_2(n,m) - 2 \sum \limits_{i \in [K], n \in [M]} v_i v_m^* I_2(i,n)
    \end{equation}
\end{theorem}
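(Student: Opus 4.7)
The plan is to expand the square in the definition $\epsilon_g(\phi) = \tfrac{1}{2}\langle (\phi(X)-\phi^*(X))^2\rangle$ by linearity of expectation, reduce everything to two-dimensional Gaussian integrals, and then apply the classical arcsine identity for the error function. Concretely, first I would write
\[
\epsilon_g(\phi) = \tfrac{1}{2}\langle \phi(X)^2\rangle + \tfrac{1}{2}\langle \phi^*(X)^2\rangle - \langle \phi(X)\phi^*(X)\rangle
\]
and substitute the definitions of $\phi$ and $\phi^*$, which after swapping sum and expectation produce
\[
\langle \phi(X)^2\rangle = \sum_{i,k} v_i v_k\,\langle g(\lambda_i)g(\lambda_k)\rangle,\qquad \langle \phi^*(X)^2\rangle = \sum_{n,m} v_n^* v_m^*\,\langle g(\rho_n)g(\rho_m)\rangle,
\]
and analogously the cross term, where $\lambda_i := w_i^\top X/\sqrt N$ and $\rho_n := (w_n^*)^\top X/\sqrt N$. (The factor of $2$ in the last term of the stated identity corresponds to absorbing the global $\tfrac{1}{2}$ into the generalisation error, a convention used in \cite{goldt_dynamics_2020}; the derivation is otherwise unchanged.)

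Next I would use that $X\sim \mathcal N(0,I_N)$ implies the joint law of $(\lambda_1,\ldots,\lambda_K,\rho_1,\ldots,\rho_M)$ is a centered Gaussian with covariance structure read off from the macroparameters introduced above, namely $\mathrm{Cov}(\lambda_i,\lambda_k) = Q_{ik}$, $\mathrm{Cov}(\rho_n,\rho_m) = T_{nm}$, and $\mathrm{Cov}(\lambda_i,\rho_n) = R_{in}$. Every term appearing in the expansion is thus an expectation of the form $\mathbb{E}[g(U)g(V)]$ for a bivariate centered Gaussian $(U,V)$ with a known covariance matrix.

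The core analytical step is then the arcsine identity for $g=\mathrm{erf}$ (or the appropriate sigmoid convention used in \cite{goldt_dynamics_2020}): if $(U,V)$ is a centered Gaussian with variances $\alpha,\beta$ and covariance $\gamma$, then
\[
\mathbb{E}[g(U)g(V)] = \tfrac{1}{\pi}\arcsin\!\Bigl(\tfrac{\gamma}{\sqrt{(1+\alpha)(1+\beta)}}\Bigr).
\]
I would derive this by a standard Gaussian-interpolation argument: differentiate the left-hand side with respect to the correlation parameter, apply Stein's lemma (or equivalently Mehler's expansion) to replace the derivatives by $g'\otimes g'$, evaluate the resulting Gaussian integral explicitly using the integral representation $\mathrm{erf}(x)=\tfrac{2}{\sqrt\pi}\int_0^x e^{-t^2}\,dt$, and integrate back from $\gamma=0$. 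Plugging this identity into the three sums with $(\alpha,\beta,\gamma)$ chosen as $(Q_{ii},Q_{kk},Q_{ik})$, $(T_{nn},T_{mm},T_{nm})$, and $(Q_{ii},T_{nn},R_{in})$ respectively recovers the formulas for $I_2(i,k)$, $I_2(n,m)$, $I_2(i,n)$ stated in the excerpt, and assembling the three pieces yields the claimed expression for $\epsilon_g(\phi)$.

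The main obstacle is the explicit arcsine identity: the expansion step is purely algebraic, but verifying the identity requires a careful Gaussian calculation — the differentiate/Stein approach is the cleanest, while brute-force evaluation of the bivariate Gaussian integral over $\mathrm{erf}\times\mathrm{erf}$ is more cumbersome but ultimately reduces to the same change-of-variables. Everything else (collecting terms, identifying covariances with $Q$, $R$, $T$) is routine provided one keeps track of the $1/\sqrt{N}$ normalisation in the pre-activations.
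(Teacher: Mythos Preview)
Your proposal is correct and is precisely the standard derivation of this formula. Note, however, that the paper does not actually prove this theorem: it attributes the result to Goldt et al.\ \cite{goldt_dynamics_2020} and uses it as a black box, so there is no in-paper proof to compare against. Your approach---expand the square, identify the pre-activations $(\lambda_i,\rho_n)$ as jointly Gaussian with covariances $Q,R,T$, and invoke the arcsine identity for $\mathrm{erf}$---is exactly the argument given in the cited reference and in the earlier Saad--Solla line of work.

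One small remark on the $\tfrac12$ factor you flagged: the formula as stated in the paper indeed drops the global $\tfrac12$ from the definition $\epsilon_g=\tfrac12\langle(\phi-\phi^*)^2\rangle$, and the paper's subsequent use of the formula (Theorem~\ref{thm:geneerrorreweight}) is consistent with the version without the $\tfrac12$. So your reading of the convention is right; just be aware that the paper uses the stated form verbatim downstream.
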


This formula is the key of our coming analysis of the effect of the pruning and reweighting phase using DPPs.

\subsection{Application to neural network pruning}
We assume that $T_{nm} = \delta_{nm}$ (as \cite{acharyya_statistical_2021}), i.e., that we work in the ideal case where the weights in the teacher network are a subfamily of orthonormal vectors. Assume further that both hypothesis (H1) and (H2) from section \ref{sec:dynamics} hold. 

 In this section, $\mathcal{S}$ describes the subset selected after pruning (i.e. the complementary of the pruned neurons) and $k_n = \lvert \mathcal{S} \rvert$. We also fix $\epsilon_{\operatorname{DPP}}$ (resp. $\hat{\epsilon}_{\operatorname{DPP}}$) the generalization error before reweighting (resp. after reweighting) of the pruned network.

The main goal of this section is to prove the following theorem:

\begin{theorem} \label{thm:nndpp}
    For any other sampling process $S$, with same marginal inclusion probabilities (ie $\mathbb{P}( i \in S) = \mathbb{P}_{\operatorname{DPP}}(i \in S)$ for all $1 \leq i \leq K$ ) the expectation (over the random sample) of the generalization error when the neural network is pruned and reweighted $\hat{\epsilon}_S$ satisfies:

    \[ \mathbb{E} [\hat{\epsilon}_S] \geq \mathbb{E}_{\operatorname{DPP}} [\hat{\epsilon}_{\operatorname{DPP}}] \]

\end{theorem}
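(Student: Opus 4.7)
The approach is to exploit the closed-form generalization error of the student/teacher dynamics to write $\hat\epsilon_{\mathcal S}$ as an explicit functional of $\mathcal S$, and then to reduce the claim to an extremal inequality for DPPs.

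First, I would solve the reweighting step \eqref{eqn:reweighting}. This is a linear least-squares problem whose solution is, in $L^2$ against the input distribution, the orthogonal projection $P_{\mathcal S}\phi$ of the original student output $\phi$ onto $\mathrm{span}\{g(w_i^\top\cdot/\sqrt N):i\in\mathcal S\}$. In the large-$N$ limit (hypothesis H2 of Section~\ref{sec:dynamics}), the Gram matrix of this basis is $A:=[I_2(i,k)]_{i,k\in[K]}$, so the reweighted coefficients are $\tilde v_{\mathcal S} = A_{\mathcal S}^{-1} c_{\mathcal S}$ with $c_i:=\sum_k v_k I_2(i,k)$. Substituting into \eqref{eqn:generror} and using $T_{mn}=\delta_{mn}$, which forces $I_2(m,n) = \tfrac{1}{6}\delta_{mn}$, one obtains
\[
\hat\epsilon_{\mathcal S} \;=\; \tfrac{1}{2}\bigl\|\phi^* - P_{\mathcal S}\phi\bigr\|^2 \;=\; \kappa \;-\; Q(\mathcal S),\qquad Q(\mathcal S):= c_{\mathcal S}^\top A_{\mathcal S}^{-1} b_{\mathcal S} \;-\; \tfrac{1}{2}\, c_{\mathcal S}^\top A_{\mathcal S}^{-1} c_{\mathcal S},
\]
with $b_i:=\sum_m v_m^* I_2(i,m)$ and $\kappa$ depending only on $v^*$. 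The claim therefore reduces to the extremal inequality $\mathbb E_{\mathrm{DPP}}[Q(\mathcal S)] \ge \mathbb E_S[Q(S)]$ over all samplers $S$ sharing the one-point marginals of the DPP.

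Second, I would recast $Q(\mathcal S)$ in a manifestly determinantal form via Schur-complement identities, so that it becomes a ratio of principal subdeterminants of an augmented matrix built from $A$, $b$, and $c$. This is precisely the shape of functional that DPPs are tailored to extremize: the DIVNET $L$-ensemble has density proportional to $\det L_{\mathcal S}$, and in the large-$N$ limit $L_{ij}=\exp(-\beta\|v_i^l-v_j^l\|^2)$ becomes a positive monotone function of the very dissimilarities $2(I_2(i,i)-I_2(i,j))$ that govern $A$. The diverse subsets, oversampled by the DPP through $\det L_{\mathcal S}$, are exactly those that maximize the determinantal ratio underlying $Q$.

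The main obstacle is this final comparison step: $Q(\mathcal S)$ is a genuinely nonlinear function of the indicators $\{\ind[i\in\mathcal S]\}$, so pairwise negative correlation of DPPs is not enough on its own. My plan is to reinterpret $c_{\mathcal S}^\top A_{\mathcal S}^{-1} c_{\mathcal S}$ as $\|P_{\mathcal S}\phi\|^2$ and $c_{\mathcal S}^\top A_{\mathcal S}^{-1} b_{\mathcal S}$ as $\langle P_{\mathcal S}\phi,\phi^*\rangle$, and then leverage the Strongly Rayleigh structure of the DIVNET $L$-ensemble via a swap-monotonicity or stochastic-domination argument, in the spirit of \cite{acharyya_statistical_2021,petrovic2023pruning}, to obtain a pointwise inequality uniform over all marginal-preserving samplers. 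Making this uniform inequality quantitative, in the face of the nonlinearity of $Q$, is where I expect the bulk of the technical work to lie.
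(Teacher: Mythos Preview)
Your proposal overcomplicates the argument and misses the structural simplification that makes the paper's proof elementary. The theorem is stated and proved only in the setting of Proposition~\ref{prop:perfectrec}: after training, the student weights satisfy $w_i = w_m^*$ for all $i$ in a cell $G_m$ of a partition $[K]=\bigsqcup_m G_m$, so that $Q_{ij}=\mathbbm{1}_{i\sim j}$ and $I_2(i,k)=\tfrac{1}{6}\mathbbm{1}_{i\sim k}$. In this degenerate block structure the Gram matrix $A_{\mathcal S}$ you write down is simply $\tfrac{1}{6}$ times a block-ones matrix, and Theorem~\ref{thm:geneerrorreweight} together with the optimal reweighting \eqref{eqn:choicereweight} collapses the error to
\[
\hat\epsilon_{\mathcal S} \;=\; \frac{1}{6}\sum_{m\notin\operatorname{EXP}} (v_m^*)^2,
\qquad \operatorname{EXP}=\{m: G_m\cap \mathcal S\neq\emptyset\}.
\]
Thus $\hat\epsilon_{\mathcal S}$ depends on $\mathcal S$ only through which cells $G_m$ are \emph{hit}, and $\mathbb E[\hat\epsilon_{\mathcal S}] = \tfrac{1}{6}\sum_m (v_m^*)^2\,\mathbb P(G_m\cap\mathcal S=\emptyset)$. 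There is no Schur complement, no nonlinear functional $Q(\mathcal S)$, and no need for Strongly Rayleigh machinery.

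The actual comparison is then a one-line union-bound saturation. For any sampler, $\mathbb P(G_m\cap\mathcal S\neq\emptyset)\le \sum_{i\in G_m}\mathbb P(i\in\mathcal S)$. The key point (Lemma~\ref{lem:DPPsampling}) is that since the DPP kernel $L_{ij}=f(Q_{ij})$ has identical rows/columns for $i,j$ in the same cell $G_m$, the DPP assigns probability zero to any set containing two neurons from the same cell; hence the events $\{i\in\mathcal S\}_{i\in G_m}$ are pairwise disjoint under the DPP and the union bound is an \emph{equality} there. With matched one-point marginals this gives $\mathbb P(m\notin\operatorname{EXP})\ge \mathbb P_{\operatorname{DPP}}(m\notin\operatorname{EXP})$ for every $m$, and the theorem follows. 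Your proposed route via swap-monotonicity or stochastic domination of a determinantal ratio is not only unnecessary in this setting, it would not obviously close even in principle: the gap you flag in your own last paragraph is real, whereas the paper sidesteps it entirely by exploiting the block degeneracy of $Q$.
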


This proposition asserts that in expectation, DPP sampling is the one that performs the best among any other random sampler in this specific framework.

\begin{remark}
    This is a generalization of \cite{acharyya_statistical_2021} in the following sense:
    \begin{itemize}
        \item The weights in the teacher network are not assumed to be all equal, likewise for the student network.
        \item The size of the subcells in the student network are not all equal. The number of neurons in the student network is not assumed to be a multiple of the number of neurons in the teacher network.
    \end{itemize}
\end{remark}

\subsection{Proof of the theorem}

The proof of this theorem can be decomposed in several parts.

\subsubsection{The student network achieves perfect reconstruction}

We first notice that in a specific context, the student network achieves perfect reconstruction. More precisely:

\begin{proposition} \label{prop:perfectrec}
    Let us write $\{1,\dots,K\} = \bigsqcup \limits_{m = 1}^M G_m $ where each $G_m$ is non empty and the $G_m$ are pairwise disjoint. This partition gives an equivalence relation on $[K]$, that we will write as $\sim$. Then, if:

    \begin{equation} \label{eqn:hyponw}
    \forall i \in \{1,\dots,K\} \text{, } \lVert w_i \rVert = \sqrt{N}
    \end{equation}

    \begin{equation} \label{eqn:hyponR}
    R_{in} = \begin{cases}
    1 & \text{ if } i \in G_n\\
    0 & \text{ otherwise } 
    \end{cases}
    \end{equation}

    we have $\phi \equiv \phi^*$ after training.
\end{proposition}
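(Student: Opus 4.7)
First I would exploit the equality case of Cauchy-Schwarz to upgrade the hypotheses to an identification of weights. From (\ref{eqn:hyponR}), for $i \in G_n$ one has $(w_n^*)^\top w_i = N$, while (\ref{eqn:hyponw}) gives $\|w_i\| = \|w_n^*\| = \sqrt{N}$, so $w_i = w_n^*$. Consequently $Q_{ii} = 1$ for all $i$, $Q_{ik} = \mathbf{1}_{i \sim k}$, while $T_{nm} = \delta_{nm}$ is assumed.

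Next I would plug these values into the integrals $I_2$ recalled in Section~\ref{sec:dynamics}. Each one collapses to either $\tfrac{1}{\pi}\arcsin(1/2) = 1/6$ or to $0$:
\begin{equation*}
I_2(i,k) = \tfrac{1}{6}\, \mathbf{1}_{i \sim k}, \qquad I_2(i,n) = \tfrac{1}{6}\, \mathbf{1}_{i \in G_n}, \qquad I_2(n,m) = \tfrac{1}{6}\,\delta_{nm}.
\end{equation*}
Substituting into the online-learning ODE (\ref{eqn:evov}), the evolution of any $v_i$ with $i \in G_n$ reduces to
\begin{equation*}
\frac{dv_i}{dt} \;=\; \frac{\eta_v}{6}\bigl(v_n^* - S_n\bigr), \qquad S_n := \sum_{j \in G_n} v_j,
\end{equation*}
so the right-hand side depends on $i$ only through its equivalence class.

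I would then sum this equation over $i \in G_n$ to decouple the groups and obtain the scalar linear ODE $\dot S_n = \tfrac{\eta_v |G_n|}{6}(v_n^* - S_n)$, whose unique equilibrium $S_n = v_n^*$ is globally exponentially stable. Since $w_i = w_n^*$ for $i \in G_n$, the student output may be rewritten as $\phi(X) = \sum_{n=1}^M S_n(t)\, g\bigl((w_n^*)^\top X / \sqrt{N}\bigr)$, so $S_n(t) \to v_n^*$ yields $\phi \equiv \phi^*$ at the end of training. As a cross-check, inserting the same reductions into the generalisation-error formula (\ref{eqn:generror}) telescopes it to $\epsilon_g = \tfrac{1}{6}\sum_n (S_n - v_n^*)^2$, which vanishes at the same equilibrium.

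The only delicate point is conceptual rather than computational: one must read the hypotheses (\ref{eqn:hyponw})-(\ref{eqn:hyponR}) as describing a snapshot of the training trajectory in which the hidden-layer weights $w_i$ are frozen at their ``ideal'' configuration while the output weights $v_i$ continue to evolve under (\ref{eqn:evov}). Once this is granted, the argument reduces to the exponential convergence of $M$ independent one-dimensional linear ODEs governing the group sums $S_n$, and no further difficulty arises.
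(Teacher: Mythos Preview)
Your proposal is correct and follows essentially the same route as the paper: Cauchy--Schwarz equality to identify $w_i = w_n^*$ for $i\in G_n$, reduction of the $I_2$'s to $0$ or $1/6$, the relation $\sum_{i\in G_n} v_i = v_n^*$ from the ODE, and then the direct verification that $\phi = \phi^*$. The only difference is that the paper simply \emph{assumes} training reaches a stable minimizer (so $dv_i/dt = 0$) and reads off the equilibrium condition, whereas you go slightly further and establish exponential convergence of the group sums $S_n$ to $v_n^*$ via the decoupled linear ODEs.
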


\begin{proof}

    First, notice that  $T_{nn} = 1$  means that $ \lVert w_n^* \rVert = \sqrt{N}$. Hence, if $i\in G_n$:

    $$ w_i^T w_n^* = \lVert w_i \rVert \lVert w_n^* \rVert = N R_{in}$$

    So we are in the equality case of the Cauchy-Schwartz inequality, $w_i$ and $w_n^*$ are colinear, of same norm and their scalar product is positive. This means that $w_i = w_n^*$. Observe that this implies straightforwardly, since $T_{nm} = \delta_{nm}$:

    \begin{equation} \label{eqn:hyponQ}
    Q_{ik} = \begin{cases}
    1 & \text{ if } i \sim k\\
    0 & \text{ otherwise } 
    \end{cases}
    \end{equation}

    Now, notice that, if $\mathbbm{1}$ is the indicator function, under equation \ref{eqn:hyponQ} and equation \ref{eqn:hyponR}: $I_2(i,k) = \frac{1}{6} \mathbbm{1}_{i \sim k}$ and  $I_2(i,n) = \frac{1}{6} \mathbbm{1}_{i \in G_n} $, where $I_2$ was introduced in the section \ref{sec:dynamics}.

    After training, the algorithm is assumed to reach a stable minimizer, which means $\frac{dv_i}{dt} = 0$ for all $i \in [K]$. Using equation \ref{eqn:evov}, we get that, after training, if $i \in G_n$:

    \begin{equation*} 
        \dfrac{dv_i}{dt} = 0 = \dfrac{\eta_v}{6}[v_n^* - \sum \limits_{i \in G_n} v_i]
    \end{equation*}

    Hence: 

    \begin{equation}\label{eqn:stablev}
        v_n^* = \sum \limits_{i \in G_n} v_i    
    \end{equation}

    Now, if $x\in \mathbb{R}^d$:

    \begin{eqnarray*}
     \phi(x) &= &\sum \limits_{i = 1}^K v_i g\Big (\frac{w_i^Tx}{\sqrt{N}} \Big ) =\sum \limits_{i = 1}^K v_i g \Big (\frac{w_i^Tx}{\sqrt{N}} \Big ) = \sum \limits_{m = 1}^M \sum \limits_{i \in G_m} v_i g \Big (\frac{w_i^Tx}{\sqrt{N}} \Big ) \\ 
     &= &\sum \limits_{m = 1}^M g \Big (\frac{(w_m^*)^Tx}{\sqrt{N}} \Big )\sum \limits_{i \in G_m} v_i  = \sum \limits_{m = 1}^M v_m^* g \Big (\frac{(w_m^*)^Tx}{\sqrt{N}} \Big ) = \phi^*(x)
    \end{eqnarray*}    
    \end{proof}

This establishes that the learning process achieves perfect reconstruction, when it reaches the fixed point corresponding to equation \ref{eqn:hyponQ} and equation \ref{eqn:hyponR}. Notice here that no assumptions were made on the size of the elements of the partition.

\subsubsection{Computing the generalization error}

The aim of this section is to find the generalization error, when we work in the framework of Proposition \ref{prop:perfectrec}. Here, we will not put much emphasis on the sampling process. Assume that some sampling process was used to produce $\mathcal{S} \subset [K]$, and that we apply some reweighting process to the second layer. Once again, assume the framework from section \ref{sec:dynamics} holds. Define then $q = \lvert \{ m | G_m \cap \mathcal{S} \neq \emptyset \} \rvert$, and, assume that $ \operatorname{EXP} = \{ m | G_m \cap \mathcal{S} \neq \emptyset \}$  are the explained neurons ($\operatorname{EXP}$ stands for explained).

\begin{theorem}\label{thm:geneerrorreweight}
    Let $(\tilde{v_i})_{i\in \mathcal{S}}$ be the reweighted weights for some reweighting procedure we apply to the second layer after pruning. Then:

    $$\hat{\epsilon}_g = \dfrac{1}{6} \sum \limits_{m \in \operatorname{EXP}} (v_m^* - \sum \limits_{i \in G_m \cap S} \Tilde{v}_i)^2 + \dfrac{1}{6} \sum \limits_{m \notin \operatorname{EXP}} (v_m^*)^2 $$
\end{theorem}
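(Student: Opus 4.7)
The plan is to apply the generalization error formula \eqref{eqn:generror} directly to the pruned and reweighted student network, and then use the simplifications of the kernels $I_2$ afforded by the fixed point identified in Proposition \ref{prop:perfectrec}. Since pruning consists of zeroing out the outputs of all hidden units $i\notin\mathcal{S}$ and replacing $v_i$ by $\tilde v_i$ for $i\in\mathcal{S}$, the pruned network has the same first-layer weights $w_i$ (for $i\in\mathcal{S}$) as the trained student, so all the structural identities from Proposition \ref{prop:perfectrec} are preserved at those indices.

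First I would specialize \eqref{eqn:generror} to this setting, with the sums over students restricted to $\mathcal{S}$ and weights $\tilde v_i$, giving
\[
\hat\epsilon_g = \sum_{i,k\in\mathcal S}\tilde v_i\tilde v_k I_2(i,k) + \sum_{n,m\in[M]} v_n^*v_m^* I_2(n,m) - 2\sum_{i\in\mathcal S,\,n\in[M]}\tilde v_i v_n^* I_2(i,n).
\]
Under the hypotheses $T_{nm}=\delta_{nm}$ and $w_i=w_n^*$ for $i\in G_n$ (so $Q_{ik}=\mathbbm 1_{i\sim k}$ and $R_{in}=\mathbbm 1_{i\in G_n}$), the three $I_2$ kernels collapse to indicator functions scaled by $\tfrac{1}{\pi}\arcsin(\tfrac12)=\tfrac16$; in particular $I_2(i,k)=\tfrac16\mathbbm 1_{i\sim k}$, $I_2(i,n)=\tfrac16\mathbbm 1_{i\in G_n}$, and $I_2(n,m)=\tfrac16\delta_{nm}$, as already recorded in the proof of Proposition \ref{prop:perfectrec}.

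Plugging these three identities in and partitioning the student sums according to the blocks $G_m$, I regroup as
\[
\hat\epsilon_g = \tfrac16 \sum_{m=1}^M\Big(\sum_{i\in G_m\cap\mathcal S}\tilde v_i\Big)^{\!2} + \tfrac16 \sum_{m=1}^M (v_m^*)^2 - \tfrac{2}{6}\sum_{m=1}^M v_m^*\sum_{i\in G_m\cap\mathcal S}\tilde v_i,
\]
which factors block-by-block into a perfect square
\[
\hat\epsilon_g = \tfrac16 \sum_{m=1}^M \Big( v_m^* - \sum_{i\in G_m\cap\mathcal S}\tilde v_i\Big)^{\!2}.
\]
Finally, I would split the index $m\in[M]$ into explained ($m\in\operatorname{EXP}$) and unexplained ($m\notin\operatorname{EXP}$) teacher neurons; for the latter, $G_m\cap\mathcal S=\emptyset$ and the inner sum vanishes, leaving $(v_m^*)^2$. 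This yields exactly the claimed decomposition of $\hat\epsilon_g$.

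The argument is essentially bookkeeping once the $I_2$'s have been simplified. The only nontrivial step is the verification that the simplified kernels in Proposition \ref{prop:perfectrec} carry over verbatim to the pruned network, which holds because the first-layer weights $w_i$ (and hence the quantities $Q_{ik}$ and $R_{in}$ restricted to $i,k\in\mathcal S$) are untouched by pruning and the block structure of the partition is inherited by $\mathcal S$ through the intersections $G_m\cap\mathcal S$.
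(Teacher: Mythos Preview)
Your proof is correct and follows essentially the same approach as the paper: you apply \eqref{eqn:generror} to the pruned network, use the simplified $I_2$ values from Proposition \ref{prop:perfectrec}, and regroup block-by-block into perfect squares. If anything, your intermediate step summing over all $m\in[M]$ before splitting into $\operatorname{EXP}$ and its complement is slightly cleaner than the paper's presentation.
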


\begin{proof}
    It is a consequence of equation \ref{eqn:generror} applied to the pruned and reweighted network. Indeed, in this context, we have that $I_2$ is not changed (i.e. $I_2(i,k) = \frac{1}{6} \mathbbm{1}_{i \sim k}$, $I_2(i,n) = \frac{1}{6} \mathbbm{1}_{i \in G_n}$ and $I_2(n,m) = \frac{1}{6} \mathbbm{1}_{n=m}$) and:
    \begin{equation*}
     \hat{\epsilon}_g = \sum \limits_{i , k \in \mathcal{S}} \Tilde{v_i}\Tilde{v_k} I_2(i,k) + \sum \limits_{m , n \in [M]} v_n^*v_m^* I_2(n,m) +\sum \limits_{i \in \mathcal{S} \text{, } n \in [M]} \Tilde{v_i} v_n^* I_2(i,n)
     \end{equation*}

    Hence, thanks to our previous remark:

    \begin{eqnarray*}
     \hat{\epsilon}_g 
     &=  &\dfrac{1}{6}\Big [\sum \limits_{m \in \operatorname{ EXP }} (\sum \limits_{i \in G_m \cap \mathcal{S}}\Tilde{v}_i)^2 + \sum \limits_{m \notin \operatorname{ EXP 
 }} (v_m^*)^2 - 2 \sum \limits_{m \in \operatorname{ EXP }} v_m^* \sum \limits_{i \in G_m \cap \mathcal{S}} \Tilde{v}_i \Big ] \\
     &=  &\dfrac{1}{6} \Big [\sum \limits_{m \in \operatorname{ EXP }}(v_m^* - \sum \limits_{i \in G_m \cap \mathcal{S}}\Tilde{v}_i)^2 + \sum \limits_{m \notin \operatorname{ EXP }} (v_m^*)^2 \Big ]
     \end{eqnarray*}
\end{proof}

\subsubsection{DPP sampling}

In this section, the goal is to present and motivate the choice to sample  according to a DPP in this setting. From the framework we described, it is clear that, when pruned, the remaining network performs better when the remaining neurons are sampled from distinct groups $G_m$. This is why DPP sampling is here particularly interesting, since it enables us to sample $\mathcal{S}$ by maximizing the diversity within it.

To maximize the clarity of our statement, let us introduce the following quantities: for a sample $\mathcal{S} \subset [K]$, define $l_m = \lvert G_m \cap \mathcal{S} \rvert$, for $1 \leq m \leq M$. Notice that $\sum \limits _{m \in \operatorname{EXP}} l_m = k_n$, so that, since $l_m \geq 1$, $k_n \geq q$, the following property holds:

\begin{lemma} \label{lem:DPPsampling}
    Let $L = (L_{ij})_{i,j \in [K]}$ be the sampling kernel of a DPP $\mathcal{S}$. Assume $L_{ij} = f(Q_{ij})$ for some function well chosen $f$ ($L$ has to be semi-definite positive). Then $l_m \in \{0,1\}$ a.s. for every $m \in [M]$. 
\end{lemma}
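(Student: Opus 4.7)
\emph{Proof plan for Lemma \ref{lem:DPPsampling}.} The key observation is that, under the hypotheses of Proposition \ref{prop:perfectrec}, equation \eqref{eqn:hyponQ} gives $Q_{ik} = \mathbf{1}_{i \sim k}$. Consequently, $L_{ij} = f(Q_{ij})$ takes only two values: $L_{ij} = f(1)$ when $i \sim j$ and $L_{ij} = f(0)$ otherwise. In particular, for any two distinct indices $i,j$ lying in the same group $G_m$, the equivalence classes $\{k : k \sim i\}$ and $\{k : k \sim j\}$ coincide, so for every $k \in [K]$,
\[
L_{ik} = f(\mathbf{1}_{i \sim k}) = f(\mathbf{1}_{j \sim k}) = L_{jk}.
\]
That is, the $i$-th and $j$-th rows (and columns) of $L$ are identical.

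The plan is then to show that $\mathbb{P}(\{i,j\} \subset \mathcal{S}) = 0$ for every such pair. The $2 \times 2$ principal submatrix $L_{\{i,j\}}$ has all four entries equal to $f(1)$, so $\det L_{\{i,j\}} = 0$. If $L$ is read as the marginal kernel of $\mathcal{S}$, this directly gives $\mathbb{P}(\{i,j\} \subset \mathcal{S}) = \det L_{\{i,j\}} = 0$ by Definition \ref{def:DPPdef}. If instead $L$ is read as the L-ensemble kernel, then $\mathbb{P}(\mathcal{S} = A) \propto \det L_A$, and the equality of rows $i$ and $j$ forces $\det L_A = 0$ for every $A \supset \{i,j\}$, yielding the same conclusion after summing. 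Either convention thus gives vanishing pair-inclusion probability, as expected from the general DPP repulsion principle applied to a configuration of ``repeated features''.

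A union bound over the $\binom{|G_m|}{2}$ pairs in $G_m$ then concludes the argument:
\[
\mathbb{P}(l_m \ge 2) \;\le\; \sum_{\{i,j\} \subset G_m,\; i \neq j} \mathbb{P}(\{i,j\} \subset \mathcal{S}) \;=\; 0,
\]
so $l_m \in \{0,1\}$ almost surely for each $m \in [M]$. The expression ``well chosen'' in the statement merely reflects the requirement that $f$ produce a bona fide DPP kernel (symmetric positive semidefinite, with the appropriate spectral constraint so that $L$ defines a valid L-ensemble or marginal kernel); a concrete example is $f(q) = \exp(-2\beta N(1 - q))$, which recovers the DIVNET kernel of \cite{mariet_diversity_2017} under the identification $\|w_i - w_k\|^2/N = 2(1-Q_{ik})$. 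The argument contains no real obstacle once the block structure of $L$ is identified; the determinantal exclusion of duplicate rows does all the work, and this is precisely where the negative-dependence property of DPPs becomes decisive for the pruning task.
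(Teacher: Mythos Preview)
Your proposal is correct and follows essentially the same route as the paper: identify that for $i,j$ in the same group $G_m$ the rows $L_{i:}$ and $L_{j:}$ coincide (since $Q_{ik}=Q_{jk}$ for all $k$), conclude that $\det L_A=0$ for any $A$ containing both $i$ and $j$, and hence that the DPP never selects two indices from the same group. The paper works only with the $L$-ensemble interpretation and concludes directly from $\mathbb{P}(\mathcal{S}=A)\propto\det L_A=0$; your additional coverage of the marginal-kernel reading and the explicit union bound are harmless elaborations of the same idea.
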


\begin{proof}
    Notice that if $i,j \in G_m$ for some $m \in [M]$, then the vectors $Q_{i:} = (Q_{ik})_{k \in [K]}$ and and $Q_{j:} = (Q_{jk})_{k \in [K]}$ are equal. Hence, if $A \subset [K]$ is such that $i,j \in A$, the matrix $L_A$ has two identical columns so $\det(L_A) = 0$.

    Now:

    $$ \mathbb{P}( \mathcal{S} = A)  \propto \det (L_A) = 0$$

    This proves that the sampling process only outputs samples with elements from distinct groups $G_m$, hence $l_m \in \{0,1\}$ a.s.
\end{proof}

\begin{remark}
    Equivalently, if $i,j \in G_m$, $i \neq j$, then $\mathbb{P}(\{i \in \mathcal{S} \} \cap \{j \in \mathcal{S}\}) = 0$.
\end{remark}
We will give two examples of such kernels:

\begin{itemize}
    \item the first example is by taking $L_{ij} = Q_{ij}$. Notice that, since $Q_{ij} = \frac{1}{N} w_i^Tw_j$, $Q$ is a Gram-matrix so it is semi definite positive, and is an admissible choice for $L$.
    \item the second example is the following: $L_{ij} = \exp(-\dfrac{\beta}{N} \lVert w_i -w_j \rVert^2) $ for some $\beta>0$. Notice that $L_{ij} = \exp(-2\beta (1+Q_{ij})) $ so the previous lemma applies.
\end{itemize}

\subsubsection{Analysis in the pruned and reweighted case}

It is time to come to analyse the generalization error, when we prune and reweight the network. Notice that theorem \ref{thm:geneerrorreweight} gives a natural way to reweight our neural network with respect to the generalization error. Indeed, to minimize the true generalization error, it is sufficient to choose the $\Tilde{v}_i $ in the following way:

\begin{equation}\label{eqn:choicereweight}
\forall m \in \operatorname{EXP} \text{, } \sum \limits_{i \in G_m \cap \mathcal{S}} \Tilde{v}_i = v_m^{*} = \sum \limits_{i \in G_m}v_i
\end{equation}

The second part of this equation was added to remind that this reweighting process can be performed even though we do not have prior access to $v_m^{*}$.

Finally the proof of theorem \ref{thm:nndpp} is as follows:

\begin{proof}
    This a consequence of theorem \ref{thm:geneerrorreweight} by plugging equation \ref{eqn:choicereweight} into the formula. Indeed, with this choice of weights, we get that:
    $$ \hat{\epsilon}_S = \sum \limits_{m \notin \operatorname{EXP}} (v_m^*)^2$$

    Hence :

    $$ \mathbb{E}[\hat{\epsilon}_S] = \sum \limits_{m=1}^M (v_m^*)^2 \mathbb{P}(m \notin \operatorname{EXP})$$

    Look at $\mathbb{P}(m \notin \operatorname{EXP})$ for $m \in [M]$:

    $$\mathbb{P}(m \notin \operatorname{EXP}) = 1 - \mathbb{P}(m \in \operatorname{EXP}) = 1 - \mathbb{P}(\exists i \in G_m \cap S) = 1 - \mathbb{P}(\bigcup \limits_{i \in G_m} \{ i \in S \}) $$

    Using the union bound:

    $$\mathbb{P}(m \notin \operatorname{EXP}) \geq 1 - \sum \limits_{i \in G_m} \mathbb{P}(i \in \mathcal{S})$$
    
    Now, thanks to the previous section, we know that, if $i$ and $j$ lie in the same $G_m$, for DPP sampling, $\mathbb{P}_{\operatorname{DPP}}(\{i \in S\} \cap \{j \in S \}) = 0 $. Hence:

    $$ \mathbb{P}_{\operatorname{DPP}} (\bigcup \limits_{i \in G_m} \{ i \in S \})) = \sum \limits_{i \in G_m} \mathbb{P}_{\operatorname{DPP}}(i \in \mathcal{S})$$

    This proves that for all $m \in [M]$:

    $$ \mathbb{P}(m \notin \operatorname{EXP}) \geq \mathbb{P}_{\operatorname{DPP}}(m \notin \operatorname{EXP})$$

\end{proof}

\section{A quantum sampler for DPPs}
\label{sec: quantum}
At the end of the 50s, Hanbury-Brown and Twiss showed evidence that, when a detector of single photons is placed in the electric field generated by a source of thermal light, the detection times tend to lump together; this is the so-called \emph{bunching effect}.
In what physicists call the \emph{semi-classical} picture, photon detection times are modeled by an inhomogeneous Poisson point process of intensity function given by the electric field, and the assumption of thermal light corresponds to taking this intensity function to be a realization of a Gaussian process with smooth samples. 
In modern parlance, this hierarchical model makes the detection times a Cox point process. 
The smoothness of the samples from the Gaussian process explains why the detection times appear to be lumped together.

While her thesis initially was concerned with finding a stochastic description of this bunching effect, without the \emph{semiclassical} description of the electric field as a function-valued stochastic process, Odile Macchi then turned to model a similar situation where photons are replaced by fermions, like electrons. 
There, she found out that detection events are negatively correlated, evidencing an \emph{anti-bunching} effect that was later experimentally observed.
Macchi formalized DPPs precisely to model that detection process \cite{Mac72}; see also \cite{BFBDHRRSW22Sub} for a modern review. 
Mathematically, DPPs arise from the combined action of Born's rule --the quantum physics axiom that lab measurements are draws from a random variable parametrized by two operators on a Hilbert space-- and the canonical anti-commutation relations satisfied by the building blocks of operators describing fermions.

In this section, we walk in Macchi's footprints and illustrate how DPPs are related to quantum fermionic models in a simple setting, showing in passing that we can design an algorithm to sample a given DPP on a quantum computer.
This section paraphrases \cite{BaFaFe24}, to which we refer the reader for more details and important references, such as \cite{JSKSB18, KePr22}.

\subsection{The formalism of quantum physics}
A minimalistic physical model is $(i)$ a correspondence between a set of physical notions and a set of mathematical objects, as well as $(ii)$ a law to connect these mathematical objects to a measurement.
In quantum physics, the mathematical objects to describe an experiment are a Hilbert space $(\mathbb{H},\braket{\cdot}{\cdot})$ and a collection of self-adjoint operators from $\mathbb{H}$ to $\mathbb{H}$ called \emph{observables}.
A state of the physical system, labeled as some string of characters $\psi$, is described by the projection onto the line $\mathbb{C}\ket{\psi}$ in $\mathbb{H}$. 
Here $\ket{\psi}$ is a vector of $\mathbb{H}$ of unit norm that we associate to the label $\psi$. 
Writing vectors in the form $\ket{\psi}$ (a \emph{ket}) and linear forms as $\bra{\psi}:\ket{\phi} \mapsto \braket{\psi}{\phi}$ (a \emph{bra}) is a useful notational trick known as the \emph{bra-ket} notation.
It is compatible with our denoting the inner product as $\braket{\cdot}{\cdot}$.
In particular, the projection onto $\mathbb{C}\ket{\psi}\subset\mathbb{H}$ simply writes $\ketbra{\psi}$, because
$$
\ketbra{\psi} \ket{\phi} = \braket{\psi}{\phi} \ket{\psi}.
$$

For simplicity, we henceforth restrict to the $2^N$-dimensional example $\mathbb{H} = (\mathbb{C}^2)^{\otimes N}$, for some $N\geq 0$. 
This is the Hilbert space that is used in quantum computing to describe a system of $N$ qubits, i.e. $N$ particles that can each assume two states.
In this finite-dimensional setting, an observable $A$ is simply a self-adjoint matrix, so that it diagonalizes in an orthonormal basis $A = \sum_{i=1}^{M} \lambda_i u_i u_i^*$, where we let $M=2^N$.
Observables are meant to correspond to physical properties that one can measure.
The final ingredient of the model is \emph{Born's rule}, which associates a state-observable pair to a random measurement.
More precisely, Born's rule stipulates that measuring the physical property described by $A$ in the state described by $\ketbra{\psi}$ is equivalent to drawing from the random variable $X$ supported on the spectrum $\Lambda = \{\lambda_1, \dots, \lambda_{M}\} \subset \mathbb{R}$ of $A$, and described by 
\begin{equation}
    \label{e:born}
    \mathbb{E} h(X) = \mathrm{Tr}\left(\ketbra{\psi} \sum_{i=1}^M h(\lambda_i)u_i u_i^*\right ) = \sum_{i=1}^M h(\lambda_i) \vert\braket{\psi}{u_i}\vert^2,
\end{equation}
where $h$ is any real-valued function on $\Lambda$.
Furthermore, independently preparing $\ket{\psi}$ and measuring $A$ several times in a row yields statistically independent draws of the random variable $X$.

\subsection{The canonical anticommutation relations lead to DPPs}

In the rest of this section, we describe a state $\ket{\psi}$ and an observable $A$ on $\mathbb{H} = (\mathbb{C}^2)^{\otimes N}$ that can be efficiently prepared and measured on a quantum computer, and such that $X$ in \eqref{e:born} is a user-defined projection DPP on $\{1, \dots, N\}$.
Generic DPPs on finite ground sets with Hermitian kernels, and more generally Pfaffian point processes, can also be obtained \cite{BaFaFe24}, but we focus here on projection DPPs for simplicity.

Let $c_1, \dots, c_N$ be any operators on $\mathbb{H}$ that satisfy the canonical anticommutation relations
\begin{equation}
        \label{e:CAR}
        \lbrace c_i,c_j \rbrace = \lbrace c^*_i,c^*_j \rbrace  = 0
        \qquad \text{and} \qquad
        \lbrace c_i,c^*_j \rbrace = \delta_{ij} \mathbb{I},
\end{equation}
where $\{u,v\} \triangleq uv+vu$ is the anti-commutator of operators $u$ and $v$.   
Assuming the existence of a set of operators satisfying \eqref{e:CAR} for the moment, one can show \cite{Nie05} the existence of a simultaneous eigenvector to all $c_i^*c_i$, which we denote by $\ket{\emptyset}$, always with eigenvalue zero. Moreover, the vectors 
$$
    \ket{\mathbf{n}} = \prod_{i=1}^N (c_i^*)^{n_i}\ket{\emptyset}, \quad \mathbf{n}\in\{0,1\}^N, 
$$
form a basis of $\mathbb{H}$, such that 
$$
    c_i^* c_i \ket{\mathbf{n}} = n_i\ket{\mathbf{n}}.
$$
This basis is called the \emph{Fock basis} corresponding to $c_1, \dots, c_N$. 

Coming back to the existence of operators satisfying \eqref{e:CAR}, an example is given by the Jordan-Wigner operators $a_1, \dots, a_N$; see e.g. Definition 3.3 in \cite{BaFaFe24}. 
The Jordan-Wigner operators come with additional tractability, in the sense that any product of $a_i$s applied to $\ket{\emptyset}$ is easy to prepare on a quantum computer, and any observable of the form $a_i^*a_i$ is easy to measure in such a state. 

We are ready to define the state-observable pair that will correspond to a given projection DPP. Let $ V = ((v_{ij}))$ be an $N\times N$ unitary matrix, and consider 
$$
    b_k = \sum_{j=1}^N v_{kj}^* a_j, \quad 1\leq k \leq N.
$$
One can show that $b_1, \dots, b_k$ also satisfy \eqref{e:CAR}. 
Now, consider the state $\ketbra{\psi}$, where
$$
    \ket{\psi} = b_1^* \dots b_r^* \ket{\emptyset},
$$
and consider the observable $A = \prod_{i=1}^N a_i^* a_i$.
Since all $a_i^*a_i$ are simultaneously diagonalizable in the Fock basis, they commute. 
In particular, $A$ is self-adjoint, making it a \emph{bona fide} observable. 
Decomposing the trace appearing in Born's rule \eqref{e:born} in the Fock basis, and carefully applying the canonical anti-commutation relations, one obtains a random variable $X$ that is a projection DPP with kernel $K = V_{[r],:}^*V_{[r],:}$.
The determinantal character of the distribution precisely comes from this manipulation of anticommuting operators in Born's rule, which physicists have come to call \emph{Wick's theorem}.
We refer to Appendix A.7 of \cite{BaFaFe24} for a detailed computation, with the same notations as we used here.

\begin{figure}[h]
    \centering
    \includegraphics[width=1\linewidth]{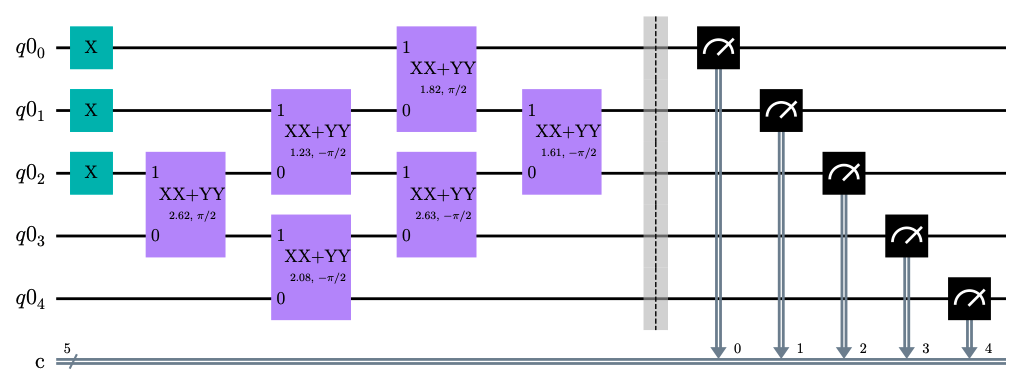}
    \caption{A circuit sampling a DPP with projection kernel of rank $r = 3$, with $N = 5$ items. On the left-hand side, Pauli $X$ gates are used to create fermionic modes on the three first qubits. Note also the parallel QR Givens rotations on
neighbouring qubits indicated by parametrized $XX + YY$ gates. On the right-hand side, measurements of occupation numbers are denoted by black squares (c.f. \cite{BaFaFe24}).}
    \label{fig:quantum}
\end{figure}

In addition to a more extensive introduction to the quantum computing tools used here, the paper \cite{BaFaFe24} also contains a derivation of state-observable pairs corresponding to generic DPPs with Hermitian kernels and Pfaffian point processes, as well as experimental demonstrations on IBM quantum computers.

\section{Acknowledgements}
SG was supported in part by the MOE grants R-146-000-250-133, R146-000-312-114, A-8002014-00-00 and MOE-T2EP20121-0013. 
H.S.T. was supported by the National University of
Singapore and MOE of Singapore through the grant A-8003576-00-00.
VP and RB were supported by grants ERC-2019-STG-851866 and ANR20-CHIA-0002.
The authors would like to thank Kin Aun Tan and Clarence Chew from the National University of Singapore for their comments and feedback on the paper.

\newpage

\bibliographystyle{alpha}
\bibliography{references}

\end{document}